\documentclass[twoside,11pt]{article}

\usepackage{blindtext}

\usepackage[abbrvbib, preprint]{jmlr2e}

\usepackage{xcolor}
\usepackage{amsmath}
\usepackage{dsfont}
\usepackage{mathtools}
\usepackage{bm}
\usepackage{booktabs}
\usepackage{graphicx}
\usepackage[createShortEnv]{proof-at-the-end}

\newcommand{\indep}{\perp \!\!\! \perp}
\newcommand{\notindep}{\not\!\perp\!\!\!\perp}

\newcommand{\note}[1]{
}

\usepackage{lastpage}
\jmlrheading{23}{2026}{1-\pageref{LastPage}}{1/21; Revised 5/22}{9/22}{21-0000}{Marco Simnacher, Georg Keilbar, Benjamin König, Christoph Lippert, Sonja Greven}

\ShortHeadings{CI Testing for LLMs}{Simnacher, Keilbar, König, Lippert, Greven}
\firstpageno{1}

\begin{document}

\title{Embedded Conditional Independence Tests for Large Language Model Generated Text with an Application to German Parliament Speeches}

\author{\name Marco Simnacher \email marco.simnacher@hu-berlin.de \\
        \name Georg Keilbar \email georg.keilbar@hu-berlin.de \\
        \name Benjamin König \email b.koenig@hu-berlin.de \\
       \addr Chair of Statistics\\
       Humboldt-Universität zu Berlin\\
       Spandauer Str. 1, 10178 Berlin, Germany
       \AND
        \name Christoph Lippert \email Christoph.Lippert@hpi.de \\
       \addr Chair Digital Health \& Machine Learning\\
       Hasso-Plattner-Institut for Digital Engineering\\
       Prof.-Dr.-Helmert-Straße 2-3, 14482 Potsdam, Germany
        \AND
       \name Sonja Greven \email sonja.greven@hu-berlin.de\\
       \addr Chair of Statistics\\
       Humboldt-Universität zu Berlin\\
       Spandauer Str. 1, 10178 Berlin, Germany}

\editor{My editor}

\maketitle

\begin{abstract}
Conditional independence tests (CITs) test for conditional dependence between two random objects $X$ and $Y$ given a third random object $Z$.
Existing CITs have limited applicability to high-dimensional data, especially multimodal data like text.
However, we show that such tests are of interest for large language model (LLM) outputs, where we test whether an output $X$ generated from a source text $Z$ carries information about an attribute $Y$ beyond $Z$ itself.
For this purpose, we propose embedded CITs (eCITs), which embed $X$ and $Z$ and apply an existing CIT to the resulting representations and to $Y$.
We show that, provided the embedding of $Z$ is sufficient, i.e.\ retains the information $Z$ carries about either $Y$ or the representation of $X$, the null hypothesis transfers from $X$ and $Z$ to their representations, so that a CIT valid for the embedded hypothesis is valid for the original one.
We further give conditions for equivalence of the two hypotheses, and show that sufficiency weakens to mean sufficiency when the embedded test targets conditional mean independence.
We propose a semi-synthetic simulation design to assess type I error (T1E) control and power of the eCITs for given embedding maps on a specific dataset and task, and use it to evaluate them on our application.
Applying the eCITs to German Parliament speeches, we find for all combinations of embedding maps considered that the summaries of two LLMs contain information about the speaker's faction and gender beyond the speech they were generated from. 
\end{abstract}

\begin{keywords}
  conditional independence testing, embedding maps, sufficient representation, large language model
\end{keywords}

\section{Introduction}\label{sec:intro}

A conditional independence test (CIT) is a statistical test for the null hypothesis of conditional independence (CI), $H_0:X\indep Y|Z$, between two random variables, $X$ and $Y$, given a third variable, $Z$, against the alternative hypothesis of conditional dependence, $H_1:X\notindep Y|Z$. $X$ and $Y$ are conditionally dependent, given $Z$, if $X$ (or $Y$) provides additional information about $Y$ (or $X$), given $Z$.
A CIT should control the type I error (T1E), i.e. the probability of falsely rejecting the null hypothesis, while simultaneously achieving high power, i.e. a high probability of correctly rejecting the null hypothesis of CI. 

One setting in which such conditional independencies are of interest is the evaluation of large language models (LLMs). Consider a source text $Z$, e.g. a parliament speech, observed together with a protected attribute $Y$, e.g. the speaker's gender or faction, and an LLM-generated summary $X$ of the source text.
Such summaries can contain biases regarding the protected attribute, framing e.g. the speeches of one faction more critically than those of another, although the content of the speeches does not differ.
Due to the widespread use of LLMs across different tasks, such biases are widely studied \citep{gallegos2024bias}.
However, the complexity of LLMs and their task-specific behavior make general statements about the bias of a specific LLM challenging \citep{anthis2024impossibility}, and existing bias metrics are of limited use in this setting.
Intrinsic metrics based on embeddings or token probabilities often do not correlate with the bias observed in downstream tasks \citep{goldfarb2020intrinsic, delobelle2022measuring}.
Extrinsic metrics are typically bound to benchmark datasets derived from sentence completion or occupation masking, so that their results need not translate to domain-specific tasks \citep{li2023survey}.
LLM-based evaluators are applicable to arbitrary tasks, but can themselves be biased \citep{sun2022bertscore, gao2025llm}, and their scores do not come with statistical guarantees.

We instead define bias as additional information in the LLM output $X$ about the protected attribute $Y$ beyond the information already contained in the source text $Z$. This directly corresponds to the alternative hypothesis of a CIT, so bias can be tested on a given task and dataset with a T1E guarantee.
Since the summary is generated solely from the source text, any such additional information cannot come from the observed input itself, but must have entered through the model, for example via memorization of the source texts through the training data the model was fitted on.
The definition is reference-free in the sense that no gold-standard output or counterfactually generated output has to be constructed, since the source text itself supplies the reference. While we focus on summarization as one instance, the definition applies to any task in which the output is generated from a source text.
In our empirical application with a categorical variable $Y$, the definition is a conditional version of the partisanship measure of \citet{gentzkow2019measuring}, who quantify how strongly congressional speech reveals a speaker's faction. We ask the same of the summaries, treating the speech as the conditioning variable: whether the summary reveals the speaker's faction beyond the speech it was generated from.

Testing this hypothesis, however, is not straightforward when $X$ and $Z$ are texts.
Existing CITs focus on low-dimensional, vector-valued variables \citep{li2020nonparametric}, and both $Z$ and $X$ cause problems through their structure and high-dimensionality. 
The structure of $Z$ affects the T1E of the CITs. Their null hypothesis comparisons require an object constructed from $Z$: the conditional distribution of $X$ or of $Y$ given $Z$ for resampling-based comparisons \citep{candes2018panning,berrett2020conditional}, a kernel or a partition on the support of $Z$ for kernel- and neighborhood-based comparisons \citep{zhang2011kernel, strobl2019approximate}, or a regression on $Z$ for residual-based comparisons \citep{shah2020hardness,lundborg2024projected}.
None of these objects can be constructed on raw texts with the guarantees the tests require, whether because the construction itself is unavailable or because the rate conditions on which validity rests are not. The tests thus become applicable only on a feature representation of $Z$ and condition on that representation rather than on $Z$, as in the application of algorithm-agnostic significance tests to multimodal data \citep{kook2024algorithm}.
The text structure of $X$ affects both whether a CIT can be applied at all and, once it can, its power. \citet{simnacher2026deep} address this for structured, high-dimensional $X$ by embedding it onto a feature representation and applying a CIT to the representation instead. Provided the embedding map is learned as they prescribe, e.g.\ on external data or through sample splitting, this leaves the T1E unaffected in their setting, where $Z$ is not embedded, and the choice of representation affects only the power, which depends on whether the representation retains the conditional association between $X$ and $Y$ given $Z$.
The representation, however, remains high-dimensional, so that only a few CITs are applicable to it. 
Neither line of work addresses the representation of $Z$, on which the T1E of the resulting test depends.

We therefore require CITs that are applicable to texts $X$ and $Z$, control the T1E for the hypothesis stated on the level of the texts, and have power against relevant alternatives.
To meet these requirements, we propose embedded CITs (eCITs), which map both $Z$ and $X$ onto feature representations and apply an existing CIT for conditional dependence of the representation of $X$ and $Y$, given the representation of $Z$, making the test applicable to structured, high-dimensional inputs.
Since the CIT conditions on the representation of $Z$ rather than on the source
text, it is enough that that representation retain the information in $Z$ about
either $Y$ or the representation of $X$. Any information it drops plays the role of an omitted conditioning variable and can inflate the T1E.
We call embedding maps $\omega_Z$ yielding such a representation sufficient and characterize them in Subsection~\ref{subsec:suff_embedding_maps}.
The representation of $X$ needs no sufficiency condition of its own and mainly affects the power, since the test has power only if it retains the conditional association between $X$ and $Y$ given $Z$ in a form the CIT can detect.
The two choices nevertheless interact, since the sufficiency of the embedding of $Z$ is stated relative to the representation of $X$, unlike in \citet{simnacher2026deep}, where the conditioning variable is not embedded and the choice of representation affects the power alone.
We discuss this further in Subsection~\ref{subsec:suff_embedding_maps} and Section~\ref{sec:application}.
The CIT contributes its guarantees at the level of the representations, so whether they transfer to the texts depends on the embedding maps and the dataset at hand.

Our contributions to the existing literature are as follows.
First, we justify CI testing as a bias criterion for LLM outputs and discuss when the null and the alternative can hold for a fixed LLM (Section~\ref{sec:motivation_llm_summaries}).
Second, we propose eCITs, which combine embedding maps for $X$ and $Z$ with existing CITs, making CI testing applicable when both $X$ and $Z$ are texts (Section~\ref{sec:theory}).
Third, we characterize sufficient embedding maps for $Z$, under which the null hypothesis stated on the texts transfers to the null hypothesis stated on the representations, so that a valid CIT for the embedded hypothesis controls the T1E for the textual one. We also give conditions under which the two hypotheses are equivalent, and show that the requirement weakens to mean sufficiency when the embedded test targets conditional mean independence (Subsection~\ref{subsec:suff_embedding_maps}).
Fourth, we adapt the projected covariance measure \citep[PCM;][]{lundborg2024projected}, the conditional mean independence test (CMIT) we apply, to the categorical $Y$ of our application, with a precision-weighted projection direction that we show to be optimal, and record the conditions under which the validity argument of \citet{lundborg2024projected} carries over (Appendix~\ref{app:catpcm}).
Fifth, we show empirically how insufficient embeddings of $Z$ inflate the T1E, and provide a simulation design that can be used to assess the eCITs on a given dataset before applying them (Subsection~\ref{subsec:simulation}).
On speech-summary pairs, eCITs with a sufficient embedding of $Z$ are calibrated at all sample sizes considered for the categorical outcomes, also for high-dimensional representations, while for insufficient ones the inflation ranges from negligible to growing steeply with the sample size, depending on how much of the dropped information the remaining embedding maps recover.
Finally, we apply the eCITs to German Parliament speeches and find, for both summarizers and all combinations of embedding maps considered, that the summaries of two LLMs contain information about the speaker's faction and gender beyond the speech (Subsection~\ref{subsec:real_world_application}).

\section{Conditional (In)dependence for LLM-Generated Summaries}\label{sec:motivation_llm_summaries}

Along with a protected scalar variable $Y$ and a source text $Z$, we generate LLM outputs $X$ from the source text $Z$. We define bias as additional information in the output $X$ about a protected attribute $Y$ given the source text $Z$. Specifically, we test for bias in an output by testing the hypothesis
\begin{align}\label{equ:hypothesis_text}
    H_0: Y\indep X \mid Z \quad \text{vs.} \quad H_1: Y\notindep X\mid Z.
\end{align}
Under the null, the output does not add any information about the protected attribute beyond the information already available in the source text. Under the alternative, the output provides additional information about the protected attribute that was not available in the source.

We distinguish between the text corpus $D_{LLM}$ used to train the LLM and the test dataset $D_{CIT}=(X_i, Y_i,Z_i)_{i=1}^n$ used in the CIT. 
To generate $X_i$, we prompt the LLM with a task constant across all $i$ and a source text $Z_i$, cf. Subsection~\ref{subsec:data} and Appendix~\ref{app:summary_gen_details} for details. 
If the text corpus $D_{LLM}$ and $(Z,Y)\in D_{CIT}$ are independent, we are trivially under the null hypothesis in \eqref{equ:hypothesis_text}, since the output $X$ is a measurable function of $Z$, $D_{LLM}$ and, under stochastic decoding, of sampling noise drawn independently of $(Y,Z)$, and hence conditionally independent of $Y$ given $Z$.
If $Z,Y$ or texts $\Tilde{Z}$ with the same $Y$ are contained in the text corpus, then the null or alternative hypothesis can be true.

A plausible mechanism for such a conditional dependence is memorization. This is also the situation we expect in our application in Section~\ref{sec:application}, where the source texts are German Parliament speeches and the two LLMs used to summarize the speeches are pre-trained on web-scale German corpora that plausibly include Bundestag plenary protocols. If the LLM has seen a source text $Z_i$, or other texts by the same speaker, during training, its weights can carry information about the associated attribute $Y_i$ that $Z_i$ alone does not determine. When generating $X_i$ from $Z_i$, the model may then reproduce phrasing or framing tied to that particular observation rather than to the content of $Z_i$, so that $X$ reveals $Y$ beyond $Z$. The following example isolates this mechanism in a setting in which the null and the alternative can be constructed by design.

\begin{figure}[t!]
    \centering
    \includegraphics[width=0.7\linewidth]{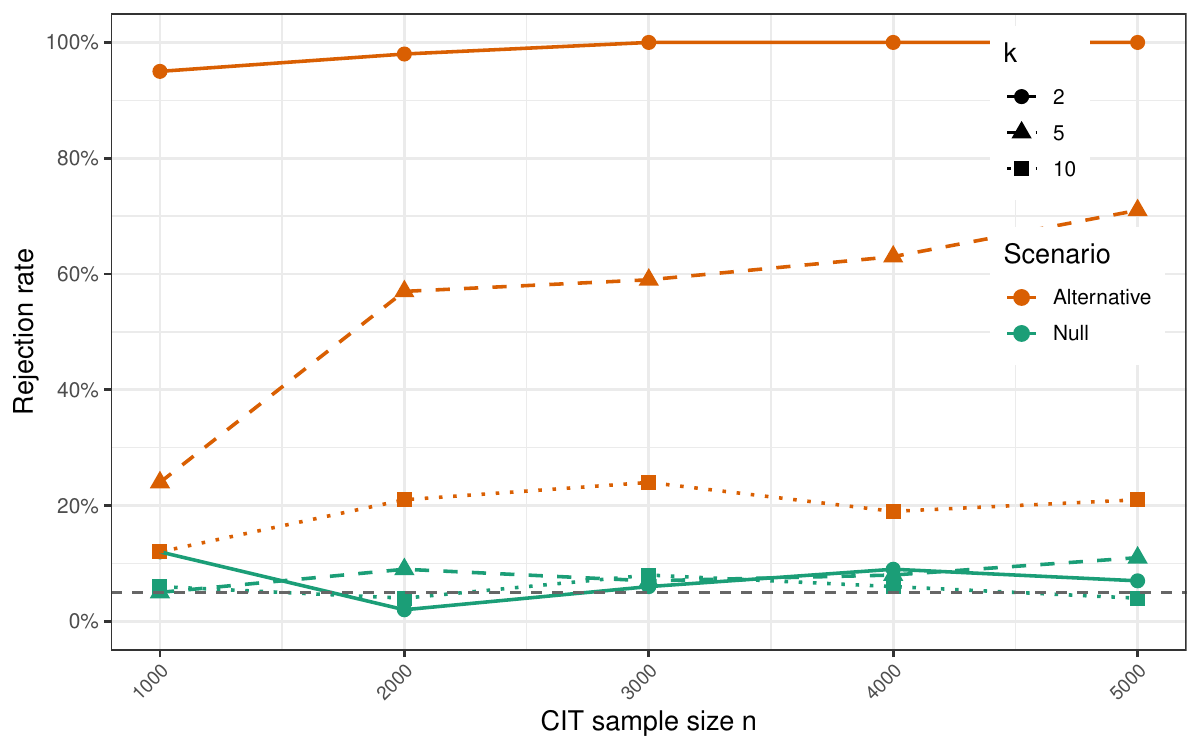}
    \caption{Rejection rates of the PCM testing conditional mean independence of $X$ and $Y$ given $Z$ with default parameters applied to data generated with $Z\sim N(0_{10}, I_{10})$, $Y_i=Z_{i,1}+\varepsilon_i, \varepsilon_i\overset{iid}{\sim} N(0,1)$, and $X_i=f_{\hat\theta}(Z_i)$, where $f_{\hat\theta}$ is a knn model predicting $Y$ from $Z$ with $k=2,5,10$. Under the null, the training dataset $D_{tr}=(Z_i,Y_i)_{i=1}^N$ is independent of the test dataset $D_{test}=(X_i,Y_i,Z_i)_{i=1}^n$, under the alternative $D_{test}\subset D_{tr}$.}
    \label{fig:knn_bias_example}
\end{figure}
\begin{example}\label{ex:knn}
We generate $D_{tr}=(Z_i,Y_i)_{i=1}^N,$ with independent copies $Z_i\sim N(0_{10},I_{10})$ and $Y_i=Z_{i,1}+\varepsilon_i,\  \varepsilon_i\overset{iid}{\sim} N(0,1)$.  
We fit a k-nearest neighbor model $f_\theta$ on  $D_{tr}=(Z_i,Y_i)$, and afterwards fix the model $f_{\hat\theta}$. 
Then, we predict $Y_i$ from $Z_i$ using $ (Z_i,Y_i)_{i=1}^n\subset D_{test}$.  
Under the null hypothesis, $(Z_i,Y_i)_{i=1}^n$ from the test dataset $D_{test}=(X_i,Y_i,Z_i)_{i=1}^n$ is sampled independently of $D_{tr}$,  $X_i=f_{\hat\theta}(Z_i)$ are out-of-sample predictions, and thus, $X\indep Y\mid Z$ holds on $D_{test}$. Under the alternative, $D_{test}\subset D_{tr}$, the predictions $X_i=f_{\hat\theta}(Z_i)$ are in-sample, and thus, $X\notindep Y\mid Z$ holds on $D_{test}$.

Figure~\ref{fig:knn_bias_example} shows the rejection rates of the PCM with default parameters testing for conditional dependence for three k-nearest neighbor (knn) models (k=2,5,10) under the null and alternative. 
Under the null, the T1E of the PCM is within one Monte Carlo standard error of the nominal level of 0.05. The power under the alternative depends on the number of nearest neighbors. For a specific $Y_i$, this $Y_i$ contributes more to the model's predictions for a model with smaller $k$, thus leading to larger conditional dependence and higher rejection rates. 
In the example, $D_{tr}$ and $D_{test}$ mimic $D_{LLM}$ and $D_{CIT}$. We observe that a model with a stronger memorization of individual observations leads to stronger conditional dependence, detectable by the PCM.
\end{example}

\section{Embedded Conditional Independence Tests}\label{sec:theory}

Let $(\mathcal{X}, \mathcal{F}_\mathcal{X}), (\mathcal{Y}, \mathcal{F}_\mathcal{Y}), (\mathcal{Z}, \mathcal{F}_\mathcal{Z})$ be measurable spaces and $X,Y,Z$ be $(\mathcal{X}, \mathcal{F}_\mathcal{X})$-, $(\mathcal{Y}, \mathcal{F}_\mathcal{Y})$-, $(\mathcal{Z}, \mathcal{F}_\mathcal{Z})$-valued random variables with joint distribution $\mathbb{P}^{X,Y,Z}$. Moreover, let $\hat{\beta}$ and $\hat{\gamma}$ be $(\mathcal{B},\mathcal{F}_{\mathcal{B}})$- and $(\mathcal{C},\mathcal{F}_{\mathcal{C}})$-valued random variables representing estimated parameters of embedding maps $\omega_X:\mathcal{X}\times \mathcal{B}\to \mathfrak{X},(X,\hat{\beta})\mapsto \omega_X(X,\hat{\beta})$ and $\omega_Z:\mathcal{Z}\times \mathcal{C}\to \mathfrak{Z},(Z,\hat{\gamma})\mapsto \omega_Z(Z,\hat{\gamma})$. 
We denote the resulting random representations by $X^{\omega_X}=\omega_X(X,\hat{\beta})$ and $Z^{\omega_Z}=\omega_Z(Z,\hat{\gamma})$, which are $(\mathfrak{X},\mathcal{F}_\mathfrak{X})$- and $(\mathfrak{Z},\mathcal{F}_\mathfrak{Z})$-valued random variables.
The focus of this paper is on $\mathcal{X}$ and $\mathcal{Z}$ as spaces of text,  $\mathcal{Y}\subseteq\mathbb{R}$ or $\mathcal{Y}=\{1,\ldots,L\}$ as a univariate continuous or categorical variable, and $\mathfrak{X}\subseteq \mathbb{R}^q$ and $\mathfrak{Z}\subseteq \mathbb{R}^p$ as the spaces of feature representations. Nevertheless, the theoretical results apply also to CITs applicable to vector-valued $Y$, as well as general $X$ and $Z$ for which representations can be obtained. 
The notation and the assumptions on $\hat\beta$ follow \citet{simnacher2026deep}, where $Z$ is not embedded; we extend the setup by embedding $Z$ and by conditioning on $\hat\gamma$.

We assume that $(X_i,Y_i, Z_i), i=1,\hdots,n$ are $n$ independently and identically distributed (i.i.d.) copies of $(X,Y,Z)$. We define $X^{\omega_X}_i=\omega_X(X_i,\hat{\beta})$, and write $X^n=(X_1,\hdots,X_n)$, $(X,Y)^n=(X_i,Y_i)_{i=1}^n$, and analogously for all other combinations of $X_i^{\omega_X},$ $Z_i^{\omega_Z}$, $X_i$, $Y_i,$ $Z_i,$ $i=1,\hdots,n$.  
For the whole sample with text and their feature representations, we write $S=(X, Y, Z)^n$ and $S^\omega=(X^{\omega_X}, Y,Z^{\omega_Z})^n$, $S^{\omega_X}=(X^{\omega_X}, Y,Z)^n$, $S^{\omega_Z}=(X, Y,Z^{\omega_Z})^n$ respectively. 
Furthermore, let the underlying probability spaces for the samples be $(\mathcal{S}, \mathcal{F}_\mathcal{S}, \mathbb{P}^S)$ and analogously $(\mathcal{S}^\omega, \mathcal{F}_{\mathcal{S}^\omega}, \mathbb{P}^{S^\omega})$.
We call $\mathfrak X\times\mathcal Y\times\mathfrak Z$ the embedded observation space, in contrast to the embedded sample space $\mathcal S^{\omega_X}$ and $\mathcal S^\omega$, and denote the collections of null and alternative distributions of \eqref{equ:hypothesis_text}
by $
  \mathcal P_0=\big\{(\mathbb P^{X,Y,Z})^{\otimes n}\ :\ X\indep Y\mid Z\ \text{under}\ \mathbb{P}^{X,Y,Z}\big\},$ and $\mathcal P_1=\big\{\mathbb (P^{X,Y,Z})^{\otimes n}\ :\ X\notindep Y\mid Z\ \text{under}\ \mathbb{P}^{X,Y,Z}\big\},
$
respectively. 
On the embedded sample spaces we define the corresponding null classes on $\mathcal{S}^\omega $ and $\mathcal{S}^{\omega_X} $ as 
\begin{align}
\mathcal{P}^\omega_0 &:= \bigl\{Q \text{ probability measure on }
 (\mathcal{S}^\omega,\mathcal{F}_{\mathcal{S}^\omega}) :\
 X^{\omega_X,n}\indep Y^n \mid Z^{\omega_Z,n} \text{ under } Q \bigr\},
 \label{eq:P0omega}
 \\
\mathcal{P}^{\omega_X}_0 &:= \bigl\{Q \text{ probability measure on }
 (\mathcal{S}^{\omega_X},\mathcal{F}_{\mathcal{S}^{\omega_X}}) :\
 X^{\omega_X,n}\indep Y^n \mid Z^{n} \text{ under } Q \bigr\}.
 \label{eq:P0omegaX}
\end{align}
Both classes refer only to the embedded sample space and carry no reference to $\hat\beta$, $\hat\gamma$ or $\mathbb{P}^{X,Y,Z}$. 
Their elements are not required to be product measures: this is needed because the conditional law of $S^\omega$ given the embedding parameters is in general not a product measure when the parameters are estimated in sample. 
Both classes depend on $n$, which we suppress in the notation.

Throughout, we assume that $(\mathcal{X},\mathcal{F}_\mathcal{X})$, $(\mathcal{Y},\mathcal{F}_\mathcal{Y})$, $(\mathcal{Z},\mathcal{F}_\mathcal{Z})$ and $(\mathfrak{X},\mathcal{F}_\mathfrak{X})$, $(\mathfrak{Z},\mathcal{F}_\mathfrak{Z})$ are standard Borel. 
Since finite products of standard Borel spaces are standard Borel, this carries over to the sample spaces $\mathcal{S}$, $\mathcal{S}^{\omega_X}$ and $\mathcal{S}^{\omega}$, and guarantees the existence of the regular conditional distributions used below. 
The assumption is satisfied in our setting: $\mathcal{X}$ and $\mathcal{Z}$ are the countable sets of finite strings over a finite alphabet equipped with the discrete $\sigma$-field, while $\mathcal{Y}\subseteq\mathbb{R}$ or $\mathcal{Y}=\{1,\ldots,L\}$ and $\mathfrak{X}\subseteq\mathbb{R}^q$, $\mathfrak{Z}\subseteq\mathbb{R}^p$ carry their Borel $\sigma$-fields. 
No such assumption is placed on the parameter spaces $(\mathcal{B},\mathcal{F}_\mathcal{B})$ and $(\mathcal{C},\mathcal{F}_\mathcal{C})$, which may be arbitrary measurable spaces.

To test hypothesis \eqref{equ:hypothesis_text}, we embed the texts with the embedding maps $\omega_X(\cdot,\hat\beta)$ and $\omega_Z(\cdot,\hat\gamma)$, and test instead
\begin{equation}\label{equ:hypothesis_embedding}
H^\omega_0 : \mathbb{P}^{S^\omega\mid\hat\theta}\in\mathcal{P}^\omega_0\
\mathbb{P}^{\hat\theta}\text{-a.s.}
\qquad\text{vs.}\qquad
H^\omega_1 : \mathbb{P}^{\hat\theta}\bigl(\mathbb{P}^{S^\omega\mid\hat\theta}
\in\mathcal{P}^\omega_0\bigr)<1,
\end{equation}
where $\hat\theta=(\hat\beta,\hat\gamma)$ and $\mathbb{P}^{S^\omega\mid\hat\theta}$ denotes a regular conditional distribution of the embedded sample given the embedding parameters. 

We propose the eCIT as the measurable function
\begin{equation}\label{eq:ecit}
\varphi_{\omega}:\mathcal{S}\times\mathcal{B}\times\mathcal{C}\times[0,1]\to\{0,1\},\qquad
(s,b,c,u)\mapsto\psi\bigl(\bar\omega(s,b,c),u\bigr),
\end{equation}
where $\bar\omega$ takes the whole sample and the estimators $\hat\beta,\hat\gamma$ as input and maps it onto $\mathcal{S}^{\omega}$ through the embedding maps, so that $\bar\omega(S,\hat\beta,\hat\gamma)=S^{\omega}$, and where $\psi:\mathcal{S}^{\omega}\times[0,1]\to\{0,1\}$ is a CIT testing \eqref{equ:hypothesis_embedding}, with $\varphi_{\omega}=0$ and $\varphi_{\omega}=1$ defining decisions for $H_0$ and $H_1$ in \eqref{equ:hypothesis_text}. 
The last argument is reserved for a random variable $U\sim\mathrm{Unif}[0,1]$, independent of $(S,\hat\beta,\hat\gamma)$, which carries any randomization employed by the CIT, such as a sample split \citep[cf. e.g.,][]{shah2020hardness}. 
Since a randomized test may always ignore this argument, deterministic CITs are included. 
We suppress $U$ from the notation, writing $\varphi_\omega(S)$ and $\psi(S^\omega)$, and let all probabilities of rejection be taken under the joint law of $(S,\hat\beta,\hat\gamma,U)$; analogously, $Q(\psi=1)$ abbreviates $(Q\otimes\mathrm{Unif}[0,1])(\psi=1)$ for $Q$ a probability measure on
$(\mathcal{S}^\omega,\mathcal{F}_{\mathcal{S}^\omega})$.
The test statistic of the eCIT is defined analogously as $T_{\varphi_\omega}(s,b,c,u)=T_{\psi}\bigl(\bar\omega(s,b,c),u\bigr)$ for a test statistic $T_{\psi}:\mathcal{S}^{\omega}\times[0,1]\to\mathbb{R}$ of the CIT $\psi$.

Throughout, the estimation procedure for the embedding parameters is fixed: it is a Markov kernel from $(\mathcal{S},\mathcal{F}_{\mathcal{S}})$ to $(\mathcal{B}\times\mathcal{C},\mathcal{F}_{\mathcal{B}}\otimes\mathcal{F}_{\mathcal{C}})$, specified for every $s\in\mathcal{S}$ and not depending on $\mathbb{P}^S$.
The parameters remain random, while their conditional distribution given $S=s$ is held fixed across the null class.
If $\hat\theta$ is a measurable function of the sample, as for the in-sample construction of $\omega_Z$ by regressing $X^{\omega_X,n}$ on $Z^n$, its conditional distribution given $S=s$ places probability one on the corresponding value. 
If $\hat\theta$ is instead learned on data independent of $S$, its conditional distribution does not depend on $s$ and carries the randomness of that fit. In general the conditional distribution both varies with $s$ and is non-degenerate given $S=s$, as when the sample is combined with external data or the fit is randomly initialized.
Since $U$ is independent of $(S,\hat\beta,\hat\gamma)$, the joint law of $(S,\hat\beta,\hat\gamma,U)$ is obtained from $\mathbb{P}^S$, this kernel and the law of $U$, and is therefore a function of $\mathbb{P}^S$ alone for a fixed estimation procedure. We write $\mathbb{P}^S(\varphi_\omega=1)$ for the rejection probability under it.

Both embedding maps $\omega_X$ and $\omega_Z$ have to be chosen in the eCIT. 
For the former, we follow the approaches discussed in \citet{simnacher2026deep}. This choice affects only the power of the test as long as the learning approaches characterized in \citet[][Corollary~3]{simnacher2026deep}, e.g. sample splitting or externally trained embedding maps, are used. 
The choice of $\omega_Z$ affects the T1E and the power of the eCIT. 
We will characterize sufficient embedding maps $\omega_Z$ in the next subsection. 
Furthermore, we evaluate the validity and power of the corresponding eCITs on semi-synthetic data in Subsection \ref{subsec:simulation}.

\subsection{Sufficient Embedding Maps}\label{subsec:suff_embedding_maps}

In this subsection, we take the embedding map $\omega_X$ as given, so that a valid test for \eqref{equ:hypothesis_embedding_X} is already a valid test for \eqref{equ:hypothesis_text}, and characterize the embedding maps $\omega_Z$ for which we may test \eqref{equ:hypothesis_embedding} instead. We first restate both hypotheses as CI statements conditioning on the estimated parameters $\hat\beta$ and $\hat\gamma$. We then give conditions under which $H_0^{\omega_X}$ implies $H_0^{\omega}$, so that a valid test of the embedded hypothesis controls the T1E for the hypothesis on the texts: $\hat\gamma$ must not introduce conditional dependence (Assumption~\ref{ass:gamma_no_cond_dep}), and $Z^{\omega_Z}$ must retain all information in $Z$ about either $Y$ or $X^{\omega_X}$ (Theorem~\ref{thm:text0_implies_emb0_Z}). We call such $\omega_Z$ sufficient. Finally, we give conditions for the converse implication (Theorem~\ref{thm:emb0_implies_text0_Z}) and for equivalence of the two hypotheses (Corollary~\ref{cor:equivalence_nulls}).

We can either embed $X$ or $Z$ first. We embed $X$ first, because this simplifies embedding $Z$, as we will see in the following sufficiency characterizations for the embedding map of $Z$. Thus, we assume throughout this subsection that $X$ was embedded onto $X^{\omega_X}$. Then, to test \eqref{equ:hypothesis_text}, we test instead
\begin{align}\label{equ:hypothesis_embedding_X}
    H_0^{\omega_X}:\mathbb{P}^{S^{\omega_X}| \hat{\beta}}\in \mathcal{P}_0^{\omega_X}\;\; \mathbb{P}^{ \hat{\beta}}\text{-a.s.}
    \quad vs.\quad 
    H_1^{\omega_X}:\mathbb{P}^{\hat{\beta}}(\mathbb{P}^{S^{\omega_X}| \hat{\beta}}\in \mathcal{P}_0^{\omega_X})<1, 
\end{align}
where $\mathbb{P}^{S^{\omega_X}|\hat{\beta}}$ denotes the conditional distributions of the sample after embedding $X$ given the parameter $\hat{\beta}$ of the embedding map $\omega_X$. 
We assume that $\omega_X$ was obtained as specified in \citet[][Theorem~1, Corollary~3]{simnacher2026deep}, e.g. learned on external data, through sample splitting or in an unsupervised manner. 
This ensures that a valid test for \eqref{equ:hypothesis_embedding_X} is a valid test for \eqref{equ:hypothesis_text}, although the embedding map can affect the power of the test.

Given such a $X^{\omega_X}$, we are interested in embedding maps $\omega_Z$ to test \eqref{equ:hypothesis_embedding} instead of \eqref{equ:hypothesis_embedding_X}. To do this, we will assume throughout that $\hat{\gamma}$ does not introduce conditional dependence. 
\begin{assumption}\label{ass:gamma_no_cond_dep}
    Either $a)\; \hat{\gamma}\indep Y^n\mid ((X^{\omega_X},Z)^n, \hat{\beta})$ or $b)\;\hat{\gamma}\indep X^{\omega_X,n}\mid ((Y,Z)^n, \hat{\beta})$.
\end{assumption}
This is similar to the assumption placed on the embedding map parameters $\hat{\beta}$ here and in the theoretical results of \citet{simnacher2026deep}. 
Furthermore, it holds trivially for $\hat{\gamma}$ learned via sample splitting, when $S$ denotes the holdout split, or on data independent of the test sample $S$, but has to be ensured for $\hat{\gamma}$ learned from $S$.
In particular, branch~a) holds by construction whenever $\hat\gamma$ is a measurable function of $(X^{\omega_X},Z)^n$, since $\hat\gamma$ is then measurable with respect to the conditioning $\sigma$-field. This covers the in-sample construction of $\omega_Z$ by regressing $X^{\omega_X,n}$ on $Z^n$, which is the natural use of the representation $X^{\omega_X}$ obtained in the first step.
In contrast to $\hat{\beta}$, this assumption is not sufficient for a valid test for \eqref{equ:hypothesis_embedding} to yield a valid test for \eqref{equ:hypothesis_embedding_X} and \eqref{equ:hypothesis_text}. Specifically, we need also that $Z$ does not contain information about $Y$ or $X^{\omega_X}$ beyond $Z^{\omega_Z}$, which we ensure below.

First, we show that the hypothesis $H_0^\omega$ in \eqref{equ:hypothesis_embedding}, formulated as an almost-sure statement of the conditional distribution $\mathbb{P}^{S^\omega\mid\hat\gamma,\hat\beta}$, is equivalent to a single conditional independence statement in which $\hat\gamma$ and $\hat\beta$ are added to the conditioning set.

\begin{lemmaE}[][end, text link={All proofs are provided in Appendix \ref{app:proofs}.}, text proof={Proof of Lemma~\ref{lem:disintegration}}]\label{lem:disintegration}
Let $\mathbb{P}^{S^\omega\mid\hat\gamma,\hat\beta}$ denote a regular conditional distribution of $S^\omega$ given $(\hat\gamma,\hat\beta)$. Then
\[
  X^{\omega_X,n} \indep Y^n \mid (Z^{\omega_Z,n},\hat\gamma,\hat\beta)
  \quad\Longleftrightarrow\quad
  \mathbb{P}^{S^\omega\mid\hat\gamma,\hat\beta} \in \mathcal{P}_0^\omega
  \quad \mathbb{P}^{\hat\gamma,\hat\beta}\text{-a.s.}
\]
The analogous equivalence holds for $S^{\omega_X}$, $\hat\beta$, $\mathcal{P}_0^{\omega_X}$ and $Z^n$ in place of $S^\omega, (\hat{\gamma},\hat\beta), \mathcal{P}_0^\omega$ and $Z^{\omega_Z,n}$. 
\end{lemmaE}
\begin{proofE}
Apply Lemma~\ref{lem:disintegration-general} with $V=S^\omega$, $W=(\hat\gamma,\hat\beta)$ and $(A,B,C)=(X^{\omega_X,n},Y^n,Z^{\omega_Z,n})$, so that $\mathcal{P}_0=\mathcal{P}_0^\omega$; and with $V=S^{\omega_X}$, $W=\hat\beta$ and $(A,B,C)=(X^{\omega_X,n},Y^n,Z^n)$, so that $\mathcal{P}_0=\mathcal{P}_0^{\omega_X}$, for the second statement.
\end{proofE}

\noindent
Both instances follow from the general Lemma~\ref{lem:disintegration-general} in
Appendix~\ref{app:proofs}. Note that no assumption is placed on the spaces
$\mathcal{B},\mathcal{C}$ in which $\hat\beta,\hat\gamma$ take their values, and
that $\mathcal{P}_0^\omega$ is not required to consist of product measures, so
that Lemma~\ref{lem:disintegration} applies unchanged when $\hat\gamma$ is
estimated in sample and the rows of $S^\omega$ are conditionally dependent.

Next, we show that for an embedding map $\omega_Z$ containing all information within $Z$ of either, $Y$ or $X^{\omega_X}$, the null hypothesis \eqref{equ:hypothesis_embedding_X} including texts $Z$ implies the null hypothesis \eqref{equ:hypothesis_embedding} including only the corresponding embeddings $Z^{\omega_Z}$. 
\begin{theoremE}[$Z$-text null implies $Z$-embedded null][end, no link to proof, text proof={Proof of Theorem \ref{thm:text0_implies_emb0_Z}}]\label{thm:text0_implies_emb0_Z}
    Let $\hat{\gamma}$ satisfy Assumption \ref{ass:gamma_no_cond_dep}, and
    \begin{align*}
        (i) \; X^{\omega_X,n}\indep Z^n\mid (\hat{\gamma}, \hat{\beta},Z^{\omega_Z,n})\quad \text{or} \quad (ii)\;  Y^n\indep Z^n\mid (\hat{\gamma}, \hat{\beta},Z^{\omega_Z,n}).
    \end{align*} 
    Then $H_0^{\omega_X}\implies H_0^{\omega}$.
\end{theoremE}
\begin{proofE}
By Lemma~\ref{lem:disintegration} applied to $\hat\beta$ and $S^{\omega_X}$, $H^{\omega_X}_0$ is equivalent to
\begin{equation}\label{eq:thm3-start}
  X^{\omega_X,n}\indep Y^n\mid(\hat\beta,Z^n).
\end{equation}

\emph{Step 1: adjoining $\hat\gamma$.} We show that \eqref{eq:thm3-start} together with
Assumption~\ref{ass:gamma_no_cond_dep} implies
\begin{equation}\label{eq:thm3-mid}
  X^{\omega_X,n}\indep Y^n\mid(\hat\beta,\hat\gamma,Z^n).
\end{equation}
Under Assumption~\ref{ass:gamma_no_cond_dep}a), contraction applied to \eqref{eq:thm3-start} and
$\hat\gamma\indep Y^n\mid((X^{\omega_X},Z)^n,\hat\beta)$ gives
$(X^{\omega_X,n},\hat\gamma)\indep Y^n\mid(\hat\beta,Z^n)$. Under
Assumption~\ref{ass:gamma_no_cond_dep}b), contraction applied to the symmetric form of \eqref{eq:thm3-start} and
$\hat\gamma\indep X^{\omega_X,n}\mid((Y,Z)^n,\hat\beta)$ gives
$(Y^n,\hat\gamma)\indep X^{\omega_X,n}\mid(\hat\beta,Z^n)$. In either case, weak union with the
measurable function $\hat\gamma$ of the left-hand pair, followed by decomposition, yields
\eqref{eq:thm3-mid}.

\emph{Step 2: replacing $Z^n$ by $Z^{\omega_Z,n}$.} Write $C_0=(\hat\beta,\hat\gamma,Z^{\omega_Z,n})$.
Since $Z^{\omega_Z,n}$ is $\sigma(Z^n,\hat\gamma)$-measurable, we have
$\sigma(C_0,Z^n)=\sigma(\hat\beta,\hat\gamma,Z^n)$, so \eqref{eq:thm3-mid} reads
\begin{equation}\label{eq:thm3-mid2}
  X^{\omega_X,n}\indep Y^n\mid(C_0,Z^n),
\end{equation}
which is symmetric in $X^{\omega_X,n}$ and $Y^n$. Under (i), contraction applied to
$Z^n\indep X^{\omega_X,n}\mid C_0$ and \eqref{eq:thm3-mid2} gives
$(Z^n,Y^n)\indep X^{\omega_X,n}\mid C_0$; under (ii), the same argument with the roles of
$X^{\omega_X,n}$ and $Y^n$ interchanged gives $(Z^n,X^{\omega_X,n})\indep Y^n\mid C_0$. In
either case decomposition yields $X^{\omega_X,n}\indep Y^n\mid C_0$, which by
Lemma~\ref{lem:disintegration} is $H^{\omega}_0$. All four combinations of
Assumption~\ref{ass:gamma_no_cond_dep}a)--b) with (i)--(ii) are therefore covered.
\end{proofE}
The conditions on $\hat\beta$, $\hat\gamma$ and $\omega_Z$ used above are statements about the joint law of $(S,\hat\beta,\hat\gamma)$ and hence properties of $\mathbb{P}^S$, so we may collect the null distributions satisfying them:
\begin{align*}
\mathcal{P}_0^{\mathrm{suff}} := \Big\{\mathbb{P}^S\in\mathcal{P}_0 :\
&\hat\beta\indep Y^n\mid (X,Z)^n,\ \text{Assumption~\ref{ass:gamma_no_cond_dep} holds, and}\\
&\text{(i) or (ii) of Theorem~\ref{thm:text0_implies_emb0_Z} holds}\Big\}.
\end{align*}
The class depends on the estimation procedure, $\omega_X$, $\omega_Z$ and $n$, which we suppress in the notation.

\begin{definition}[Validity]\label{def:validity}
Let $\alpha\in(0,1)$ and let $\mathcal{P}\subseteq\mathcal{P}_0$. A test $\varphi_\omega$ is
\emph{valid for $\mathcal{P}$ at level $\alpha$} if
\[
  \sup_{\mathbb{P}^S\in\mathcal{P}}\mathbb{P}^S\bigl(\varphi_\omega=1\bigr)\le\alpha .
\]
Analogously, for a class $\mathcal Q$ of probability measures on $(\mathcal S^\omega,\mathcal F_{\mathcal S^\omega})$, the test $\psi$ is valid for $\mathcal Q$ at level $\alpha$ if $\sup_{Q\in\mathcal Q}Q(\psi=1)\le\alpha$.
\end{definition}

\begin{theoremE}[Validity of the eCIT][end, no link to proof, text proof={Proof of Theorem \ref{thm:ecit-valid}}]\label{thm:ecit-valid}
Let $\alpha\in(0,1)$. If the CIT $\psi$ is valid for $\mathcal{P}_0^{\omega}$ at
level $\alpha$, then the eCIT $\varphi_{\omega}=\psi\circ\bar\omega$ is valid for
$\mathcal{P}_0^{\mathrm{suff}}$ at level $\alpha$.
\end{theoremE}
\begin{proofE}
Fix $\mathbb{P}^{S}\in\mathcal{P}_0^{\mathrm{suff}}$ and write $\hat\theta=(\hat\beta,\hat\gamma)$.

\emph{Step 1: $H_0\Rightarrow H_0^{\omega_X}$.} Since $\mathbb{P}^{S}\in\mathcal{P}_0$, the
$(X_i,Y_i,Z_i)$ are i.i.d.\ with $X\indep Y\mid Z$, so $X^n\indep Y^n\mid Z^n$ by
Lemma~\ref{lem:lift}. By definition of $\mathcal{P}_0^{\mathrm{suff}}$ we have
$\hat\beta\indep Y^n\mid (X,Z)^n$, and contraction gives $(X^n,\hat\beta)\indep Y^n\mid Z^n$.
By symmetry of conditional independence, $Y^n\indep(X^n,\hat\beta)\mid Z^n$; weak union with
the measurable function $\hat\beta$ of $(X^n,\hat\beta)$ yields
$Y^n\indep(X^n,\hat\beta)\mid(Z^n,\hat\beta)$, and decomposition, applied to the
$\sigma(X^n,\hat\beta)$-measurable $X^{\omega_X,n}$, yields
$X^{\omega_X,n}\indep Y^n\mid(Z^n,\hat\beta)$, which by Lemma~\ref{lem:disintegration} is
$H_0^{\omega_X}$.

\emph{Step 2: $H_0^{\omega_X}\Rightarrow H_0^{\omega}$.} By definition of
$\mathcal{P}_0^{\mathrm{suff}}$, $\hat\gamma$ satisfies Assumption~\ref{ass:gamma_no_cond_dep} and $\omega_Z$
satisfies (i) or (ii) of Theorem~\ref{thm:text0_implies_emb0_Z}, so Theorem~\ref{thm:text0_implies_emb0_Z} gives
$X^{\omega_X,n}\indep Y^n\mid(Z^{\omega_Z,n},\hat\theta)$. By Lemma~\ref{lem:disintegration} with
$V=S^{\omega}$ and $W=\hat\theta$,
\begin{equation}\label{eq:as-null}
\mathbb{P}^{S^{\omega}\mid\hat\theta}\in\mathcal{P}_0^{\omega}
\qquad\mathbb{P}^{\hat\theta}\text{-a.s.}
\end{equation}

\emph{Step 3: a valid CIT implies a valid eCIT.} Since $S^{\omega}=\bar\omega(S,\hat\theta)$ is
$\sigma(S,\hat\theta)$-measurable and $U\indep(S,\hat\theta)$, we have
$U\indep(S^{\omega},\hat\theta)$, so
$\mathbb{P}^{S^{\omega}\mid\hat\theta}\otimes\mathrm{Unif}[0,1]$ is a regular conditional
distribution of $(S^{\omega},U)$ given $\hat\theta$. The map
$(s^{\omega},u)\mapsto\mathbf{1}\{\psi(s^{\omega},u)=1\}$ is bounded and measurable,
so by the defining property of the regular conditional distribution,
\[
\mathbb{E}\bigl[\mathbf{1}\{\psi(S^{\omega},U)=1\}\,\big|\,\hat\theta\bigr]
=\int\mathbf{1}\{\psi=1\}\,
 \mathrm{d}\bigl(\mathbb{P}^{S^{\omega}\mid\hat\theta}\otimes\mathrm{Unif}[0,1]\bigr)
\qquad\mathbb{P}^{\hat\theta}\text{-a.s.}
\]
By \eqref{eq:as-null} the integrating measure is of the form $Q\otimes\mathrm{Unif}[0,1]$ with
$Q\in\mathcal{P}_0^{\omega}$ almost surely, so validity of $\psi$, being uniform over
$\mathcal{P}_0^{\omega}$, bounds the right-hand side by $\alpha$ almost surely. Since
$\varphi_{\omega}(S)=\psi(S^{\omega},U)$, the tower property gives
\[
\mathbb{P}^{S}\bigl(\varphi_{\omega}=1\bigr)
=\mathbb{E}\Bigl[\mathbb{E}\bigl[\mathbf{1}\{\psi(S^{\omega},U)=1\}\,\big|\,
\hat\theta\bigr]\Bigr]\le\alpha .
\]
As $\mathbb{P}^{S}\in\mathcal{P}_0^{\mathrm{suff}}$ was arbitrary, taking the supremum gives the
claim.
\end{proofE}

\begin{remark}\label{rem:no-marginal}
Unlike in \citet[Theorem 2]{simnacher2026deep}, the embedded null cannot be stated for the marginal law of
$S^\omega$. There, the assumption on $\hat\beta$ allows $\hat\beta$ to be removed from the conditioning
set by contraction and decomposition, because $X^{\omega_X,n}$ appears on the left of the conditioning
bar and $\hat\beta$ is a free element of the conditioning set. Here $Z^{\omega_Z,n}$ is
$\sigma(Z^n,\hat\gamma)$-measurable and appears on the right, so $\hat\gamma$ cannot be removed: deleting it
changes the $\sigma$-field relative to which the conditioning variable is defined.
Marginalizing instead averages over $\hat\gamma$, and a mixture of conditionally independent laws need not be conditionally independent. The almost-sure formulation
\eqref{equ:hypothesis_embedding} and the conditional argument in Step~3 of the proof of
Theorem~\ref{thm:ecit-valid} are therefore needed.
\end{remark}

Theorem~\ref{thm:ecit-valid} requires $\psi$ to be valid over all of $\mathcal{P}_0^{\omega}$, whose elements are not required to be product measures. Two regimes have to be distinguished. If $\hat\gamma$ is estimated on data independent of $S$ --- externally, or on a training split disjoint from the test sample --- then conditionally on $\hat\theta$ the rows of $S^\omega$ remain i.i.d., and validity over the i.i.d.\ null class suffices.

Let $\mathcal P_0^{\omega,\otimes}$ denote the $n$-fold products $M^{\otimes n}$ of probability measures $M$ on the embedded observation space under which $X^{\omega_X}\indep Y\mid Z^{\omega_Z}$, which by Lemma~\ref{lem:lift} implies $\mathcal P_0^{\omega,\otimes}\subseteq\mathcal P_0^\omega$.

\begin{corollaryE}[Independently estimated embedding parameters][end, no link to proof, text proof={Proof of Corollary \ref{cor:indep}}]\label{cor:indep}
Let $\alpha\in(0,1)$ and let $\psi$ be valid for $\mathcal{P}^{\omega,\otimes}_0$ at level $\alpha$. Then $\varphi_\omega$ is valid at level $\alpha$ for $\{\mathbb{P}^S\in\mathcal{P}^{\mathrm{suff}}_0:\hat\theta\indep S\}$.
\end{corollaryE}
\begin{proofE}
Fix such a $\mathbb{P}^S$. Since $\hat\theta\indep S$, the conditional law of $S$ given
$\hat\theta$ is $\mathbb{P}^S$ itself, and conditionally on $\hat\theta=(b,c)$ the rows of
$S^\omega$ are the images of the i.i.d.\ rows of $S$ under the fixed measurable map
$(x,y,z)\mapsto(\omega_X(x,b),y,\omega_Z(z,c))$. Hence $\mathbb{P}^{S^\omega\mid\hat\theta}$ is an
$n$-fold product of a common marginal $\mathbb{P}^{\hat\theta}$-a.s. Combined with Lemmas~\ref{lem:disintegration} and \ref{lem:lift} it lies in $\mathcal{P}^{\omega,\otimes}_0$ a.s., and Step~3 of the proof of
Theorem~\ref{thm:ecit-valid} applies with $\mathcal{P}^{\omega,\otimes}_0$ in place of
$\mathcal{P}^{\omega}_0$.
\end{proofE}

If instead $\hat\gamma$ is estimated in sample, Lemma~\ref{lem:lift} no longer applies at the embedded level: the conditional laws are in general not product measures, and $\psi$ must be valid over the strictly larger class $\mathcal{P}_0^{\omega}$. 
We therefore state Theorem~\ref{thm:text0_implies_emb0_Z} and Theorem~\ref{thm:ecit-valid} for general $\hat\gamma$, since the hypothesis-transfer argument requires no product structure, but restrict $\hat\gamma$ to embedding maps trained on a held-out corpus in the application in Subsection~\ref{subsec:real_world_application}, so that Corollary~\ref{cor:indep} applies and the conditional null class is the i.i.d.\ one.

While Theorem~\ref{thm:text0_implies_emb0_Z} shows that a sufficient embedding map $\omega_Z$ transfers the null hypothesis for texts $Z$ to the null hypothesis for embedded texts $Z^{\omega_Z}$, it does not establish equivalence of the two hypotheses. Thus, the embedded null \eqref{equ:hypothesis_embedding} could be true while the text null \eqref{equ:hypothesis_embedding_X} is false. For example, $Z^{\omega_Z}$ could contain $Y$, so that conditioning on $Z^{\omega_Z,n}$ makes $Y^n$ measurable with respect to the conditioning set and \eqref{equ:hypothesis_embedding} holds trivially. For the converse implication we have to exclude such cases, and we additionally need that $\hat\gamma$ may be removed from the conditioning set. 

\begin{assumption}\label{ass:gamma_converse}
Either a) $\hat\gamma\indep Y^n\mid (Z^n,\hat\beta)$ or
b) $\hat\gamma\indep X^{\omega_X,n}\mid (Z^n,\hat\beta)$.
\end{assumption}

Assumption~\ref{ass:gamma_converse} holds trivially whenever $\hat\gamma$ is learned on data independent of $S$, and whenever $\hat\gamma$ is a measurable function of $Z^n$ alone, for instance for unsupervised embedding maps of the source texts. 

\begin{theoremE}[Embedded null implies text null][end, no link to proof, text proof={Proof of Theorem \ref{thm:emb0_implies_text0_Z}}]\label{thm:emb0_implies_text0_Z}
Let
\begin{align*}
(i)\; X^{\omega_X,n}\indep Z^n\mid (\hat\beta,\hat\gamma,Z^{\omega_Z,n},Y^n) \quad\text{and Assumption~\ref{ass:gamma_converse}a)},\quad\text{or}\\
(ii)\; Y^n\indep Z^n\mid (\hat\beta,\hat\gamma,Z^{\omega_Z,n},X^{\omega_X,n}) \quad\text{and Assumption~\ref{ass:gamma_converse}b)}.
\end{align*}
Then $H_0^\omega\implies H_0^{\omega_X}$.
\end{theoremE}
\begin{proofE}
Assume (i) and write $C_0=(Z^{\omega_Z,n},\hat\beta,\hat\gamma)$. By Lemma~\ref{lem:disintegration}, $H_0^\omega$ is equivalent to $X^{\omega_X,n}\indep Y^n\mid C_0$. By (i), contraction 
gives
\[
X^{\omega_X,n}\indep (Y,Z)^n\mid C_0 .
\]
Since $Z^n$ is a measurable function of $(Y,Z)^n$, weak union yields $X^{\omega_X,n}\indep (Y,Z)^n\mid (C_0,Z^n)$, and decomposition yields $X^{\omega_X,n}\indep Y^n\mid (Z^n,Z^{\omega_Z,n},\hat\beta,\hat\gamma)$. As $Z^{\omega_Z,n}$ is $\sigma(Z^n,\hat\gamma)$-measurable, this is
\begin{equation}\label{eq:converse_intermediate}
X^{\omega_X,n}\indep Y^n\mid (Z^n,\hat\beta,\hat\gamma).
\end{equation}
Applying contraction 
gives $(\hat\gamma,X^{\omega_X,n})\indep Y^n\mid (Z^n,\hat\beta)$ by Assumption~\ref{ass:gamma_converse}a), and decomposition gives
\[
X^{\omega_X,n}\indep Y^n\mid (Z^n,\hat\beta),
\]
which by Lemma~\ref{lem:disintegration} is $H_0^{\omega_X}$. Under (ii) the same argument applies with the roles of $X^{\omega_X,n}$ and $Y^n$ interchanged, using the symmetry of conditional independence.
\end{proofE}

Finally, a joint CI assumption gives the equivalence:

\begin{corollaryE}[][end, no link to proof, text proof={Proof of Corollary \ref{cor:equivalence_nulls}}]\label{cor:equivalence_nulls}
Let Assumptions~\ref{ass:gamma_no_cond_dep} and \ref{ass:gamma_converse} hold and let
\[
(*):\quad (X^{\omega_X,n},Y^n)\indep Z^n\mid (\hat\beta,\hat\gamma,Z^{\omega_Z,n}).
\]
Then $H_0^\omega\iff H_0^{\omega_X}$.
\end{corollaryE}
\begin{proofE}
Write $C_0=(Z^{\omega_Z,n},\hat\beta,\hat\gamma)$. By symmetry of CI, $(*)$ reads $Z^n\indep (X^{\omega_X,n},Y^n)\mid C_0$, so decomposition gives both $X^{\omega_X,n}\indep Z^n\mid C_0$ and $Y^n\indep Z^n\mid C_0$, i.e.\ conditions (i) and (ii) of Theorem~\ref{thm:text0_implies_emb0_Z}. Moreover, weak union applied to $(*)$ with the measurable functions $Y^n$ and $X^{\omega_X,n}$ of $(X^{\omega_X,n},Y^n)$, each followed by decomposition, gives $X^{\omega_X,n}\indep Z^n\mid (C_0,Y^n)$ and $Y^n\indep Z^n\mid (C_0,X^{\omega_X,n})$, i.e.\ conditions (i) and (ii) of Theorem~\ref{thm:emb0_implies_text0_Z}. The claim now follows from Theorems~\ref{thm:text0_implies_emb0_Z} and \ref{thm:emb0_implies_text0_Z}.
\end{proofE}

Corollary~\ref{cor:equivalence_nulls} is a statement about the hypotheses, not about the power of a specific test: under $(*)$, the embedding map cannot map an alternative in $\mathcal{P}_1^{\omega_X}$ into the embedded null, so no alternative becomes undetectable in principle. It does not imply that the eCIT has the same power as a test of \eqref{equ:hypothesis_embedding_X}, since the embedding may reduce the effect size and project onto alternatives not detectable by the CIT applied to test \eqref{equ:hypothesis_embedding}.

Throughout this subsection we treated $\hat\beta$ as already learned, e.g.\ on an external dataset, and considered only the estimation of $\hat\gamma$. This weakens the requirement on $\omega_Z$. Since $X^{\omega_X,n}$ is $\sigma(X^n,\hat\beta)$-measurable, we have $\sigma(X^{\omega_X,n})\subseteq\sigma(X^n,\hat\beta)$, and condition~(i) of Theorem~\ref{thm:text0_implies_emb0_Z} requires only
\[
  X^{\omega_X,n}\;\indep\;Z^n \mid (\hat\beta,\hat\gamma,Z^{\omega_Z,n})
  \qquad\text{instead of}\qquad
  X^{n}\;\indep\;Z^n \mid (\hat\beta,\hat\gamma,Z^{\omega_Z,n}),
\]
so that $Z^{\omega_Z}$ need only retain the information in $Z$ about the representation $X^{\omega_X}$, and not about the whole text $X$. The coarser the embedding $\omega_X$, the weaker this condition becomes. 
However, if $\omega_X$ does not represent the conditional association between $X$ and $Y$ in a form the CIT can detect, the test loses power \citep{simnacher2026deep}.
As $X$ is embedded in any case to apply \citet{simnacher2026deep}, the choice is not whether to embed $X$ but in which order: embedding $X$ first weakens the sufficiency requirement on $\omega_Z$ without any additional loss of power, and may thereby facilitate T1E control.

\subsection{Conditional Mean Independence}\label{subsec:cond_mean_indep}

Throughout the paper, we focus on the projected covariance measure \citep[PCM;][]{lundborg2024projected} to test hypothesis \eqref{equ:hypothesis_embedding}, an algorithm-agnostic CMIT which performs well for low-dimensional, vector-valued $Z$ and embedded images $X$ \citep{simnacher2026deep}. However, other conditional (mean) independence tests \citep[e.g.,][]{strobl2019approximate, dai2022significance, williamson2023general, cai2025test} can be applied to test \eqref{equ:hypothesis_embedding}. 

Throughout this subsection we take $Y$ to be $\mathbb R^K$-valued, so that conditional means of $Y$ are defined: $K=1$ in the continuous case $\mathcal Y\subseteq\mathbb R$, and $K=L-1$ in the categorical case $\mathcal Y=\{1,\dots,L\}$, where $Y$ is represented by its vector of indicators of the non-baseline classes (cf.\ Remark~\ref{remark:equivalence_ci_mean_cat_Y} and
Appendix~\ref{app:catpcm}). We assume throughout that $\mathbb E\|Y_1\|<\infty$, where $\|\cdot\|$ is any norm on $\mathbb R^K$, all of which are equivalent, and which reduces to the absolute value for $K=1$. The assumption holds automatically in the categorical case and, since the rows of $S$ are i.i.d., gives $\mathbb E\|Y_i\|<\infty$ for all $i\le n$, so that the conditional means of $Y_i$ appearing below are well defined.

We test \eqref{equ:hypothesis_embedding} by applying the PCM, a CMIT, 
which tests instead
\begin{equation}\label{eq:mean-hyp}
  H_0^{\omega,\mathrm{mean}}:\
  \mathbb P^{S^\omega\mid\hat\theta}\in\mathcal P_0^{\omega,\mathrm{mean}}
  \ \ \mathbb P^{\hat\theta}\text{-a.s.}
  \quad\text{vs.}\quad
  H_1^{\omega,\mathrm{mean}}:\
  \mathbb {P}^{\hat\theta}\big(\mathbb P^{S^\omega\mid\hat\theta}
  \in\mathcal P_0^{\omega,\mathrm{mean}}\big)<1,
\end{equation}
where
\begin{align*}
  \mathcal P_0^{\omega,\mathrm{mean}}:=\Big\{&Q \text{ probability measure on } (\mathcal S^\omega,\mathcal F_{\mathcal S^\omega}) : \mathbb E_Q\|Y_i\|<\infty \text{ and }\\ 
  &\mathbb E_Q\big[Y_i\mid X^{\omega_X,n},Z^{\omega_Z,n}\big] =\mathbb E_Q\big[Y_i\mid Z^{\omega_Z,n}\big]\ Q\text{-a.s. for all } i\le n\Big\}.
\end{align*}
Since CI implies conditional mean independence and $Y_i$ is a measurable function of $Y^n$, every integrable $Q\in\mathcal P_0^\omega$ lies in $\mathcal P_0^{\omega,\mathrm{mean}}$. Integrability is not implied by $\mathcal{P}_0^{\omega}$ and has to be required separately, but it is not a restriction here: the embedding maps act on $X$ and $Z$ only, so the standing assumption $\mathbb{E}\|Y_1\|<\infty$ carries over to the embedded sample space and the conditional law $\mathbb{P}^{S^\omega\mid \hat\theta}$ is integrable almost surely. Whenever this conditional law lies in $\mathcal{P}_0^{\omega}$ it therefore lies in $\mathcal{P}_0^{\omega,mean}$, and since Step 3 of the proof of Theorem~\ref{thm:ecit-valid} uses only that it lies almost surely in the class over which $\psi$ is valid, Theorem~\ref{thm:ecit-valid} applies unchanged.
However, power against alternatives in $\mathcal P_0^{\omega,\mathrm{mean}}\setminus\mathcal P_0^\omega$ can be lost \citep{lundborg2024projected}, i.e.\ those in which the summary carries information about $Y$ beyond the speech without shifting its conditional mean.

As in the previous subsection, we embed $X$ first and characterize afterwards mean sufficiency for the embedding of $Z$. Thus, we define
\begin{equation}\label{eq:mean-embedding_X}
  H_0^{\omega_X,\mathrm{mean}}:\
  \mathbb P^{S^{\omega_X}\mid\hat\beta}\in\mathcal P_0^{\omega_X,\mathrm{mean}}
  \ \ \mathbb P^{\hat\beta}\text{-a.s.}
  \quad\text{vs.}\quad
  H_1^{\omega_X,\mathrm{mean}}:\
  \mathbb {P}^{\hat\beta} \big(\mathbb P^{S^{\omega_X}\mid\hat\beta}
  \in\mathcal P_0^{\omega_X,\mathrm{mean}}\big)<1,
\end{equation}
where 
\begin{align*}
    \mathcal P_0^{\omega_X,\mathrm{mean}}:=\Big\{&Q \text{ probability measure on } (\mathcal S^{\omega_X},\mathcal F_{\mathcal S^{\omega_X}}) : \mathbb E_Q\|Y_i\|<\infty \text{ and }\\ 
  &\mathbb E_Q\big[Y_i\mid X^{\omega_X,n},Z^{n}\big] =\mathbb E_Q\big[Y_i\mid Z^{n}\big]\ Q\text{-a.s. for all } i\le n\Big\}.
\end{align*}
Because \eqref{eq:mean-embedding_X} is stated only for the conditional mean, the assumption $\hat\beta\indep Y^n\mid (X,Z)^n$ under which \citet{simnacher2026deep} transfer $H_0$ to $H_0^{\omega_X}$ can be weakened. Specifically, $\hat\beta$ is only required to introduce no conditional mean dependence:
\begin{assumption}\label{ass:beta-mean}
$\mathbb E\big[Y_i\mid X^{n},Z^{n},\hat\beta\big]
 =\mathbb E\big[Y_i\mid X^{n},Z^{n}\big]$ a.s.\ for all $i\le n$.
\end{assumption}
Assumption~\ref{ass:beta-mean} is implied by $\hat\beta\indep Y^n\mid (X,Z)^n$ and holds in particular whenever $\hat\beta$ is a measurable function of $(X,Z)^n$ or is learned on data independent of $S$ \citep[][Corollary~3]{simnacher2026deep}. 
Next, we show that this assumption is sufficient to transfer the null hypothesis \eqref{equ:hypothesis_text} including $X$ and $Z$ to the null hypothesis $H_0^{\omega_X,\text{mean}}$ including only a representation of $X$.
\begin{theoremE}[Text null implies $X$-embedded mean null][end, no link to proof, text proof={Proof of Theorem \ref{thm:beta-mean}}]\label{thm:beta-mean}
Let Assumption~\ref{ass:beta-mean} hold. Then
$H_0\implies H_0^{\omega_X,\mathrm{mean}}$.
\end{theoremE}
\begin{proofE}
Write $\mu_i:=\mathbb E[Y_i\mid Z^n]$, which is well defined since
$\mathbb E\|Y_1\|<\infty$.

\emph{Step 1.} Under $H_0$ the rows of $S$ are i.i.d.\ with $X\indep Y\mid Z$, so
$X^n\indep Y^n\mid Z^n$ by Lemma~\ref{lem:lift}. Decomposition applied to the
measurable function $Y_i$ of $Y^n$ gives $X^n\indep Y_i\mid Z^n$, hence
\[
  \mathbb E\big[Y_i\mid X^n,Z^n\big]=\mu_i\quad\text{a.s. for all } i\le n .
\]

\emph{Step 2.} By Assumption~\ref{ass:beta-mean},
$\mathbb E[Y_i\mid X^n,Z^n,\hat\beta]=\mu_i$ a.s.

\emph{Step 3.} Since $X^{\omega_X,n}=\omega_X(X^n,\hat\beta)$ is
$\sigma(X^n,\hat\beta)$-measurable, we have
$\sigma(X^{\omega_X,n},Z^n,\hat\beta)\subseteq\sigma(X^n,Z^n,\hat\beta)$, so the tower
property and Step 2 give
\[
  \mathbb E\big[Y_i\mid X^{\omega_X,n},Z^n,\hat\beta\big]
  =\mathbb E\big[\mu_i\mid X^{\omega_X,n},Z^n,\hat\beta\big]=\mu_i ,
\]
the last equality because $\mu_i$ is $\sigma(Z^n)$-measurable and
$\sigma(Z^n)\subseteq\sigma(X^{\omega_X,n},Z^n,\hat\beta)$.

\emph{Step 4.} Since $\sigma(Z^n,\hat\beta)\subseteq\sigma(X^{\omega_X,n},Z^n,\hat\beta)$,
the tower property applied to Step 3 gives
$\mathbb E[Y_i\mid Z^n,\hat\beta]=\mathbb E[\mu_i\mid Z^n,\hat\beta]=\mu_i$.

Steps 3 and 4 give $\mathbb E[Y_i\mid X^{\omega_X,n},Z^n,\hat\beta]
=\mathbb E[Y_i\mid Z^n,\hat\beta]$ a.s.\ for all $i\le n$, which by
Lemma~\ref{lem:disint-mean} applied with $V=S^{\omega_X}$, $W=\hat\beta$ and
$(A,B,C)=(X^{\omega_X,n},Y^n,Z^n)$ is $H_0^{\omega_X,\mathrm{mean}}$.
\end{proofE}

In addition, we can weaken the sufficiency condition on the embedding of $Z$ if only the mean-level null has to be transferred. 
Write 
\[
  m_i:=\mathbb E\big[Y_i\mid Z^n,\hat\beta\big],\qquad i=1,\dots,n,
\]
for the regression of $Y_i$ on the source texts and the embedding parameter $\hat\beta$. When $\hat\beta$ is deterministic or learned on data independent of $S$, it is independent of $(Y_i,Z^n)$ and $m_i=\mathbb E[Y_i\mid Z^n]$ is the regression of $Y_i$ on the source texts alone; $\hat\beta$ is retained in general because an in-sample $\hat\beta$ is a function of $(X,Z)^n$ and can then carry information about $Y_i$ beyond $Z^n$. 
We first state a mean-level version of Assumption~\ref{ass:gamma_no_cond_dep}a), which restricts $\hat\gamma$ to introduce no conditional mean dependence and holds in particular whenever $\hat\gamma$ is learned independently of $S$ or is a measurable function of $(X^{\omega_X},Z)^n$. 
We then characterize when $H_0^{\omega_X,\mathrm{mean}}$ transfers to $H_0^{\omega,\mathrm{mean}}$, and show that it does so whenever $m_i$ is $\sigma(Z^{\omega_Z,n},\hat\theta)$-measurable, that is, whenever $Z$ carries no information about the conditional mean of $Y$ beyond $Z^{\omega_Z}$. 
This replaces the requirement that $Z^{\omega_Z}$ preserve the entire conditional law of $Y$ or $X^{\omega_X}$ given $Z$ by a requirement on a single regression function.

\begin{assumption}\label{ass:mean-gamma}
$\mathbb E[Y_i\mid X^{\omega_X,n},Z^n,\hat\theta]
=\mathbb E[Y_i\mid X^{\omega_X,n},Z^n,\hat\beta]$ a.s.\ for all $i\le n$.
\end{assumption}

Assumption~\ref{ass:gamma_no_cond_dep}a) implies Assumption~\ref{ass:mean-gamma}, which restricts $\hat\gamma$ in the same way as Assumption~\ref{ass:beta-mean} restricts $\hat\beta$. There is no analogue of Assumption~\ref{ass:gamma_no_cond_dep}b), because conditional mean independence is not symmetric in its two arguments.

\begin{theoremE}[$Z$-text mean null implies $Z$-embedded mean null][end, no link to proof, text proof={Proof of Theorem \ref{thm:mean-transfer}}]\label{thm:mean-transfer}
Let Assumption~\ref{ass:mean-gamma} hold, and let
\[
  \text{(ii$'$)}\qquad
  m_i=\mathbb E\big[m_i\mid Z^{\omega_Z,n},\hat\theta\big]
  \quad\text{a.s. for all } i\le n ,
\]
that is, let each $m_i$ be almost surely equal to a
$\sigma(Z^{\omega_Z,n},\hat\theta)$-measurable random variable. Then
$H_0^{\omega_X,\mathrm{mean}}\implies H_0^{\omega,\mathrm{mean}}$.
\end{theoremE}
\begin{proofE}
\emph{Step 1: restating the hypotheses.} By Lemma~\ref{lem:disint-mean} applied with
$V=S^{\omega_X}$, $W=\hat\beta$ and $(A,B,C)=(X^{\omega_X,n},Y^n,Z^n)$, the hypothesis
$H_0^{\omega_X,\mathrm{mean}}$ is equivalent to
\begin{equation}\label{eq:mean-transfer-null}
  \mathbb E\big[Y_i\mid X^{\omega_X,n},Z^n,\hat\beta\big]
  =\mathbb E\big[Y_i\mid Z^n,\hat\beta\big]=m_i
  \quad\text{a.s. for all } i\le n ,
\end{equation}
and by the same lemma with $V=S^{\omega}$, $W=\hat\theta$ and
$(A,B,C)=(X^{\omega_X,n},Y^n,Z^{\omega_Z,n})$, the hypothesis $H_0^{\omega,\mathrm{mean}}$
is equivalent to
\begin{equation}\label{eq:mean-transfer-emb}
  \mathbb E\big[Y_i\mid X^{\omega_X,n},Z^{\omega_Z,n},\hat\theta\big]
  =\mathbb E\big[Y_i\mid Z^{\omega_Z,n},\hat\theta\big]
  \quad\text{a.s. for all } i\le n .
\end{equation}
All conditional expectations are well defined, since the rows of $S$ are i.i.d.\ and
$\mathbb E\|Y_1\|<\infty$.

\emph{Step 2: adjoining $\hat\gamma$.} Assumption~\ref{ass:mean-gamma} and
\eqref{eq:mean-transfer-null} give
\[
  \mathbb E\big[Y_i\mid X^{\omega_X,n},Z^n,\hat\theta\big]
  =\mathbb E\big[Y_i\mid X^{\omega_X,n},Z^n,\hat\beta\big]=m_i .
\]
Since $\sigma(Z^n,\hat\theta)\subseteq\sigma(X^{\omega_X,n},Z^n,\hat\theta)$ and $m_i$ is
$\sigma(Z^n,\hat\beta)$-measurable, hence $\sigma(Z^n,\hat\theta)$-measurable, the tower
property yields $\mathbb E[Y_i\mid Z^n,\hat\theta]=\mathbb E[m_i\mid Z^n,\hat\theta]=m_i$.
Neither side of the text-level mean null is therefore changed by adjoining $\hat\gamma$
to the conditioning set.

\emph{Step 3: replacing $Z^n$ by $Z^{\omega_Z,n}$.} As $Z^{\omega_Z,n}$ is
$\sigma(Z^n,\hat\gamma)$-measurable, we have
$\sigma(X^{\omega_X,n},Z^{\omega_Z,n},\hat\theta)\subseteq
\sigma(X^{\omega_X,n},Z^n,\hat\theta)$ and
$\sigma(Z^{\omega_Z,n},\hat\theta)\subseteq\sigma(Z^n,\hat\theta)$. Applying the tower
property to the two identities of Step~2 across these inclusions gives
\begin{equation}\label{eq:mean-transfer-step3}
  \mathbb E\big[Y_i\mid X^{\omega_X,n},Z^{\omega_Z,n},\hat\theta\big]
  =\mathbb E\big[m_i\mid X^{\omega_X,n},Z^{\omega_Z,n},\hat\theta\big],
  \qquad
  \mathbb E\big[Y_i\mid Z^{\omega_Z,n},\hat\theta\big]
  =\mathbb E\big[m_i\mid Z^{\omega_Z,n},\hat\theta\big].
\end{equation}
Under (ii$'$) both right-hand sides equal $m_i$ a.s., so the two left-hand sides agree for
all $i$, which is \eqref{eq:mean-transfer-emb} and hence
$H_0^{\omega,\mathrm{mean}}$.
\end{proofE}

The proof of Theorem~\ref{thm:mean-transfer} shows that $\mathbb E[Y_i\mid Z^{\omega_Z,n},\hat\theta]=\mathbb E[m_i\mid Z^{\omega_Z,n},\hat\theta]$, so (ii$'$) holds if and only if $m_i=\mathbb E[Y_i\mid Z^{\omega_Z,n},\hat\theta]$ a.s.\ for all $i\le n$, that is, if and only if the regression of $Y_i$ on the embedded source texts coincides with its regression on the source texts. 
Mean sufficiency of $\omega_Z$ is thus a condition on the regression of $Y$ on $Z$ alone and can be assessed without reference to $X^{\omega_X}$.
Condition (ii$'$) is an exact measurability requirement and is not plausible for embedding maps learned from data, which approximate e.g. the regression of $Y$ on the source texts rather than reproducing it. 
What matters in practice is whether the residual $m_i-\mathbb E[m_i\mid Z^{\omega_Z,n},\hat\theta]$ is negligible relative to $n^{-1/2}$, since a fixed nonzero residual enters the test statistic as a location shift growing with $\sqrt n$. 
It is nevertheless approachable, because $m_i$ need not depend on all of $Z_i$: if $m_i=g(t(Z_i))$ for a map $t$ and a function $g$, then (ii$'$) holds for every $\omega_Z$ for which $t(Z_i)$ is $ \sigma(Z^{\omega_Z,n},\hat\theta)$-measurable. 
In the simulation study of Subsection~\ref{subsec:simulation}, $t$ is a linear score in the embedding used to generate $Y$, so every eCIT whose $Z^{\omega_Z}$ retains that embedding satisfies (ii$'$) exactly; in the application in Subsection~\ref{subsec:real_world_application}, $t$ is unknown and we approximate it by a learned representation of the speech, the masked mean of the last hidden layer of a model predicting $Y$ from $Z$. Appending further embedding maps enlarges $\sigma(Z^{\omega_Z,n})$ and can therefore only reduce the residual, which motivates the concatenated conditioning sets used in both. Whether the residual is negligible at a given sample size depends on the dataset and the embedding maps, and we propose to assess it with the simulation design of Subsection~\ref{subsec:simulation}.

The null class $\mathcal P_0^{\omega,\mathrm{mean}}$ constrains the conditional mean of $Y_i$ given the whole embedded sample, whereas the PCM
fits regressions of $Y_i$ only on $(X^{\omega_X}_i,Z^{\omega_Z}_i)$ and on $Z^{\omega_Z}_i$. 
The two agree whenever the rows of $S^\omega$ are i.i.d.\ conditionally on $\hat\theta$, which is e.g.\ the case when the embedding parameters are estimated independently of $S$, as in our application. 
Let $\mathcal P_0^{\omega,\mathrm{mean},\otimes}$ denote the $n$-fold products $M^{\otimes n}$ of probability measures $M$ on the embedded observation space $(\mathfrak X\times\mathcal Y\times\mathfrak Z)$ under which $\mathbb E_M\|Y\|<\infty$ and $\mathbb E_M[Y\mid X^{\omega_X},Z^{\omega_Z}]=\mathbb E_M[Y\mid Z^{\omega_Z}]$, which by Lemma~\ref{lem:lifting-mean} satisfies $\mathcal P_0^{\omega,\mathrm{mean},\otimes}\subseteq\mathcal P_0^{\omega,\mathrm{mean}}$. Then, we collect the null distributions satisfying the conditions used above in
\begin{align*}
    \mathcal P_0^{\mathrm{suff,mean}}:=\Big\{\mathbb P^{S}\in\mathcal P_0:\ \text{Assumptions~\ref{ass:beta-mean} and \ref{ass:mean-gamma} hold, and (ii$'$) of Theorem~\ref{thm:mean-transfer} holds}\Big\},
\end{align*}
which depends on the estimation procedure, $\omega_X$, $\omega_Z$ and $n$, suppressed in
the notation as for $\mathcal P_0^{\mathrm{suff}}$. 
Step~3 of the proof of Theorem~\ref{thm:ecit-valid} uses only that $\mathbb P^{S^\omega\mid\hat\theta}$ lies almost surely in the class over which $\psi$ is valid, so a $\psi$ valid for $\mathcal P_0^{\omega,\mathrm{mean}}$ yields an eCIT valid for $\mathcal P_0^{\mathrm{suff,mean}}$.
Then, we can derive the analog of Corollary~\ref{cor:indep} for the conditional mean independence hypothesis.

\begin{corollaryE}[Independently estimated embedding parameters, mean version][end, no link to proof, text proof={Proof of Corollary~\ref{cor:mean-iid}}]
\label{cor:mean-iid}
Let $\alpha\in(0,1)$ and let $\psi$ be valid for $\mathcal P_0^{\omega,\mathrm{mean},\otimes}$ at level $\alpha$. Then $\varphi_\omega$ is valid at level $\alpha$ for $\{\mathbb P^{S}\in\mathcal P_0^{\mathrm{suff,mean}}:\hat\theta\indep S\}$.
\end{corollaryE}

\begin{proofE}
Fix such a $\mathbb P^{S}$. Since $\mathbb P^{S}\in\mathcal P_0$ and
Assumption~\ref{ass:beta-mean} holds, Theorem~\ref{thm:beta-mean} gives
$H_0^{\omega_X,\mathrm{mean}}$; since Assumption~\ref{ass:mean-gamma} and (ii$'$) hold,
Theorem~\ref{thm:mean-transfer} gives $H_0^{\omega,\mathrm{mean}}$, that is,
$\mathbb P^{S^\omega\mid\hat\theta}\in\mathcal P_0^{\omega,\mathrm{mean}}$
$\mathbb P^{\hat\theta}$-a.s.

Since $\hat\theta\indep S$, the conditional law of $S$ given $\hat\theta$ is
$\mathbb P^{S}$ itself, and conditionally on $\hat\theta=(b,c)$ the rows of $S^\omega$ are
the images of the i.i.d.\ rows of $S$ under the fixed measurable map
$(x,y,z)\mapsto(\omega_X(x,b),y,\omega_Z(z,c))$. Hence
$\mathbb P^{S^\omega\mid\hat\theta}$ is an $n$-fold product $M^{\otimes n}$ of a common
marginal $\mathbb P^{\hat\theta}$-a.s., and Lemma~\ref{lem:lifting-mean} applied to
$M^{\otimes n}$ gives $M^{\otimes n}\in\mathcal P_0^{\omega,\mathrm{mean},\otimes}$ a.s.

Step~3 of the proof of Theorem~\ref{thm:ecit-valid} applies verbatim with
$\mathcal P_0^{\omega,\mathrm{mean},\otimes}$ in place of $\mathcal P_0^\omega$, since it
uses only that $S^\omega$ is $\sigma(S,\hat\theta)$-measurable, that
$U\indep(S,\hat\theta)$, and that $\psi$ is valid uniformly over a class containing
$\mathbb P^{S^\omega\mid\hat\theta}$ almost surely.
\end{proofE}

\begin{remark}[Categorical $Y$]\label{remark:equivalence_ci_mean_cat_Y}
For $\mathcal Y$ finite and $Y$ represented by its vector of category indicators,
$\mathbb E_Q[Y_i\mid X^{\omega_X,n},Z^{\omega_Z,n}]=\mathbb E_Q[Y_i\mid Z^{\omega_Z,n}]$ determines the
conditional law of $Y_i$ and is therefore equivalent to $X^{\omega_X,n}\indep Y_i\mid Z^{\omega_Z,n}$,
for each $i\le n$. For product measures these row-wise statements lift to the joint one by
Lemma~\ref{lem:lift}, so $\mathcal P_0^{\omega,\mathrm{mean},\otimes}=\mathcal P_0^{\omega,\otimes}$,
and (ii$'$) is the row-wise counterpart of condition (ii), the two coinciding for product measures for the same reason.
For general $Q$ the mean class is the larger one, since row-wise conditional independence need not
imply $X^{\omega_X,n}\indep Y^n\mid Z^{\omega_Z,n}$. In our application
$\hat\theta\indep S$, so Corollary~\ref{cor:mean-iid} applies with the product class and nothing is
lost by using a CMIT.
\end{remark}

Finally, we note that the PCM guarantees only asymptotic uniform validity, so the eCIT is asymptotically valid for mean sufficient embedding maps. 
In our application the embedding parameters are estimated independently of $S$, so Corollary~\ref{cor:mean-iid} is the relevant statement and the null class is $\mathcal P_0^{\omega,\mathrm{mean},\otimes}$, whose elements are the i.i.d.\ laws over which the PCM is asymptotically uniformly valid. 
The conditional argument in Step~3 of the proof of Theorem~\ref{thm:ecit-valid} transfers uniform asymptotic validity, since the bound holds simultaneously for all $Q\in\mathcal P_0^{\omega,\mathrm{mean},\otimes}$ and hence for the realized conditional law $\mathbb P^{S^\omega\mid\hat\theta}$; pointwise asymptotic validity would not suffice, as the conditioning value $\hat\theta$ varies with $n$. For $\hat\gamma$ estimated in sample the conditional laws need not be products, and validity over the larger class $\mathcal P_0^{\omega,\mathrm{mean}}$ would be required, which the PCM is not known to provide.

\section{Application}\label{sec:application}

We apply the eCITs to German Parliament speeches and their summaries. 
First, we provide a description of the data in Subsection \ref{subsec:data}. 
Then, we evaluate the performance of the eCITs on semi-synthetic data based on the speech-summary pairs in Subsection~\ref{subsec:simulation}. 
While our simulation study evaluates the T1E control and power of the eCITs on this specific dataset and for the LLM summaries, the simulation design can be used to evaluate if the eCITs can control T1E and have power against alternatives of interest for other datasets with $Z$, $X$ pairs before applying the eCITs. 
On other datasets and their underlying data generating processes, the embedding maps used, their collinearity within and between $Z$ and $X$, and the corresponding separation for categorical $Y$ can affect the performance of the eCITs and particularly the T1E control with respect to the sufficiency of $Z^{\omega_Z}$ and the convergence of the nuisance regressions required by the PCM or other CITs. 
Finally, we apply in Subsection~\ref{subsec:real_world_application} the eCITs to the full dataset, including speech-summary pairs together with the protected attributes of the speakers' gender and faction. 

\subsection{Data}\label{subsec:data}

For the source texts, we use the German Parliament speeches provided in
\citet{richter2020opendiscourse}.
We clean and merge speeches as described in Appendix \ref{app:summary_gen_details}.
The resulting corpus comprises $274{,}598$ merged speeches from electoral terms~$1$--$20$, dated 12~September~1949--20~May~2022.
Roles are dominated by Member of Parliament (MP; $185{,}009$) and Presidium of Parliament ($34{,}392$), plus government speakers (Secretary of State $28{,}547$; Minister $19{,}892$; Chancellor $1{,}559$), guests ($5{,}184$), and 15 speeches with unlabeled role.
Merged speech length has median~$382$ words (mean~$605$; IQR~$122$--$794$), and there are $4{,}030$ unique speaker identifiers.

For the analysis, we further restrict this corpus to the $274{,}579$ speeches for which a Gemma 4 summary was generated, with 19 dropped speeches because they did not produce a summary.
Among merged MP speeches, the main parliamentary groups are CDU/CSU ($59{,}041$), SPD ($54{,}752$), FDP ($27{,}677$), Gr\"une ($21{,}369$), DIE LINKE.\ ($10{,}006$), PDS ($4{,}338$), and AfD ($2{,}742$), plus smaller and historical groups (KPD, DP, GB/BHE, Zentrum, and others) and speakers without faction (Fraktionslos).
We drop unlabeled factions and keep only factions at least as frequent as AfD, yielding seven factions (CDU/CSU, SPD, FDP, Gr\"une, DIE LINKE., PDS, AfD; $n=179{,}925$). 
For the analysis including the speakers' faction, we randomly draw a held-out test set of size $n_{\mathrm{test}}=53{,}978$ for the eCITs while the remainder ($n_{\mathrm{train}}=125{,}947$) is used to train embedding maps.
For the second attribute we use the speaker's gender as recorded in the corpus, which takes two values $(L=2)$.
Here, we keep electoral terms~$10$--$20$ with non-missing speaker gender ($n=169{,}463$; $n_{\mathrm{train}}=118{,}624$, $n_{\mathrm{test}}=50{,}839$), of which around $71\%$ are by male and $29\%$ by female speakers.
For the semi-synthetic simulation study, we use the intersection of the faction and gender train/test splits ($n_{\mathrm{train}}=61{,}807$ to train embeddings, and $n_{\mathrm{test}}=11{,}357$ to apply the eCITs).

\subsection{Simulation}\label{subsec:simulation}

We examine the empirical performance of the eCIT on simulated data. To evaluate whether the eCIT controls the T1E and has power against relevant alternatives on the dataset of interest, we resample $Z,X$ pairs and generate $Y$ from a pre-specified embedding of $Z$ (under the null hypothesis) or from prespecified embeddings of $Z$ and $X$ (under the alternative hypothesis).
Since we apply the PCM, a CMIT, and the embedding maps are trained on a holdout set disjoint from the test sample, so that $\hat\theta\indep S$, the relevant statements are Theorem~\ref{thm:mean-transfer} and Corollary~\ref{cor:mean-iid}, and the condition on $\omega_Z$ is mean sufficiency, i.e.\ (ii$'$). For the binary and multinomial outcomes this coincides with sufficiency by Remark~\ref{remark:equivalence_ci_mean_cat_Y}, and we write sufficient for mean sufficient below.

\subsubsection{Design}

We compare the proposed eCITs across several combinations of fixed and learned embedding maps for $X$ and $Z$. 
Summaries are generated with Gemma 4 \citep{gemmateam2026gemma4} throughout, and additionally with Qwen 3.6 \citep{qwen3.6-27b} for the application in Subsection~\ref{subsec:real_world_application} (Appendix~\ref{app:summary_gen_details}).

\textbf{Data generating mechanisms (DGMs):} We consider 12 DGMs across three outcome types, two generating embedders, under the null and alternative hypothesis. For all, we vary the sample size over $n=1{,}000, 5{,}000, 10{,}000$. 
These DGMs extend the simulation studies of \citet{simnacher2026deep}, which evaluate CITs for low-dimensional $Z$ and images $X$, to high-dimensional texts $Z$ and $X$.

For the source texts $Z$, we resample repeatedly $n$ speeches from the test set (cf. Subsection~\ref{subsec:data}). 
We obtain the corresponding summaries $X$ using the Gemma 4 LLM.
This provides speech-summary pairs $(Z_i,X_i)$ as used in the real-world application. 
We embed each $Z_i$ and $X_i$ with the Jina embedder for text classification \citep{akram2026jinaembeddingsv5texttasktargetedembeddingdistillation} and the Qwen embedder for general tasks \citep{qwen3embedding}, truncated with their Matryoshka representations \citep{kusupati2022matryoshka} to $d = 32$ dimensions. Depending on the DGM, we generate $Y$ from one of the two embedders, whose maps for the speech and the summary we denote by $\widetilde\omega_Z$ and $\widetilde\omega_X$, i.e. $\widetilde\omega_Z, \widetilde\omega_X\in \{\text{Jina, Qwen} \}$, so that $z_i^{\widetilde\omega_Z}, x_i^{\widetilde\omega_X} \in \mathbb{R}^{32}$.

We draw two directions $B_Z, B_X \sim N(0_{32}, I_{32})$ once and compute the unnormalized scores
\[
  \tilde s_{Z,i} = \bigl(z_i^{\widetilde\omega_Z}\bigr)^\top B_Z,
  \qquad
  \tilde s_{X,i} = \bigl(x_i^{\widetilde\omega_X} - A^\top z_i^{\widetilde\omega_Z}\bigr)^\top B_X,
\]
where $A$ is the coefficient matrix of the regression of $x^{\widetilde\omega_X}$ on
$z^{\widetilde\omega_Z}$, fitted on the training split. Then, the speech and summary scores used in the DGMs are $s_{Z,i} = \tilde s_{Z,i} / \widehat{\mathrm{sd}}(\tilde s_Z)$ and $s_{X,i} = \tilde s_{X,i} / \widehat{\mathrm{sd}}(\tilde s_X)$, so that both have unit variance. The column standardization of the embeddings and the two standard deviations are likewise computed on the training split, so that all of these are held fixed across replications.
Thus $s_Z$ is a one-dimensional projection of the speech representation $z^{\widetilde\omega_Z}$, and $s_X$ a one-dimensional projection of the part of the summary representation $x^{\widetilde\omega_X}$ that the speech representation does not explain. 
Given $s_Z$ and $s_X$, we generate $Y$ under $c = 0$ (CI) and $c = 1$ (no CI) according to the outcome type:
\begin{align}
\text{continuous:}\quad & Y_i = s_{Z,i} + c\,s_{X,i} + \varepsilon_i, \quad
  \varepsilon_i \sim N(0, \sigma^2), \notag \\
\text{binary:}\quad & Y_i \sim \mathrm{Bernoulli}\bigl(\mathrm{expit}(\eta_i)\bigr), \quad
  \eta_i = \alpha_Z s_{Z,i} + c\,\alpha_X s_{X,i} + b, \label{eq:dgm} \\
\text{multinomial:}\quad & Y_i \sim \mathrm{Categorical}\bigl(\mathrm{softmax}(\eta_i)\bigr),
  \quad Y_i \in \{0, \dots, L-1\}, \notag
\end{align}
where in the multinomial case the reference class has $\eta_{i,0} = 0$ and the remaining $L-1$ logits are fixed linear combinations of $s_{Z,i}$ and $c\,s_{X,i}$, chosen so that the two scores load on the classes in different directions, with the softmax taken over all $L$ logits. 
We set $L = 5$. The coefficients $\alpha_Z, \alpha_X$, the class weights, the intercept $b$, and the noise variance $\sigma^2$ of the continuous outcome are given in Appendix~\ref{app:dgm_details}.
Under the null, $Y$ depends on the speech only through $s_{Z,i}$, and hence only through
$z^{\widetilde\omega_Z}$.

\textbf{Embeddings in eCITs:} First, we apply the eCITs with only the embedding maps used to generate $Y$ (oracle). 
Next, we apply the eCITs with additional embedding maps, including the BGE embedder \citep{chen2024m3}, the Llama Nemotron embedder \citep{moreira2024nv, nvidia_llama_nemotron_embed_1b_v2}, the Jina or Qwen embedder, depending on which was not used in the corresponding $Y$ generation, and learned embeddings. 
We refer to the eCITs combining the generating embedding with further embedding maps as diluted, and contrast them with those omitting the generating embedding.
We selected the four pre-trained embedding maps because they cover embedding models based on different architectures, they allow for the long context of the speeches, and they performed best in a recent meta study of embedding maps for text across several tasks \citep{gjorgjevikj2026robustness}. 
For the trained embedding maps, we predict $Y$ separately from the speech $Z$ and from the summary $X$ using a text prediction model with 134 million parameters  and classification or regression head \citep{wunderle2025new} on a large holdout set of $61{,}807$ speech-summary pairs together with the generated $Y$ in the corresponding DGM.

\textbf{Target and performance measures:}
Performance is measured in terms of the T1E and power. 
T1E and power are estimated from the rejection rates 
\begin{align*}
    \widehat{RR}=\frac{1}{n_{\text{sim}}}\sum_{l=1}^{n_{\text{sim}}}\mathds{1}\{p_l\leq \alpha\}
\end{align*}
under the null ($c=0$) and alternative ($c=1$) hypothesis, respectively, for a significance level of $\alpha=0.05$, over $l=1,\hdots,n_{\text{sim}}$ random data generations and using p-values $p_l$ for each DGM. We choose $n_{\text{sim}}=50$ for computational reasons, resulting at level $\alpha=0.05$ in a 
Monte Carlo (MC) standard error \citep[sec.~5.3]{morris2019using} of
\begin{align*}
    \text{MC SE}(\widehat{RR})=\sqrt{\frac{\widehat{RR}(1-\widehat{RR})}{n_{\text{sim}}}}\approx \sqrt{\frac{0.05^*0.95}{50}}\approx 0.031.
\end{align*}

\subsubsection{Results}

Figure~\ref{fig:type1_power_oracle_dilute_concat_jina_binary_multinomial} shows the rejection rates for binary and multinomial $Y$ generated from the Jina embeddings. 
The oracle eCIT, which uses the embedding maps $\widetilde\omega_Z, \widetilde\omega_X$ that generate $Y$, satisfies condition~(ii$'$) of Theorem~\ref{thm:mean-transfer} by construction, since under the null $m_i$ depends on $Z_i$ only through $s_{Z,i}$, which is a linear function of $z_i^{\mathrm{Jina}}$.
It therefore isolates the behavior of the PCM on this DGM, that is, whether the test controls the T1E and has power for the embedded hypothesis, and the results show that it holds the nominal level and rejects in every replication under the alternative already at $n = 1{,}000$. 
It thus serves as the reference for the remaining eCITs: deviations from it are attributable to the embedding maps and the PCM applied to the larger dimension of the corresponding combined embeddings.

\begin{figure}[t!]
    \centering
    \includegraphics[width=\linewidth]{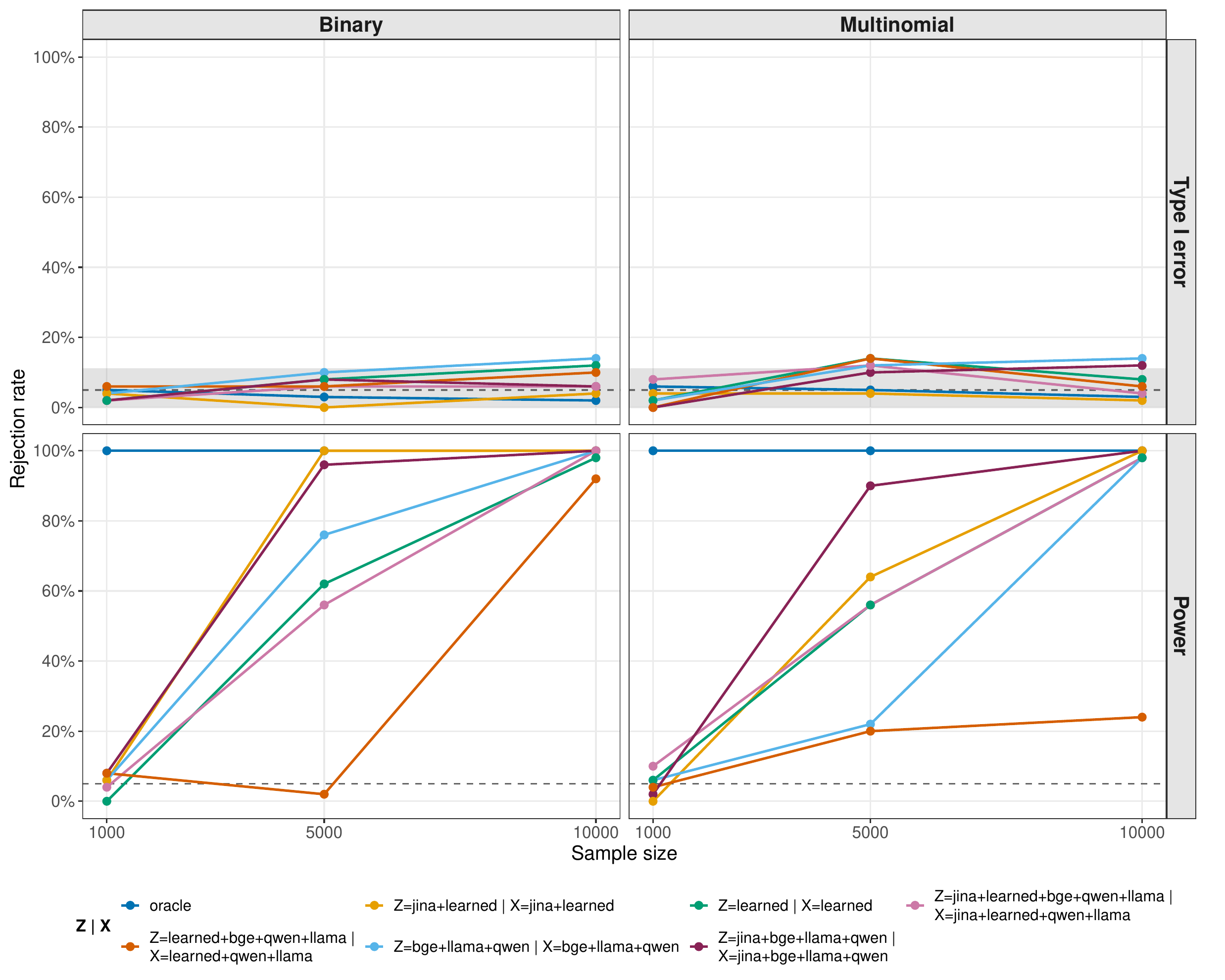}
    \caption{The rejection rates of the eCIT for binary and multinomial $Y$ generated under the null (first row) and alternative (second row) from the Jina embeddings $Z^{Jina}$ and $X^{Jina}$. The eCIT is applied with the Jina embeddings (oracle), the specified diluted embeddings, including additionally learned, BGE, Qwen, and Llama embeddings, as well as the same combinations excluding the Jina embedding. For the PCM, the nuisance regressions on the training dataset are $L_2$-penalized and those on the evaluation set unpenalized; for categorical $Y$ the regressions of $Y$ are multinomial logistic and those of the fitted probabilities and of $\hat f$ are least squares (Appendix~\ref{app:catpcm:test}).  The dashed line marks the nominal level $\alpha=0.05$, the shaded area marking two MC standard errors around it. The continuous outcome and the corresponding QQ-plots are shown in Figures~\ref{fig:type1_power_oracle_dilute_concat_jina_binary_multinomial_continuous} and~\ref{fig:qq_oracle_dilute_concat_jina_binary_multinomial_continuous}.}
    \label{fig:type1_power_oracle_dilute_concat_jina_binary_multinomial}
\end{figure}

Under the null and the Jina DGM, all eCITs stay within or close to two MC standard errors of $\alpha = 0.05$ for binary and multinomial $Y$, with a maximum rejection rate of around $13\%$ at $n = 10{,}000$.  
This holds also for the eCITs that omit the generating embedding $Z^{Jina}$ entirely, i.e.\ for conditioning sets for which sufficiency does not hold by construction, although for some of them, e.g.\ $Z^{\omega_Z} =\text{BGE}+\text{Llama}+\text{Qwen}$ and $Z^{\omega_Z} = \text{learned}$, the rejection rate increases mildly over the sample sizes considered.
The QQ-plots in Figure~\ref{fig:qq_oracle_dilute_concat_jina_binary_multinomial_continuous} show the same picture: for the categorical outcomes the p-values track the uniform diagonal closely at $n = 1{,}000$, with departures for individual eCITs at $n = 10{,}000$ that remain small.
In general, level control does not require $Z^{\omega_Z}$ to contain the generating embedding, only that it retain enough of $s_Z$ for the transfer of the null hypothesis on the level of the texts to the one on the level of the embedded texts (Theorem~\ref{thm:mean-transfer}) to hold approximately. Here, the part of $s_Z$ that is lost by omitting $Z^{\text{Jina}}$ is still negligible at $\sqrt{n}$ scale at the sample sizes considered.

Under the alternative and the Jina DGM, the combination of the embeddings of $X^{\omega_X}$ and $Z^{\omega_Z}$ affects the power of the corresponding eCITs. 
The ordering of the eCITs follows from how much of $s_X$ is retained in $X^{\omega_X}$.
At $n = 1{,}000$ no non-oracle arm exceeds around $10\%$. At $n = 5{,}000$ the arms that contain the generating embeddings $Z^{Jina}, X^{Jina}$ reach $55$--$97\%$ for binary and $56$--$90\%$ for multinomial $Y$, whereas the eCITs omitting the generating embeddings have lower or similar power. 
The eCIT with $Z^{\omega_Z} = \text{learned}+\text{BGE}+\text{Qwen}+\text{Llama}$ and $X^{\omega_X} = \text{learned}+\text{Qwen}+\text{Llama}$ achieves the lowest power for binary and multinomial $Y$. Here, the BGE embedding in $Z^{\omega_Z}$ is not included in $X^{\omega_X}$, lowering the detectable signal.
Adding embeddings to $X^{\omega_X}$  can thus recover power, depending on how well the available maps span the conditional dependence of the DGM.
Finally, eCITs for multinomial $Y$ are less powerful than for binary $Y$ at the largest sample size. 

The Qwen DGM separates including and omitting the generating embedding $Z^{Qwen}$, and the corresponding (in)sufficiency of the embedding of $Z$, more clearly (Figure~\ref{fig:type1_power_oracle_dilute_concat_qwen_binary_multinomial}).
The eCITs with sufficient embeddings containing $Z^{Qwen}$, i.e. the oracle and the three diluted arms, stay within or close to two MC standard errors of $\alpha=0.05$ at every sample size, with rejection rates of at most $12\%$ for binary and $11\%$ for multinomial $Y$ at $n = 10{,}000$, and their power reaches at least $90\%$ at $n = 10{,}000$. 
Diluting the generating embedding with further embedding maps therefore still leads to tests controlling T1E, and only raises the sample size at which full power is reached, as under the Jina DGM. 
This also indicates that the PCM performs well on this dataset and the DGMs, also in the higher-dimensional conditioning sets, as long as the embedding of $Z$ remains sufficient. 
The T1E inflation of eCITs that omit $Z^{Qwen}$ increases with the sample size. For example, for binary $Y$ and $Z^{\omega_Z} = \text{BGE}+\text{Jina}+ \text{Llama}$ the T1E rises from $6\%$ at $n = 1{,}000$ to $48\%$ at $n = 5{,}000$ and $86\%$ at $n = 10{,}000$. 
Other DGMs show the same pattern at somewhat smaller magnitudes. Enlarging the conditioning set within this group reduces the inflation, with $Z^{\omega_Z} = \text{learned}+ \text{BGE}+\text{Jina}+\text{Llama}$ rejecting in $38\%$ of the replications at $n = 10{,}000$.
For the Qwen DGM, concatenating further embedding maps thus recovers part, but not all, of the information in $Z$ about $Y$ that $Z^{Qwen}$ carries. For the corresponding eCITs, we cannot interpret their power, because they do not control the T1E.

For continuous $Y$ the results separate the sample size behavior of the arms by the sufficiency of $Z^{\omega_Z}$ (Figures~\ref{fig:type1_power_oracle_dilute_concat_jina_binary_multinomial_continuous} and~\ref{fig:type1_power_oracle_dilute_concat_qwen_binary_multinomial}). 
The oracle eCIT stays within two MC standard errors of $\alpha = 0.05$ at every
sample size considered, as for the categorical outcomes. The diluted arms, whose $Z^{\omega_Z}$ retains the generating embedding and is therefore sufficient by construction, are inflated at $n = 1{,}000$ but calibrated at $n = 10{,}000$, so that the nuisance regressions of the PCM converge more slowly here than in the categorical case. 
The arms omitting the generating embedding remain inflated at $n = 10{,}000$. Under the Jina DGM the rejection rates of some of these arms are not monotone in $n$, consistent with the convergence of the nuisance regressions and the effect of the omitted information acting simultaneously.

Information in $Z$ about $Y$ that $Z^{\omega_Z}$ does not represent acts as an omitted conditioning variable, which condition~(ii$'$) of Theorem~\ref{thm:mean-transfer} excludes. 
It enters as conditional dependence between the embeddings, so the embedded null is false although the null on the texts holds. In addition, the nuisance regressions of the PCM, which are linear in the supplied embeddings, need no longer be correctly specified once the generating embedding is omitted. Both may contribute to the observed inflation, and the present simulation does not separate their contributions.
Enlarging $Z^{\omega_Z}$ then reduces the T1E inflation (cf.\ the T1E of the eCITs with $Z^{\omega_Z}=\text{learned}+\text{BGE}+\text{Jina}+\text{Llama}$ against $Z^{\omega_Z}=\text{learned}$ and $Z^{\omega_Z}=\text{BGE}+\text{Jina}+\text{Llama}$ under the Qwen DGM in Figure~\ref{fig:type1_power_oracle_dilute_concat_qwen_binary_multinomial}), 
but does not guarantee that the remaining inflation is negligible at a given $n$. Both DGMs omit the generating embedding in some arms, but under the Jina DGM the induced conditional dependence stays negligible at $\sqrt{n}$ scale for the categorical outcomes over the sample sizes considered. 
Whether an insufficient $Z^{\omega_Z}$ matters at a given $n$ is thus a property of the DGM and the embeddings. The simulation design assesses this for the dataset at hand before the eCITs are applied. 
In our simulation, the arms retaining the generating embedding are calibrated throughout for the categorical outcomes, and for the continuous one the oracle is calibrated throughout while the diluted arms are calibrated from $n = 10{,}000$, and the inflation of the remaining arms decreases as $Z^{\omega_Z}$ is enlarged,  motivating the full concatenation in Subsection~\ref{subsec:real_world_application}.

\subsection{Real-world case study}\label{subsec:real_world_application}

We apply the eCITs to the test sets described in Subsection~\ref{subsec:data}, with $Y$ the speaker's faction ($L = 7$) or gender ($L = 2$), $Z$ the cleaned speeches and $X$ its summary, for the Gemma 4 and Qwen 3.6 summarizers (cf.\ Appendix~\ref{app:summary_gen_details} for additional details). 
Since $Y$ is categorical, testing conditional mean independence with the PCM is equivalent to testing CI (Remark~\ref{remark:equivalence_ci_mean_cat_Y}), so nothing is lost by using a CMIT in our application. 
The specifically learned embedding maps are obtained from classifiers fitted on the training set, which is disjoint from the test set (see also Appendix~\ref{app:embedder_details}), while the pre-trained embedding maps were fitted on large corpora that may overlap with the German Parliament speeches. 
Treating all embedding maps as independent of the test set and the observations as i.i.d., we have $\hat\theta = (\hat\beta,\hat\gamma) \indep S$, so the conditions on $\hat\beta$ (Assumption~\ref{ass:beta-mean}) and $\hat\gamma$ (Assumption~\ref{ass:mean-gamma}) in $\mathcal{P}_0^{\mathrm{suff, mean}}$ hold and Corollary~\ref{cor:mean-iid} applies with the i.i.d.\ null class, over which the PCM is asymptotically valid. 
Neither assumption is guaranteed here, and we return to both below, together with the sufficiency of $Z^{\omega_Z}$, which cannot be verified and is instead examined through the simulation study in Subsection~\ref{subsec:simulation} and by varying $Z^{\omega_Z}$ across the settings reported here.

\begin{table}[t!]
\centering
\caption{Test statistics $T$ and $p$-values of the eCITs on the test sets of Subsection~\ref{subsec:data}, for the speaker's gender and faction and the summaries of both summarizers. Gemma 4: gender $n=50{,}839$, faction $n=53{,}978$. Qwen 3.6: gender $n=50{,}796$, faction $n=53{,}855$; the sample sizes differ slightly because a few summary generations were empty. $^{*}$ indicates $p<0.05$.
$Z^{\omega_Z}$ and $X^{\omega_X}$ give the embeddings of the speech and of the summary; pred-hidden denotes the masked mean of the last hidden layer of the classifier predicting $Y$, and Qwen the Qwen3 embedder, as distinct from the Qwen 3.6 summarizer. 
For the PCM, the nuisance regressions on the training dataset are $L_2$-penalized and those on the evaluation dataset unpenalized, the regressions of $Y$ being multinomial logistic and those of the fitted probabilities and of $\hat f$ least squares, with the ridge constant $\hat c$ in the projection direction tuned on $D_1$ (Appendix~\ref{app:catpcm:test}).}
\label{tab:cit_tuned_primary_xy_hh}
\resizebox{\textwidth}{!}{%
\begin{tabular}{llllrrrr}
\toprule
 Outcome & $Z^{\omega_Z}$ & $X^{\omega_X}$ & Gemma 4 $T$ & Gemma 4 $p$ & Qwen 3.6 $T$ & Qwen 3.6 $p$ \\
\midrule
 gender & BGE+Jina+Qwen+Llama & BGE+Jina+Qwen+Llama & $24.687$ & $<10^{-15}$${}^{*}$ & $34.146$ & $<10^{-15}$${}^{*}$ \\
 faction & BGE+Jina+Qwen+Llama & BGE+Jina+Qwen+Llama & $34.705$ & $<10^{-15}$${}^{*}$ & $51.317$ & $<10^{-15}$${}^{*}$ \\
 gender & pred-hidden+BGE+Jina+Qwen+Llama & pred-hidden+BGE+Jina+Qwen+Llama & $4.194$ & $1.37\times 10^{-5}$${}^{*}$ & $5.505$ & $1.85\times 10^{-8}$${}^{*}$ \\
 faction & pred-hidden+BGE+Jina+Qwen+Llama & pred-hidden+BGE+Jina+Qwen+Llama & $11.859$ & $<10^{-15}$${}^{*}$ & $18.334$ & $<10^{-15}$${}^{*}$ \\
 gender & pred-hidden+BGE+Jina+Qwen+Llama & pred-hidden & $3.513$ & $2.21\times 10^{-4}$${}^{*}$ & $5.384$ & $3.64\times 10^{-8}$${}^{*}$ \\
 faction & pred-hidden+BGE+Jina+Qwen+Llama & pred-hidden & $10.664$ & $<10^{-15}$${}^{*}$ & $17.495$ & $<10^{-15}$${}^{*}$ \\
 gender & pred-hidden & pred-hidden & $4.638$ & $1.76\times 10^{-6}$${}^{*}$ & $5.980$ & $1.12\times 10^{-9}$${}^{*}$ \\
 faction & pred-hidden & pred-hidden & $12.683$ & $<10^{-15}$${}^{*}$ & $20.010$ & $<10^{-15}$${}^{*}$ \\
\bottomrule
\end{tabular}%
}
\end{table}

We report four settings (Table~\ref{tab:cit_tuned_primary_xy_hh}), which differ in how much of the speech and the summary the embeddings are able to represent.
First, we concatenate the four pre-trained embedders (BGE, Jina, Llama, Qwen) for both $Z$ and $X$, giving $q=p = 1120$, leading to eCITs with diverse pre-trained embeddings but without specifically trained ones.
Second, we additionally include the  learned embeddings for $X$ and $Z$, 
giving $q=p = 1888$. This setting includes all considered embeddings, and aims to minimize residual insufficiency of the embedding of $Z$. Specifically, the learned representation of the speech is trained to extract the information in $Z$ about $Y$, and the pre-trained embeddings are general representations of $Z$, which together aim to satisfy condition~(ii$'$), i.e. retaining the conditional mean information of $Z$ about $Y$.
Third, we omit the pre-trained embeddings of the second setting from $X^{\omega_X}$, giving $p=1888$ and $ q = 768$, evaluating a potential power decrease under a less representative embedding of $X$, and addressing potential violations of Assumption~\ref{ass:beta-mean}.
Fourth, we use only the learned embeddings for $X$ and $Z$, giving $q = p = 768$. This setting drops the embedders whose training corpora may overlap with the German Parliament speeches, addressing potential violations of Assumption~\ref{ass:beta-mean} and Assumption~\ref{ass:mean-gamma}.

All sixteen tests reject at $\alpha = 0.05$.
In the first setting the test statistics are large, at $T = 24.7$ (gender) and $T = 34.7$ (faction) for the Gemma summaries, and even larger for Qwen (Table~\ref{tab:cit_tuned_primary_xy_hh}).
Adding the learned representations to $Z^{\omega_Z}$ and $X^{\omega_X}$ in the second setting reduces the test statistics strongly. The two settings differ only by the learned representations, so the reduction shows that the learned representation of the speech carries information about $Y$ that the pre-trained embeddings do not, and hence that the first conditioning set is insufficient and at least part of the conditional association detected in this setting is attributable to that insufficiency.
Reducing $X^{\omega_X}$ to the learned representation in the third setting lowers the statistics slightly further, in line with the smaller signal retained by a less representative embedding of the summary.
Dropping the pre-trained embedders from $Z^{\omega_Z}$ in the fourth setting increases the statistics only slightly, which suggests that these embedders carry little about $Y$ beyond what the learned representations already supply.
The test decision does not change across the settings: the smallest statistic across all settings, outcomes and summarizers is $T = 3.5$, corresponding to $p = 2.2 \times 10^{-4}$, and the faction statistics remain above $10$ throughout.
We therefore find that the summaries of both LLMs contain information about the speaker's faction and gender beyond the speech they were generated from.
The statistics are larger for Qwen than for Gemma in all eight comparisons, but $T$ depends on the sufficiency of the embedding maps and the nuisance fits as well as on the strength of the conditional dependence, so we do not read this as a comparison of the two summarizers.

We discuss four explanations for the test decision other than conditional dependence.
First,  the embedding of $Z$ could be insufficient: exact sufficiency is not plausible for embedding maps learned from data, and by Theorem~\ref{thm:mean-transfer} any information in $Z$ about the conditional mean of $Y$ that $Z^{\omega_Z}$ drops can act as an omitted conditioning variable (cf.\ also the Qwen DGM results in Subsection~\ref{subsec:simulation}).
However, the eCITs with a large $Z^{\omega_Z}$ consisting of concatenated pre-trained and learned embeddings, and thus covering both general and $Y$-specific information in $Z$, still reject (second setting).
Furthermore, adding the four pre-trained embedders to a $Z^{\omega_Z}$ containing only learned embeddings, while holding $X^{\omega_X} = \text{pred-hidden}$ fixed, changes the test statistics only slightly and leaves every statistic well above the critical value.
The pre-trained embedders thus carry little information about $Y$ beyond what the learned representation of the speech already supplies, which suggests that the bias from residual insufficiency is not large here.
The corresponding simulation arms are calibrated for categorical $Y$ up to $n = 10{,}000$, but this is limited evidence at the sample sizes used here, since the inflation caused by an insufficient $Z^{\omega_Z}$ grows with $n$.
We can therefore not exclude residual insufficiency, although it would have to concern information that neither the pre-trained embedders nor a classifier trained for the task recovers from the speech.

Second, the nuisance regressions on which the validity of the PCM rests could be estimated inaccurately.
The dimension of the conditioning set, between $p = 768$ and $p = 1888$, is large relative to the $n_2 \approx 26{,}000$ observations in the evaluation half.
However, this is probably not the main driver of the observed statistics, since they do not increase with $p$, whereas the corresponding bias term does.
In particular, the largest statistics are obtained at $p = 1120$, while the settings with the highest and the lowest dimension, $p = 1888$ and $p = 768$, give smaller values of comparable size.
Furthermore, the simulation study uses conditioning sets of comparable dimensions at the smaller sample size $n = 10{,}000$, hence at a less favorable ratio of $p$ to $n_2$ than here, and the arms with sufficient $Z^{\omega_Z}$ remain calibrated there.

Third, the detected conditional dependence could run through the embedding maps rather than the summarizers.
The pre-trained embedders are fitted on external corpora, which may themselves contain German Parliament speeches, and the classifier supplying the learned representations is fine-tuned from a pre-trained encoder. 
The split is drawn at the level of speeches, so the training and test samples share speakers,  topics and electoral terms, which can induce dependence between train and test sample.
Thus, $\hat\beta$ and $\hat\gamma$ might not be independent of the test sample, and both Assumption~\ref{ass:beta-mean} and Assumption~\ref{ass:mean-gamma} in the definition of $\mathcal{P}_0^{\mathrm{suff,mean}}$ can fail.
The embeddings could then contain information about $Y$ from the embedder rather than the summarizer, so that a rejection need not reflect conditional dependence at the level of the texts.
For this to produce a rejection, the effect has to differ between the speech and its summary, since information that the embedder attaches to both is conditioned on through $Z^{\omega_Z}$.
Fine-tuning the learned maps on a holdout split removes the dependence on the test speeches themselves, but neither this nor the choice of split removes the dependence induced by the pre-training corpus of the encoder, so the fourth setting reduces but does not remove this dependence.
The agreement across the two summarizers does not speak against this explanation, as the same embedders are used for both.

Fourth, the observations could be dependent rather than i.i.d.
Speakers recur across speeches and speeches are dependent through shared topics and across time, so the rows of $S$ are not i.i.d.\ and the attributes $Y$ are constant within speaker.
Lemmas~\ref{lem:lift} and \ref{lem:lifting-mean} and the asymptotic validity of the PCM all rely on the i.i.d.\ assumption, and positive dependence between observations can make the PCM anti-conservative.
We can therefore not exclude that this dependence contributes to the observed magnitude of the statistics; extending the eCITs to dependent samples is left for future work.

The result is consistent with the mechanism of Example~\ref{ex:knn}.
Both summarizers were pre-trained on web-scale corpora that plausibly include German Parliament plenary protocols, so speeches in the test set, speeches by the same speakers, or other texts associated with them may have been seen during training.
Memorization of these texts then allows the summary to contain information about speaker attributes which is not in the speech.
Under all four combinations of $Z^{\omega_Z}$ and $X^{\omega_X}$ the eCITs reject for both outcomes and both summarizers, and the changes in the test statistics across the settings follow the expected direction, which weakens some of the alternative explanations for the test decisions.
The test thus certifies that the summaries carry additional information about the protected attribute on this task and dataset, with a T1E guarantee under the stated conditions.
It does not guarantee sufficiency of $Z^{\omega_Z}$, localize the information, quantify its magnitude on an interpretable scale, or attribute it to a specific mechanism, including the memorization discussed here.

\section{Conclusion}

We introduced embedded conditional independence tests (eCITs) to test for conditional dependence between a text and a scalar given a text, a setting in which existing CITs cannot be applied with T1E guarantees.
We first justified CI testing as a bias criterion for LLM outputs and characterized when the null and alternative can hold for a fixed LLM, illustrating that memorization of individual observations can induce conditional dependence. Existing bias metrics do not provide such guarantees: intrinsic metrics need not correlate with downstream behavior, extrinsic metrics are bound to benchmark datasets with specific characteristics, and LLM-based evaluators can themselves be biased. In contrast, our criterion requires neither a gold-standard output nor an annotation of what an unbiased output would be: the source text supplies the baseline against which additional information is measured.
We then characterized sufficient embedding maps for the conditioned text, under which a valid CIT for the embedded hypothesis is a valid CIT for the textual hypothesis, and gave conditions under which the two hypotheses are equivalent. We further showed that the requirement on the embedding maps can be weakened to mean sufficiency when the embedded test targets conditional mean independence, and that for categorical $Y$, as in our application, the two nulls coincide for the i.i.d.\ laws considered here.
We further proposed a simulation design to assess T1E control and power of the eCITs on a given dataset before applying them. On German Parliament speeches, the design exhibits both regimes it is meant to detect: the eCITs are calibrated for some combinations of embedding maps, while others show T1E inflation that need not vanish, and can grow, with the sample size. Since validity depends on the sufficiency of the embedding map under a given data generating process, the design serves to assess sufficiency for the dataset and embedding maps at hand.
Applying the eCITs to the speeches and the summaries of two LLMs, we reject the null for both the speaker's faction and gender. Four alternative explanations cannot be excluded, 
although 
the calibration of comparable arms in the simulation and the changes in test statistics with consistent test decisions both speak against any of them accounting for the rejections on its own.

The main open challenge is exact (mean) sufficiency, which is unlikely to hold for pre-trained or learned embedding maps, and even small amounts of residual information can affect the T1E at larger sample sizes, as the simulation suggests. A diagnostic quantifying how much relevant information about $Y$ or $X^{\omega_X}$  is left in the source text, rather than certifying that none is, would be of considerable interest.
Two further directions remain. We studied only embedding maps learned on a holdout set; (unsupervised) in-sample learning and the dependence it introduces in the test sample deserve theoretical and empirical study. 
And while we used the PCM, 
other CITs could be combined with sufficient embedding maps in the same way.

eCITs make conditional independence testing available for multimodal data and thereby give statistical guarantees for bias claims about LLM outputs on a specified task and dataset. While our focus has been on summarization and text, our eCITs apply to any setting in which the conditioning variable and the tested variable admit representations satisfying the (mean) sufficiency conditions.

\acks{Funded by the Deutsche Forschungsgemeinschaft (DFG, German Research Foundation) - project number 459422098.
}

\section*{Use of AI Tools}

The LLMs analysed in this work are described in Appendix~\ref{app:app_details}; their outputs are the object of study rather than part of the manuscript. 
In preparing the manuscript, the authors additionally used LLMs to assist with drafting and language editing. The authors take full responsibility for the contents of the paper, including all derivations, results and references.

\clearpage
\newpage

\appendix

\section{Proofs}\label{app:proofs}

The proofs are based on applying the lemmata from \citet{dawid1979conditional}. For readability, we refer to them as specified in Table \ref{tab:dawid_ci_statements}. 
\begin{table}[t!]
    \centering
    \begin{tabular}{c|c|c}
    \citet{dawid1979conditional} & Statement & Name \\
    \hline
       Lemma 4.1 & $A\indep B\mid C\iff A\indep (B,C)\mid C$ & redundancy \\
       Lemma 4.2 (i) & $A\indep B\mid C\implies A\indep f(B)\mid C$ & decomposition\\
       Lemma 4.2 (ii) & $A\indep B\mid C \text{ and } D=f(B)\implies A\indep B\mid (C,D)$ & weak union \\
       Lemma 4.3 & $A\indep B\mid C \text{ and } D\indep B\mid (A,C)\implies (A,D)\indep B\mid C$ & contraction
    \end{tabular}
    \caption{ $f$ denotes any measurable function of $B$.}
    \label{tab:dawid_ci_statements}
\end{table}

The general form of Lemma \ref{lem:disintegration} is given in the following:
\begin{lemma}[Disintegration of conditional independence]
\label{lem:disintegration-general}
Let $(E_A,\mathcal{E}_A)$, $(E_B,\mathcal{E}_B)$, $(E_C,\mathcal{E}_C)$ be
standard Borel spaces and write
$(E,\mathcal{E}) = (E_A\times E_B\times E_C,\,
\mathcal{E}_A\otimes\mathcal{E}_B\otimes\mathcal{E}_C)$, with coordinate
projections $A,B,C$. Let
\[
  \mathcal{P}_0 := \bigl\{\,\mu \text{ probability measure on }(E,\mathcal{E})
  \;:\; A \indep B \mid C \text{ under } \mu \,\bigr\}.
\]
Let $V$ be an $E$-valued and $W$ a $(\mathcal{W},\mathcal{G})$-valued random
variable on $(\Omega,\mathcal{F},\mathbb{P})$, where $(\mathcal{W},\mathcal{G})$
is an arbitrary measurable space, and let $\mathbb{P}^{V\mid W}$ be a regular
conditional distribution of $V$ given $W$. Then
$\{w\in\mathcal{W}:\mathbb{P}^{V\mid W=w}\in\mathcal{P}_0\}\in\mathcal{G}$ and
\[
  A \indep B \mid (C,W)
  \quad\Longleftrightarrow\quad
  \mathbb{P}^{V\mid W}\in\mathcal{P}_0 \quad \mathbb{P}^{W}\text{-a.s.}
\]
\end{lemma}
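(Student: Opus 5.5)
The plan is to reduce both sides to statements about regular conditional distributions on the standard Borel space $E$. Fix a regular conditional distribution $\mathbb{P}^{V\mid W}$. Since $E$ is standard Borel, there is a Markov kernel $\kappa$ from $(E_C,\mathcal{E}_C)$ to $(E_A\times E_B,\mathcal{E}_A\otimes\mathcal{E}_B)$ disintegrating any $\mu$ along $C$. The device is a single jointly measurable conditional kernel: a regular conditional distribution of $(A,B)$ given $(C,W)$ under $\mathbb{P}$. We obtain it by disintegrating $\mathbb{P}^{V\mid W=w}$ along $C$, measurably in $w$. For this we use the existence of measurable disintegrations of kernels with standard Borel target, for instance via a countable generating $\pi$-system and martingale or derivative arguments as in Kallenberg's kernel disintegration theorem. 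We choose a kernel $\nu_{(c,w)}$ on $E_A\times E_B$, measurable in $(c,w)$, such that for every $w$, $\nu_{(\cdot,w)}$ is a version of the conditional law of $(A,B)$ given $C$ under $\mathbb{P}^{V\mid W=w}$. Composing gives $\nu_{(C,W)}$ as a regular conditional distribution of $(A,B)$ given $(C,W)$ under $\mathbb{P}$.

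Next I characterize $\mathcal{P}_0$ countably. Let $\{F_k\}$ and $\{G_l\}$ be countable $\pi$-systems generating $\mathcal{E}_A$ and $\mathcal{E}_B$. Then $\mu\in\mathcal{P}_0$ if and only if for all $k,l$,
\[
\nu^\mu_c(F_k\times G_l)=\nu^\mu_c(F_k\times E_B)\,\nu^\mu_c(E_A\times G_l)\quad \text{for } \mu^C\text{-a.e. } c .
\]
The reduction from all measurable rectangles to the generators uses Dynkin's $\pi$-$\lambda$ theorem in each coordinate, applied on a common null set, which is countable. Applying this with $\mu=\mathbb{P}^{V\mid W=w}$ and the kernel $\nu_{(\cdot,w)}$ shows
\[
\{w:\mathbb{P}^{V\mid W=w}\in\mathcal{P}_0\}=\bigcap_{k,l}\Big\{w:\int \big|\nu_{(c,w)}(F_k\times G_l)-\nu_{(c,w)}(F_k\times E_B)\nu_{(c,w)}(E_A\times G_l)\big|\,\mathbb{P}^{C\mid W=w}(\mathrm{d}c)=0\Big\}.
\]
Each integral is $\mathcal{G}$-measurable in $w$ by the standard measurability of integrals of jointly measurable functions against kernels. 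This gives the measurability claim.

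For the equivalence, note that $A\indep B\mid(C,W)$ under $\mathbb{P}$ holds if and only if the same product identity holds for $\nu_{(C,W)}$ on the generators, $\mathbb{P}$-a.s. Since the discrepancy is nonnegative, $\mathbb{P}$-a.s.\ vanishing is equivalent, by Fubini for the composition $\mathbb{P}^W\otimes\mathbb{P}^{C\mid W}$, to the vanishing of the displayed integrals for $\mathbb{P}^W$-a.e.\ $w$. Intersecting over countably many $(k,l)$ yields exactly $\mathbb{P}^{V\mid W}\in\mathcal{P}_0$ $\mathbb{P}^W$-a.s.

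The main obstacle is the first step. We need a disintegration that is simultaneously valid for every $w$, or at least for $\mathbb{P}^W$-a.e.\ $w$, and jointly measurable, while $\mathcal{W}$ is an arbitrary measurable space. I would handle it by invoking the kernel disintegration theorem, which needs standard Borel structure only on $E_C$ and $E_A\times E_B$. Uniqueness up to null sets then shows the membership criterion does not depend on the chosen versions.
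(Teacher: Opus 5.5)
Your proof is correct, but it runs in the opposite direction from the paper's.

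\textbf{Your route.} You first invoke the kernel disintegration theorem to obtain a jointly measurable $\nu_{(c,w)}$ that disintegrates $\mathbb{P}^{V\mid W=w}$ along $C$ for \emph{every} $w$. This uses Doob's measurable densities over the countably generated $E_C$, regularized on the standard Borel $E_A\times E_B$, with $\mathcal{W}$ arbitrary. You then verify that $\nu_{(C,W)}$ is a regular conditional distribution of $(A,B)$ given $(C,W)$ under $\mathbb{P}$.

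\textbf{The paper's route.} The paper never constructs such a kernel. It takes Doob--Dynkin representatives $\Phi_h(W,C)$ of $\mathbb{E}[h(V)\mid C,W]$ for countably many indicator products $h$. A two-stage conditioning argument on a countable generating algebra of $\mathcal{E}_C$ then shows that $\Phi_h(w,\cdot)$ is a version of $\mathbb{E}_{\mathbb{P}^{V\mid W=w}}[h\mid C]$ for $\mathbb{P}^W$-a.e.\ $w$. The factorization is transferred through $\mathbb{P}((W,V)\in N_{k,l})=\int\mathbb{P}^{V\mid W=w}((N_{k,l})_w)\,\mathbb{P}^W(\mathrm{d}w)$, which plays the role of your Fubini step.

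\textbf{What each buys.} The paper's argument is more elementary and self-contained: it needs only Doob--Dynkin and monotone class arguments, no disintegration theorem. Yours uses heavier machinery, but it delivers the per-$w$ conditional laws for every $w$ rather than almost every $w$.

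This matters for the measurability claim. In the paper, the identity $\{w:\mathbb{P}^{V\mid W=w}\in\mathcal{P}_0\}=\bigcap_{k,l}\{w:\mathbb{P}^{V\mid W=w}((N_{k,l})_w)=0\}$ is only justified off the $\mathbb{P}^W$-null set left by Step~1. On that null set, $\Phi_h(w,\cdot)$ need not be related to $\mathbb{P}^{V\mid W=w}$ at all. Strictly, the paper's argument therefore gives measurability only with respect to the $\mathbb{P}^W$-completion of $\mathcal{G}$. That suffices for all later uses, since only $\mathbb{P}^{\hat\theta}$-probabilities of this set are needed. Your intersection, by contrast, equals the set exactly and so lies in $\mathcal{G}$ itself.

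For the almost-sure equivalence, both routes are equally sound.
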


\begin{proof}
For a bounded measurable $h:E\to\mathbb{R}$ choose, by the Doob--Dynkin lemma, a
jointly measurable $\Phi_h:\mathcal{W}\times E_C\to\mathbb{R}$ with
$\mathbb{E}[h(V)\mid C,W]=\Phi_h(W,C)$ $\mathbb{P}$-a.s.; this requires no
assumption on $(\mathcal{W},\mathcal{G})$.

\emph{Step 1 (two-stage conditioning).} We show that for
$\mathbb{P}^W$-a.e.\ $w$, the map $\Phi_h(w,\cdot)$ is a version of
$\mathbb{E}_{\mathbb{P}^{V\mid W=w}}[h\mid C]$. Fix $D\in\mathcal{E}_C$ and set
\[
  \psi_D(w) := \int \bigl(\Phi_h(w,c)-h(v)\bigr)\mathbf{1}_D(c)\,
  \mathbb{P}^{V\mid W=w}(dv), \qquad v=(a,b,c),
\]
which is measurable in $w$ by joint measurability of $\Phi_h$ and of
$w\mapsto\mathbb{P}^{V\mid W=w}(F)$, $F\in\mathcal{E}$. By the defining property
of the regular conditional distribution,
$\psi_D(W)=\mathbb{E}\bigl[(\Phi_h(W,C)-h(V))\mathbf{1}_D(C)\mid W\bigr]$ a.s.
For every $G\in\mathcal{G}$ the variable $\mathbf{1}_D(C)\mathbf{1}_G(W)$ is
$\sigma(C,W)$-measurable, whence
$\mathbb{E}[(\Phi_h(W,C)-h(V))\mathbf{1}_D(C)\mathbf{1}_G(W)]=0$ and therefore
$\mathbb{E}[\psi_D(W)\mathbf{1}_G(W)]=0$. As $\psi_D(W)$ is
$\sigma(W)$-measurable, $\psi_D(W)=0$ a.s. Since $E_C$ is standard Borel,
$\mathcal{E}_C$ is countably generated; applying the above to a countable
generating algebra, discarding the union of the corresponding null sets and
extending by a monotone class argument for each fixed $w$ yields the claim.

\emph{Step 2 (reduction to a countable family).} Let $\{A_k\}_{k\in\mathbb{N}}$
and $\{B_l\}_{l\in\mathbb{N}}$ be countable generating algebras of
$\mathcal{E}_A$ and $\mathcal{E}_B$, and abbreviate
$f_k := \mathbf{1}_{A_k}\circ A$, $g_l := \mathbf{1}_{B_l}\circ B$. By a monotone
class argument, $A\indep B\mid(C,W)$ holds if and only if
\begin{equation}\label{eq:factorisation}
  \Phi_{f_kg_l} = \Phi_{f_k}\,\Phi_{g_l}
\end{equation}
holds at $(W,C)$ $\mathbb{P}$-a.s.\ for all $k,l$; and, by Step 1, for
$\mathbb{P}^W$-a.e.\ $w$ the measure $\mathbb{P}^{V\mid W=w}$ lies in
$\mathcal{P}_0$ if and only if \eqref{eq:factorisation} holds at $(w,C)$
$\mathbb{P}^{V\mid W=w}$-a.s.\ for all $k,l$.

\emph{Step 3 (transfer).} Let
\[
  N_{k,l} := \bigl\{(w,v)\in\mathcal{W}\times E \;:\;
  \Phi_{f_kg_l}(w,c) \neq \Phi_{f_k}(w,c)\Phi_{g_l}(w,c) \bigr\}
  \in \mathcal{G}\otimes\mathcal{E},
\]
with sections $(N_{k,l})_w$. The regular conditional distribution gives
\[
  \mathbb{P}\bigl((W,V)\in N_{k,l}\bigr)
  = \int \mathbb{P}^{V\mid W=w}\bigl((N_{k,l})_w\bigr)\,\mathbb{P}^W(dw),
\]
so $\mathbb{P}((W,V)\in N_{k,l})=0$ if and only if
$\mathbb{P}^{V\mid W=w}((N_{k,l})_w)=0$ for $\mathbb{P}^W$-a.e.\ $w$. Taking the
union over the countably many pairs $(k,l)$ and combining with Step 2 proves the
equivalence. Finally,
\[
  \bigl\{w:\mathbb{P}^{V\mid W=w}\in\mathcal{P}_0\bigr\}
  = \bigcap_{k,l}\bigl\{w: \mathbb{P}^{V\mid W=w}\bigl((N_{k,l})_w\bigr)=0\bigr\}
  \in\mathcal{G},
\]
since $w\mapsto\mathbb{P}^{V\mid W=w}((N_{k,l})_w)$ is measurable for each
$(k,l)$ by Fubini's theorem for kernels.
\end{proof}

\begin{lemma}[Disintegration of conditional mean independence]\label{lem:disint-mean}
Let $(E_A,\mathcal E_A),(E_B,\mathcal E_B),(E_C,\mathcal E_C)$ be standard Borel with
$E_B\subseteq\mathbb R^{d}$ Borel, and write
$(E,\mathcal E)=(E_A\times E_B\times E_C,\mathcal E_A\otimes\mathcal E_B\otimes\mathcal E_C)$
with coordinate projections $A,B,C$. Let
\[
  \mathcal P_0^{\mathrm{mean}}:=\Big\{\mu \text{ probability measure on } (E,\mathcal E):
  \int\|b\|\,\mu(dv)<\infty,\
  \mathbb E_\mu[B\mid A,C]=\mathbb E_\mu[B\mid C]\ \mu\text{-a.s.}\Big\}.
\]
Let $V$ be an $E$-valued and $W$ a $(\mathcal W,\mathcal G)$-valued random variable on
$(\Omega,\mathcal F,\mathbb P)$ with $\mathbb E\|B(V)\|<\infty$, where $(\mathcal W,\mathcal G)$
is an arbitrary measurable space, and let $\mathbb P^{V\mid W}$ be a regular conditional
distribution of $V$ given $W$. Then
$\{w\in\mathcal W:\mathbb P^{V\mid W=w}\in\mathcal P_0^{\mathrm{mean}}\}\in\mathcal G$ and
\[
  \mathbb E[B\mid A,C,W]=\mathbb E[B\mid C,W]\ \text{a.s.}
  \iff
  \mathbb P^{V\mid W}\in\mathcal P_0^{\mathrm{mean}}\ \ \mathbb P^{W}\text{-a.s.}
\]
\end{lemma}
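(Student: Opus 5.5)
I would follow the three-step structure of the proof of Lemma~\ref{lem:disintegration-general}, replacing the factorisation of conditional probabilities by the equality of two conditional expectations. For bounded measurable $h$ on $E$ and, more generally, for integrable $h$ with $\mathbb E|h(V)|<\infty$, the Doob--Dynkin lemma gives jointly measurable $\Phi_h:\mathcal W\times E_C\to\mathbb R$ and $\Psi_h:\mathcal W\times E_A\times E_C\to\mathbb R$ with $\mathbb E[h(V)\mid C,W]=\Phi_h(W,C)$ and $\mathbb E[h(V)\mid A,C,W]=\Psi_h(W,A,C)$ a.s. I would apply this coordinatewise to $h=B_j$, $j\le d$, which is integrable by assumption. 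This needs no structure on $(\mathcal W,\mathcal G)$.

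\emph{Step 1 (integrability and two-stage conditioning).} By the disintegration identity $\mathbb E\|B(V)\|=\int\!\int\|b\|\,\mathbb P^{V\mid W=w}(dv)\,\mathbb P^W(dw)<\infty$, the conditional law is integrable for $\mathbb P^W$-a.e.\ $w$, and the set of such $w$ is in $\mathcal G$ by measurability of kernels. For those $w$, I would show that $\Phi_{B_j}(w,\cdot)$ and $\Psi_{B_j}(w,\cdot,\cdot)$ are versions of $\mathbb E_{\mathbb P^{V\mid W=w}}[B_j\mid C]$ and $\mathbb E_{\mathbb P^{V\mid W=w}}[B_j\mid A,C]$. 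The argument is the one in Step~1 of Lemma~\ref{lem:disintegration-general}. One tests against $\mathbf 1_D(C)$ for $D$ in a countable generating algebra of $\mathcal E_C$, respectively $\mathbf 1_{D'}(A,C)$ for $D'$ in a countable generating algebra of $\mathcal E_A\otimes\mathcal E_C$, and multiplies by $\mathbf 1_G(W)$. This shows that $\psi_D(W)=0$ a.s., after which one discards countably many null sets and extends by monotone class.

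\emph{Main obstacle.} I expect integrability to be the main obstacle. The integrand $\Phi_h-h$ is no longer bounded, so the monotone class extension needs dominated convergence under $\mathbb P^{V\mid W=w}$. The integral $\int|\Phi_{B_j}(w,c)|\,\mathbb P^{V\mid W=w}(dv)$ must also be finite a.e. I would obtain this from $\mathbb E|\Phi_{B_j}(W,C)|\le\mathbb E|B_j|<\infty$ together with the disintegration of this expectation. If this step gives trouble, a fallback is to truncate, working with $B_j\mathbf 1\{\|B\|\le M\}$ and letting $M\to\infty$ along integers. This keeps the number of null sets countable.

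\emph{Transfer.} With Step~1 in hand, the remaining argument mirrors Step~3 of Lemma~\ref{lem:disintegration-general}. Define
\[
N_j=\{(w,v):\Psi_{B_j}(w,a,c)\neq\Phi_{B_j}(w,c)\}\in\mathcal G\otimes\mathcal E.
\]
Then the left-hand statement holds if and only if $\mathbb P((W,V)\in N_j)=0$ for all $j\le d$. Since
\[
\mathbb P((W,V)\in N_j)=\int\mathbb P^{V\mid W=w}((N_j)_w)\,\mathbb P^W(dw),
\]
this is equivalent to $\mathbb P^{V\mid W=w}((N_j)_w)=0$ for a.e.\ $w$. By Step~1, for a.e.\ $w$ this is exactly membership of $\mathbb P^{V\mid W=w}$ in $\mathcal P_0^{\mathrm{mean}}$. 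Measurability of $\{w:\mathbb P^{V\mid W=w}\in\mathcal P_0^{\mathrm{mean}}\}$ needs one more point, since the versions are only correct off a null set. That set is the intersection of the integrability set with $\bigcap_j\{w:\mathbb P^{V\mid W=w}((N_j)_w)=0\}$, and each of these is measurable by Fubini for kernels. To make the identification valid for every $w$, not just a.e., I would define $\Phi,\Psi$ directly from the kernel: take $\Phi(w,c)$ to be a jointly measurable version of the $\mathbb P^{V\mid W=w}$-conditional expectation, which exists because $E$ is standard Borel and the conditional expectation can be disintegrated measurably in $w$. With this choice, Step~1 holds for every integrable $w$.
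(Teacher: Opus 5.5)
Your proposal is correct and follows essentially the same route as the paper: coordinatewise reduction, Doob--Dynkin versions of $\mathbb E[B\mid A,C,W]$ and $\mathbb E[B\mid C,W]$, Step~1 of Lemma~\ref{lem:disintegration-general} carried over to integrable $B$ via Fubini for kernels and dominated convergence, and the transfer through the exceptional set $N$ and the disintegration identity. Your extra care about measurability tightens the paper's argument rather than departing from it: the paper's set identity for $\{w:\mathbb P^{V\mid W=w}\in\mathcal P_0^{\mathrm{mean}}\}$ holds only off a $\mathbb P^W$-null set, whereas kernel-based versions (e.g.\ martingale limits along refining finite partitions of $E_C$ and $E_A\times E_C$) make it hold for every integrable $w$, provided you also check the easy converse that these versions, evaluated at $(W,C)$ and $(W,A,C)$, are still versions of $\mathbb E[B\mid C,W]$ and $\mathbb E[B\mid A,C,W]$.
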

\begin{proof}
It suffices to treat $d=1$, applying the result to each coordinate of $B$ and
intersecting the countably many resulting null sets.

By the Doob--Dynkin lemma choose jointly measurable
$\Phi_1:\mathcal W\times E_A\times E_C\to\mathbb R$ and
$\Phi_2:\mathcal W\times E_C\to\mathbb R$ with
$\mathbb E[B\mid A,C,W]=\Phi_1(W,A,C)$ and $\mathbb E[B\mid C,W]=\Phi_2(W,C)$
$\mathbb P$-a.s.; this requires no assumption on $(\mathcal W,\mathcal G)$.

\emph{Step 1.} Step~1 of Lemma~\ref{lem:disintegration-general} applies verbatim to integrable
rather than bounded $h$: by Fubini's theorem for kernels,
$\mathbb E\|B(V)\|<\infty$ gives $\int\|b\|\,\mathbb P^{V\mid W=w}(dv)<\infty$ for
$\mathbb P^W$-a.e.\ $w$, and dominated convergence replaces boundedness in the monotone
class step. Applying it once with conditioning field $\sigma(A,C)$ and once with
$\sigma(C)$ yields that, for $\mathbb P^W$-a.e.\ $w$, the maps $\Phi_1(w,\cdot,\cdot)$ and
$\Phi_2(w,\cdot)$ are versions of $\mathbb E_{\mathbb P^{V\mid W=w}}[B\mid A,C]$ and
$\mathbb E_{\mathbb P^{V\mid W=w}}[B\mid C]$, respectively. In particular
$\mathbb P^{V\mid W=w}\in\mathcal P_0^{\mathrm{mean}}$ if and only if
$\Phi_1(w,A,C)=\Phi_2(w,C)$ holds $\mathbb P^{V\mid W=w}$-a.s.

\emph{Step 2.} Let
$N:=\{(w,v)\in\mathcal W\times E:\Phi_1(w,a,c)\neq\Phi_2(w,c)\}\in\mathcal G\otimes\mathcal E$,
with sections $N_w$. The regular conditional distribution gives
\[
  \mathbb P\big((W,V)\in N\big)=\int \mathbb P^{V\mid W=w}(N_w)\,\mathbb P^W(dw),
\]
so $\mathbb P((W,V)\in N)=0$ if and only if $\mathbb P^{V\mid W=w}(N_w)=0$ for
$\mathbb P^W$-a.e.\ $w$. Combined with Step~1 this is the asserted equivalence. Finally
$\{w:\mathbb P^{V\mid W=w}\in\mathcal P_0^{\mathrm{mean}}\}
=\{w:\mathbb P^{V\mid W=w}(N_w)=0\}\cap\{w:\int\|b\|\,\mathbb P^{V\mid W=w}(dv)<\infty\}
\in\mathcal G$,
since $w\mapsto \mathbb P^{V\mid W=w}(N_w)$ and
$w\mapsto\int\|b\|\,\mathbb P^{V\mid W=w}(dv)$ are measurable by Fubini's theorem for
kernels.
\end{proof}

\begin{lemma}[Lifting to the sample level]\label{lem:lift}
Let $(X_i,Y_i,Z_i)_{i=1}^n$ be i.i.d.\ copies of $(X,Y,Z)$ with values in the
standard Borel spaces $(\mathcal X,\mathcal F_{\mathcal X})$,
$(\mathcal Y,\mathcal F_{\mathcal Y})$, $(\mathcal Z,\mathcal F_{\mathcal Z})$.
Then
\[
  X\indep Y\mid Z
  \qquad\Longleftrightarrow\qquad
  X^n\indep Y^n\mid Z^n .
\]
\end{lemma}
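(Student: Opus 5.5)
We prove the two implications separately, working with the i.i.d.\ product structure and the standard Borel assumption, which guarantees regular conditional distributions of $X$ given $Z$ and of $Y$ given $Z$. Throughout, conditional independence is characterised via factorisation of conditional probabilities of rectangles. By a $\pi$--$\lambda$ argument, for $X^n\indep Y^n\mid Z^n$ it suffices to check
\[
\mathbb P\bigl(X^n\in A_1\times\cdots\times A_n,\;Y^n\in B_1\times\cdots\times B_n\mid Z^n\bigr)
=\mathbb P\bigl(X^n\in \textstyle\prod A_i\mid Z^n\bigr)\,\mathbb P\bigl(Y^n\in\prod B_i\mid Z^n\bigr)
\]
a.s.\ for measurable $A_i\in\mathcal F_{\mathcal X}$ and $B_i\in\mathcal F_{\mathcal Y}$. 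Product rectangles form a $\pi$-system generating the product $\sigma$-fields, and both sides are measures in each argument.

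For ``$\Rightarrow$'', the first step is to show that, because the rows are independent, the conditional law of $(X_i,Y_i)_{i}$ given $Z^n$ is the product of the conditional laws of $(X_i,Y_i)$ given $Z_i$. Concretely, $\mathbb P((X,Y)^n\in\prod_i C_i\mid Z^n)=\prod_i \kappa(Z_i,C_i)$, where $\kappa$ is a regular conditional distribution of $(X,Y)$ given $Z$. We verify this by checking the defining identity against sets $\{Z^n\in\prod_i D_i\}$ and extending with a $\pi$--$\lambda$ argument; independence across $i$ makes the expectation factor. Under $X\indep Y\mid Z$ we have $\kappa(z,A\times B)=\kappa_X(z,A)\kappa_Y(z,B)$ for $\mathbb P^Z$-a.e.\ $z$. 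Rearranging the product over $i$ then gives the displayed factorisation, and taking $B_i=\mathcal Y$ and $A_i=\mathcal X$, respectively, identifies the two factors.

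For ``$\Leftarrow$'', decomposition and weak union are not directly sufficient, since we must discard $(Z_2,\dots,Z_n)$ from the conditioning set. We take $A_1=A$, $B_1=B$ and all other $A_i,B_i$ equal to the whole space, which gives $X_1\indep Y_1\mid Z^n$. The product representation from the first step also gives $\mathbb P(X_1\in A,Y_1\in B\mid Z^n)=\kappa(Z_1,A\times B)$, and similarly for the marginals. Hence $\kappa(Z_1,A\times B)=\kappa_X(Z_1,A)\kappa_Y(Z_1,B)$ almost surely. Since $(X_1,Y_1,Z_1)\sim(X,Y,Z)$, this is exactly $X\indep Y\mid Z$. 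An alternative route is contraction: $(X_1,Y_1,Z_1)\indep (Z_2,\dots,Z_n)$ gives $(X_1,Y_1)\indep(Z_2,\dots,Z_n)\mid Z_1$, so conditioning on $Z^n$ for these quantities coincides with conditioning on $Z_1$.

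The main obstacle is the product-kernel representation of the conditional law given $Z^n$. It requires a careful monotone-class argument and a choice of versions that holds simultaneously for countably many generating rectangles, so that the a.s.\ statements can be combined. We would handle this with the countable generating algebras available in standard Borel spaces, as in the proof of Lemma~\ref{lem:disintegration-general}.
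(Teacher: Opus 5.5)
Your proposal is correct and follows essentially the paper's proof. The forward direction rests on the same row-wise factorisation of the conditional law given $Z^n$ into the per-row conditional laws given $Z_i$. The paper writes it as $\mathbb E[\prod_i h_i(X_i,Y_i)\mid Z^n]=\prod_i\mathbb E[h_i(X_i,Y_i)\mid Z_i]$ for bounded $h_i$, and you write it through the kernel $\kappa$ on rectangles; both then apply row-wise conditional independence, regroup, and extend by a monotone class argument. The reverse direction restricts to the first row and removes $Z_{2:n}$ using $(X_1,Y_1,Z_1)\indep Z_{2:n}$, which the paper does by redundancy and contraction exactly as in your alternative route.

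The simultaneous choice of versions that you flag as the main obstacle is not actually needed. Each identity is only required for fixed sets, and the extension to the product $\sigma$-fields is a Dynkin argument carried out one set at a time.
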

\begin{proof}[Proof of Lemma~\ref{lem:lift}]
``$\Rightarrow$''. Since the rows $(X_i,Y_i,Z_i)$ are independent, for bounded
measurable $h_i:\mathcal X\times\mathcal Y\to\mathbb R$ we have
\begin{equation}\label{eq:row-factorisation}
  \mathbb E\Big[\textstyle\prod_{i=1}^n h_i(X_i,Y_i)\,\Big|\,Z^n\Big]
  = \prod_{i=1}^n \mathbb E\big[h_i(X_i,Y_i)\mid Z_i\big]
  \qquad \text{a.s.}
\end{equation}
Indeed, the right-hand side is $\sigma(Z^n)$-measurable, and for
$\phi(Z^n)=\prod_i\phi_i(Z_i)$ with bounded measurable $\phi_i$ both sides have
the same expectation against $\phi(Z^n)$ by independence across $i$; such
$\phi$ generate a $\pi$-system generating $\sigma(Z^n)$, so
\eqref{eq:row-factorisation} follows by a functional monotone class argument.

Now let $f_i,g_i$ be bounded and measurable and apply
\eqref{eq:row-factorisation} with $h_i(x,y)=f_i(x)g_i(y)$. By $X\indep
Y\mid Z$ applied row-wise,
$\mathbb E[f_i(X_i)g_i(Y_i)\mid Z_i]=\mathbb E[f_i(X_i)\mid Z_i]\,
\mathbb E[g_i(Y_i)\mid Z_i]$, so regrouping the product and applying
\eqref{eq:row-factorisation} twice more, once with $g_i\equiv 1$ and once with
$f_i\equiv 1$, gives
\[
  \mathbb E\Big[\textstyle\prod_i f_i(X_i)\prod_i g_i(Y_i)\,\Big|\,Z^n\Big]
  = \mathbb E\Big[\textstyle\prod_i f_i(X_i)\,\Big|\,Z^n\Big]\,
    \mathbb E\Big[\textstyle\prod_i g_i(Y_i)\,\Big|\,Z^n\Big].
\]
Functions of the form $\prod_i f_i$ with $f_i=\mathbf 1_{A_i}$, $A_i\in\mathcal
F_{\mathcal X}$, form a $\pi$-system generating $\mathcal F_{\mathcal
X}^{\otimes n}$, and analogously for $Y^n$; a functional monotone class argument
in each argument separately extends the display to all bounded
$\mathcal F_{\mathcal X}^{\otimes n}$- and $\mathcal F_{\mathcal
Y}^{\otimes n}$-measurable functions, which is $X^n\indep Y^n\mid Z^n$.

``$\Leftarrow$''. Decomposition applied twice gives $X_1\indep Y_1\mid Z^n$.
Since $(X_1,Y_1,Z_1)\indep Z_{2:n}$, redundancy and contraction give
$X_1\indep Y_1\mid Z_1$, and $(X_1,Y_1,Z_1)\overset{d}{=}(X,Y,Z)$.
\end{proof}

\begin{lemma}[Lifting to the sample level, mean version]\label{lem:lifting-mean}
Let $(X_i,Y_i,Z_i)_{i=1}^n$ be i.i.d.\ copies of $(X,Y,Z)$ with values in the standard
Borel spaces $(\mathcal X,\mathcal F_{\mathcal X})$, $(\mathcal Y,\mathcal F_{\mathcal Y})$,
$(\mathcal Z,\mathcal F_{\mathcal Z})$, with $\mathcal Y\subseteq\mathbb R^K$ Borel and
$\mathbb E\|Y\|<\infty$. Then
\[
  \mathbb E[Y\mid X,Z]=\mathbb E[Y\mid Z]\ \text{a.s.}
  \iff
  \mathbb E[Y_i\mid X^n,Z^n]=\mathbb E[Y_i\mid Z^n]\ \text{a.s. for all } i\le n .
\]
\end{lemma}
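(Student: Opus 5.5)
The plan is to mirror the proof of Lemma~\ref{lem:lift}, replacing the factorisation of conditional laws by a factorisation of conditional expectations. The key auxiliary fact is a mean analogue of \eqref{eq:row-factorisation}. Write $\mu(x,z):=\mathbb E[Y\mid X=x,Z=z]$ and $\nu(z):=\mathbb E[Y\mid Z=z]$ for measurable versions, which exist by Doob--Dynkin and are integrable since $\mathbb E\|Y\|<\infty$. We then claim that for each $i\le n$,
\[
  \mathbb E[Y_i\mid X^n,Z^n]=\mu(X_i,Z_i)\quad\text{and}\quad \mathbb E[Y_i\mid Z^n]=\nu(Z_i)\qquad\text{a.s.}
\]
It suffices to treat each coordinate of $Y$ separately, so we may take $K=1$.

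To verify the first identity, note that $\mu(X_i,Z_i)$ is $\sigma(X^n,Z^n)$-measurable and integrable. We check $\mathbb E[Y_i\,\phi]=\mathbb E[\mu(X_i,Z_i)\,\phi]$ for test functions of product form
\[
  \phi=\prod_{j}f_j(X_j)g_j(Z_j)
\]
with bounded measurable $f_j,g_j$. Independence of the rows factorises both sides into a common product over $j\neq i$ times the $i$-th factor. For the $i$-th factor we have $\mathbb E[Y_if_i(X_i)g_i(Z_i)]=\mathbb E[\mu(X_i,Z_i)f_i(X_i)g_i(Z_i)]$, since $(X_i,Y_i,Z_i)\overset{d}{=}(X,Y,Z)$ and $\mu$ is a version of $\mathbb E[Y\mid X,Z]$. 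These products form a $\pi$-system generating $\sigma(X^n,Z^n)$, so a monotone class argument extends the equality to all bounded $\sigma(X^n,Z^n)$-measurable $\phi$; boundedness of $\phi$ and integrability of $Y_i$ and $\mu(X_i,Z_i)$ keep everything finite. The second identity follows by the same argument with the $f_j$ dropped.

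With the claim in hand, ``$\Rightarrow$'' is immediate. If $\mu(X,Z)=\nu(Z)$ a.s., then $\mu(X_i,Z_i)=\nu(Z_i)$ a.s. by equality in distribution, and the claim gives $\mathbb E[Y_i\mid X^n,Z^n]=\mathbb E[Y_i\mid Z^n]$. For ``$\Leftarrow$'', apply the claim with $i=1$: the hypothesis yields $\mu(X_1,Z_1)=\nu(Z_1)$ a.s., and since $(X_1,Z_1)\overset{d}{=}(X,Z)$ this gives $\mu(X,Z)=\nu(Z)$ a.s., i.e.\ $\mathbb E[Y\mid X,Z]=\mathbb E[Y\mid Z]$. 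Alternatively, one can condition down through the tower property to $\sigma(X_1,Z_1)$ and $\sigma(Z_1)$, using that $\sigma(X_1,Z_1)\subseteq\sigma(X^n,Z^n)$.

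The only real obstacle is the monotone class step in the claim. There we must pass from product test functions to all bounded $\sigma(X^n,Z^n)$-measurable ones while $Y_i$ is merely integrable, not bounded. The standard functional monotone class theorem handles this because the class of bounded $\phi$ satisfying the identity is a vector space closed under bounded monotone limits. Closure under such limits follows by dominated convergence with dominating function $\|Y_i\|\,\sup|\phi|$.
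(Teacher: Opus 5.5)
Your proof is correct and has the same structure as the paper's. Both first collapse $\mathbb E[Y_i\mid X^n,Z^n]$ and $\mathbb E[Y_i\mid Z^n]$ to the row-wise regressions $\mu(X_i,Z_i)$ and $\nu(Z_i)$, and then transfer between row $i$ and $(X,Y,Z)$ via $(X_i,Y_i,Z_i)\overset{d}{=}(X,Y,Z)$. The only difference is how the collapse is justified: the paper derives $Y_i\indep (X,Z)^n_{-i}\mid (X_i,Z_i)$ and $Y_i\indep Z^n_{-i}\mid Z_i$ from row independence by weak union and decomposition, whereas you check the defining property of the conditional expectation directly on product test functions with a monotone class argument, which is self-contained but otherwise equivalent.
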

\begin{proof}
Fix $i\le n$. Since $(X_i,Y_i,Z_i)$ is independent of $(X,Z)^n_{-i}$, weak union applied
with the measurable function $(X_i,Z_i)$ of $(X_i,Y_i,Z_i)$, followed by decomposition,
gives $Y_i\indep (X,Z)^n_{-i}\mid (X_i,Z_i)$ and $Y_i\indep Z^n_{-i}\mid Z_i$. Hence
\begin{equation}\label{eq:lifting-mean-collapse}
  \mathbb E[Y_i\mid X^n,Z^n]=\mathbb E[Y_i\mid X_i,Z_i],
  \qquad
  \mathbb E[Y_i\mid Z^n]=\mathbb E[Y_i\mid Z_i]
  \quad\text{a.s.}
\end{equation}
Both conditional expectations are well defined since $\mathbb E\|Y\|<\infty$. The two
right-hand sides agree a.s.\ for every $i$ if and only if
$\mathbb E[Y\mid X,Z]=\mathbb E[Y\mid Z]$ a.s., because
$(X_i,Y_i,Z_i)\stackrel{d}{=}(X,Y,Z)$; by \eqref{eq:lifting-mean-collapse} this is the
asserted equivalence.
\end{proof}

\printProofs

\section{Application}\label{app:app_details}

\subsection{Source Texts, LLMs, and Summary Generation}\label{app:summary_gen_details}

We clean the German Parliament speeches from \cite{richter2020opendiscourse} by removing parenthetical text such as applause or laughter, deleting common procedural phrases (e.g., “Das Wort hat”), stripping out text between dashes and extraneous whitespace, and normalizing quotation marks. 
Furthermore, we consider speeches as valid after filtering by minimum word count (50) and minimum sentences (2), and excluding speeches containing procedural indicators. 
For each speaker, we merge consecutive speeches if no valid speech from another speaker intervenes. We treat invalid interruptions from other speakers as [UNTERBRECHUNG/REDE:...] appended to the main speech. We randomly sample cleaned speeches and manually inspect for remaining procedural or invalid content.

Next, we summarize the cleaned speeches $Z$ with the LLM Gemma 4 12B \citep{gemmateam2026gemma4}, used throughout the simulation study (Subsection~\ref{subsec:simulation}) and the real-world application (Subsection~\ref{subsec:real_world_application}). We allow for a maximum of 256 generated tokens per summary (3–4 sentences) and use greedy decoding so that the summaries are reproducible.
For the real-world application (Subsection~\ref{subsec:real_world_application}), we also summarize the source texts $Z$ with the LLM Qwen 3.6 27B \citep{qwen3.6-27b}. We again allow for a maximum of 256 generated tokens per summary (3–4 sentences) and sample with temperature $0.7$, top-$p$ $0.8$, and top-$k$ $20$, the recommended decoding parameters for the model’s non-thinking (Instruct) mode.

The summaries $X$ are generated by prompting the LLM with \texttt{Du bist ein erfahrener politischer Analyst. Deine Aufgabe ist es, Reden aus dem deutschen Bundestag prägnant und objektiv in einem Fließtext zusammenzufassen. Fokussiere auf die Hauptargumente und politischen Positionen.
Die mit '$(\{\backslash \backslash d+\})$' gekennzeichneten Bemerkungen im Text beziehen sich auf mögliche Zwischenrufe oder kleinere Beiträge von anderen Abgeordneten.
Die Rede kann außerdem [REDE] und [UNTERBRECHUNG] Marker enthalten. [REDE] markiert Teile der Hauptrede. [UNTERBRECHUNG] markiert Zwischenkommentare oder Fragen. WICHTIG: Gib NUR die Zusammenfassung zurück, ohne einleitende Phrasen wie 'Hier ist eine Zusammenfassung' oder Ähnliches.}

\subsection{Embedding Maps}\label{app:embedder_details}

We embed speeches $Z$ and summaries $X$ with four multilingual models, using the same encoders in the simulation study (Subsection~\ref{subsec:simulation}) and the real-world application (Subsection~\ref{subsec:real_world_application}): BGE-M3 \citep{chen2024m3} at its native $1024$ dimensions; jina-embeddings-v5-text-small \citep{akram2026jinaembeddingsv5texttasktargetedembeddingdistillation} with the classification adapter, truncated to $32$ dimensions; Llama Nemotron Embed 1B v2 \citep{nvidia_llama_nemotron_embed_1b_v2}, truncated to $32$ dimensions; and Qwen3-Embedding-8B \citep{qwen3embedding}, likewise truncated to $32$ dimensions.
The three $32$-dimensional representations use Matryoshka truncation of the native embedding \citep{kusupati2022matryoshka}. Each embedding is $\ell_2$-normalized. We then column-standardize the coordinates (train-fitted mean and standard deviation) so that features enter the simulation DGP for $Y$ on a comparable scale, and because $\ell_2$-penalized PCM nuisances are sensitive to the scale of $X$ and $Z$.

As additional representations we use learned embedding maps: we fit the ModernGBERT 134M \citep{wunderle2025new}, a German encoder with $768$-dimensional hidden states, to predict $Y$ from the raw speech or summary, and take the masked mean of the last hidden layer. In the real-world application these models predict faction or gender on the training split, and the $768$-dimensional hidden states on the test split are supplied to the CIT, either alone or concatenated with the pretrained embeddings. In the simulation study the same architecture is trained to predict the simulated outcome, and the resulting hidden states are used in the simulation.

In the simulation study we generate $Y$ from a single $32$-dimensional embedding of the Gemma summaries (Jina or Qwen3) and supply the CIT with the following concatenations of $Z^{\omega_Z}$ and $X^{\omega_X}$. Oracle uses only that DGP pair: $Z^{\widetilde{\omega}_Z},X^{\widetilde{\omega}_X}\in\mathbb{R}^{32}$. 
The diluted eCITs prepend the generating embedder to three further concatenations: BGE with the two remaining $32$-dimensional models ($Z^{\omega_Z},X^{\omega_X}\in\mathbb{R}^{1120}$); the $768$-dimensional learned speech hidden state, BGE, and those two models as $Z^{\omega_Z}\in\mathbb{R}^{1888}$, and the learned summary hidden state with those two models as $X^{\omega_X}\in\mathbb{R}^{864}$; and the learned hidden states alone ($Z^{\omega_Z},X^{\omega_X}\in\mathbb{R}^{800}$).
The eCITs omitting the generating embedder use the same three stacks without the DGP embedder: BGE with the two remaining $32$-dimensional models  ($Z^{\omega_Z},X^{\omega_X}\in\mathbb{R}^{1088}$); the learned speech hidden state, BGE, and those two models as $Z^{\omega_Z}\in\mathbb{R}^{1856}$, and the learned summary hidden state with those two models as $X^{\omega_X}\in\mathbb{R}^{832}$; and the learned hidden states alone ($Z^{\omega_Z},X^{\omega_X}\in\mathbb{R}^{768}$).

\subsection{Outcome Generation in the Simulation Study}
\label{app:dgm_details}

We set $\alpha_Z = \alpha_X = 1$ throughout. For the continuous outcome, the noise variance is $\sigma^2 = (1-r^2)/r^2$ with $r^2 = 0.4$, so that $s_Z$ explains 40\% of the variance of $Y$ under the null; for the binary and multinomial outcomes the noise is determined by the link, so that the three outcome types are not on a common signal scale.

Write $\bar s_Z$ for the training-split mean of the speech score. The binary intercept is $b =-\alpha_Z \bar s_Z$. For the multinomial outcome, the reference class has logit $\eta_{i,0} = 0$ and the remaining logits are
\[
  \eta_{i,k} = \cos\!\Bigl(\tfrac{2\pi k}{L}\Bigr) \alpha_Z (s_{Z,i} - \bar s_Z)
  + c\,\sin\!\Bigl(\tfrac{2\pi k}{L}\Bigr) \alpha_X s_{X,i},
  \qquad k = 1, \dots, L-1,
\]
so that the speech and the summary score load on the classes in orthogonal directions. The logits are clipped to $[-10, 10]$ before the softmax, and the resulting probabilities are floored at $10^{-6}$ and renormalized.

\section{Detailed Simulation Results}\label{app:detailed_sim_results}

\subsection{Jina Embedding Data Generating Mechanism}

\begin{figure}[h!]
    \centering
    \includegraphics[width=\linewidth]{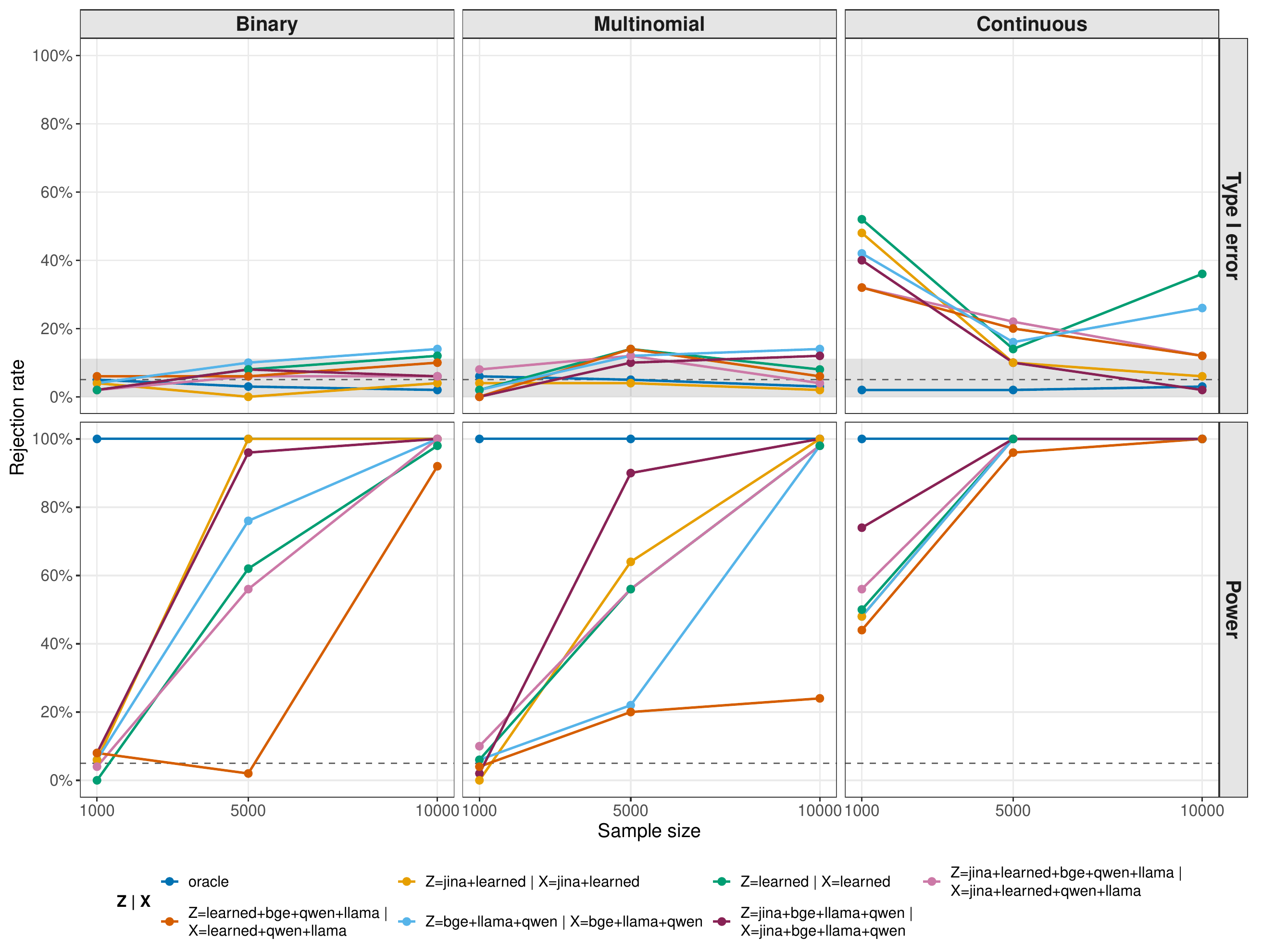}
    \caption{The rejection rates of the eCIT for binary, multinomial and continuous $Y$ generated under the null (first row) and alternative (second row) from the Jina embeddings $Z^{Jina}$ and $X^{Jina}$. The eCIT is applied with the Jina embeddings (oracle), the specified diluted embeddings, including additionally learned, BGE, Qwen, and Llama embeddings, as well as the same combinations excluding the Jina embedding. For the PCM, the nuisance regressions on the training dataset are $L_2$-penalized and those on the evaluation set unpenalized; for categorical $Y$ the regressions of $Y$ are multinomial logistic and those of the fitted probabilities and of $\hat f$ are least squares (Appendix~\ref{app:catpcm:test}).  The dashed line marks the nominal level $\alpha=0.05$, the shaded area marking two MC standard errors around it.}
    \label{fig:type1_power_oracle_dilute_concat_jina_binary_multinomial_continuous}
\end{figure}

\begin{figure}
    \centering
    \includegraphics[width=0.9\linewidth]{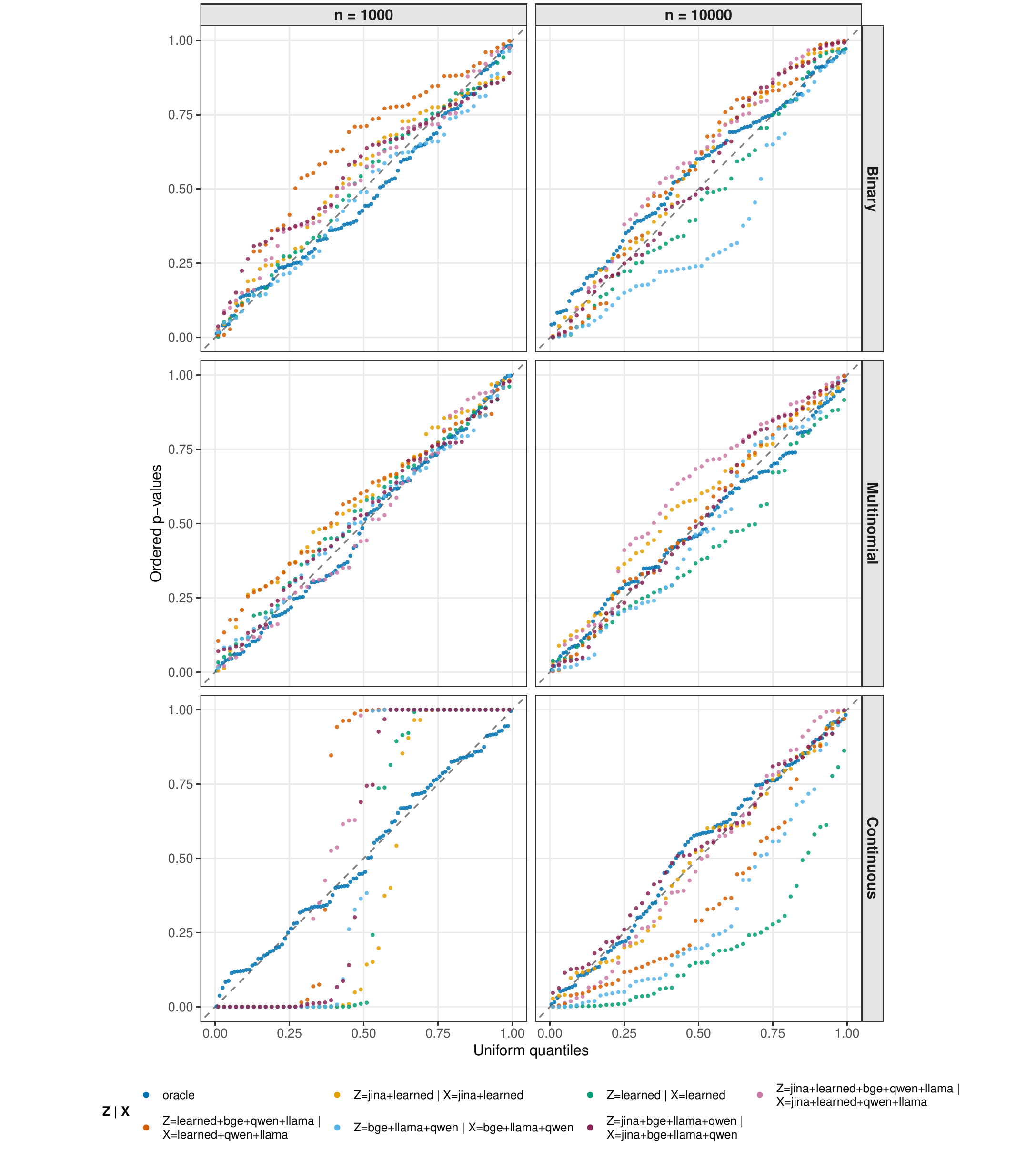}
    \caption{QQ-plots of the observed p-values against the theoretical quantiles of a uniform distribution on the interval $[0,1]$ for eCITs with the specified embeddings. 
    For the PCM, the nuisance regressions on the training dataset are $L_2$-penalized and those on the evaluation set unpenalized; for categorical $Y$ the regressions of $Y$ are multinomial logistic and those of the fitted probabilities and of $\hat f$ are least squares (Appendix~\ref{app:catpcm:test}).
    $Y$ is generated under the null hypothesis from the Jina embedding $Z^{Jina}$ at selected sample sizes (indicated in the panel titles). 
    The diagonal black lines $y=x$ serve as references for the theoretical quantiles of the uniform distribution on $[0,1]$. If the p-values are uniformly distributed, they should align along this line. 
    }
    \label{fig:qq_oracle_dilute_concat_jina_binary_multinomial_continuous}
\end{figure}
\clearpage

\subsection{Qwen Embedding Data Generating Mechanism}

\begin{figure}[h!]
    \centering
    \includegraphics[width=\linewidth]{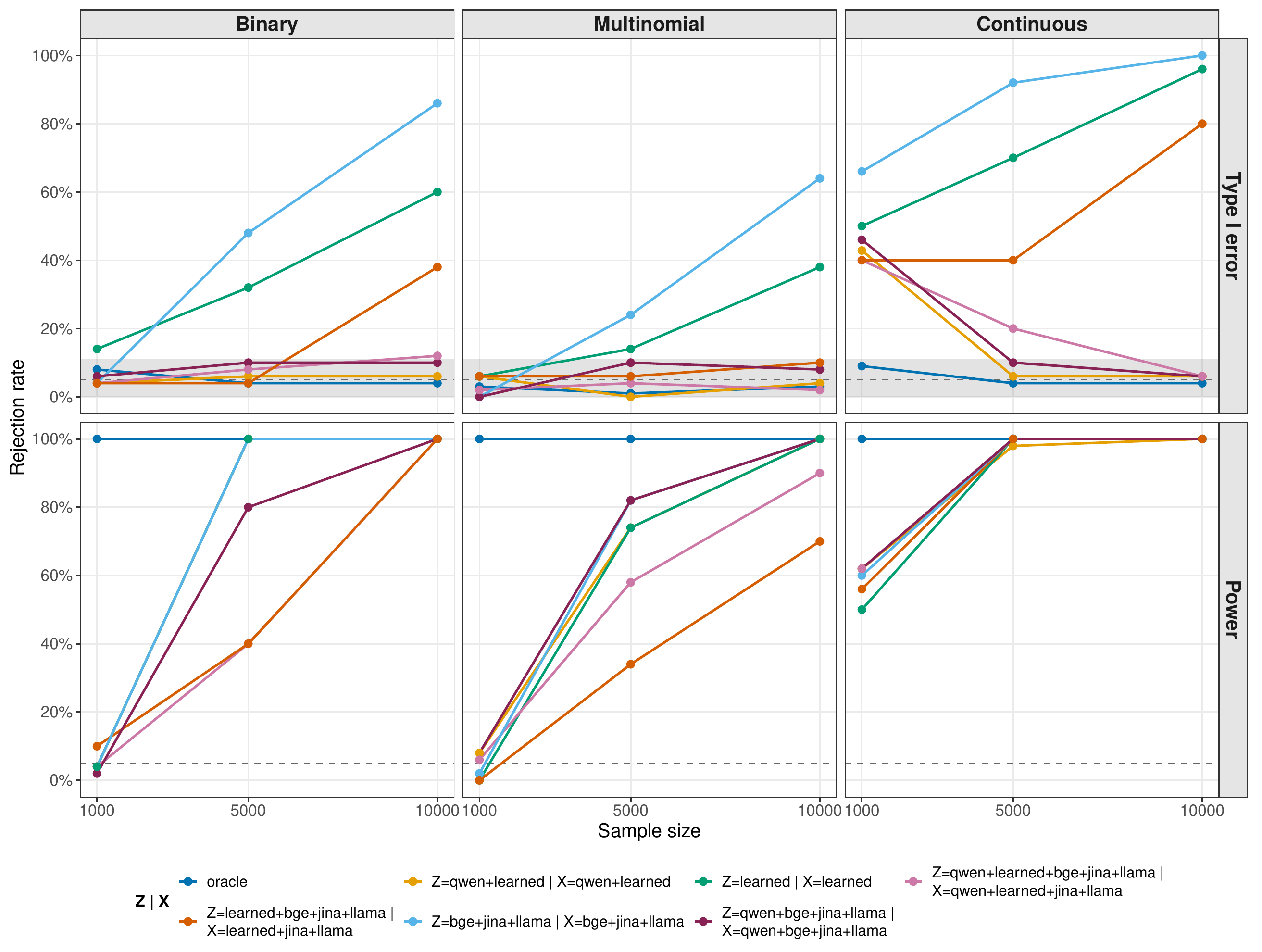}
    \caption{The rejection rates of the eCIT for binary, multinomial and continuous $Y$ generated under the null (first row) and alternative (second row) from the Qwen embeddings $Z^{Qwen}$ and $X^{Qwen}$. The eCIT is applied with the Qwen embeddings (oracle), the specified diluted embeddings, including additionally learned, BGE, Jina, and Llama embeddings, as well as the same combinations excluding the Qwen embedding. For the PCM, the nuisance regressions on the training dataset are $L_2$-penalized and those on the evaluation set unpenalized; for categorical $Y$ the regressions of $Y$ are multinomial logistic and those of the fitted probabilities and of $\hat f$ are least squares (Appendix~\ref{app:catpcm:test}).  The dashed line marks the nominal level $\alpha=0.05$, the shaded area marking two MC standard errors around it.}
    \label{fig:type1_power_oracle_dilute_concat_qwen_binary_multinomial}
\end{figure}

\begin{figure}
    \centering
    \includegraphics[width=0.9\linewidth]{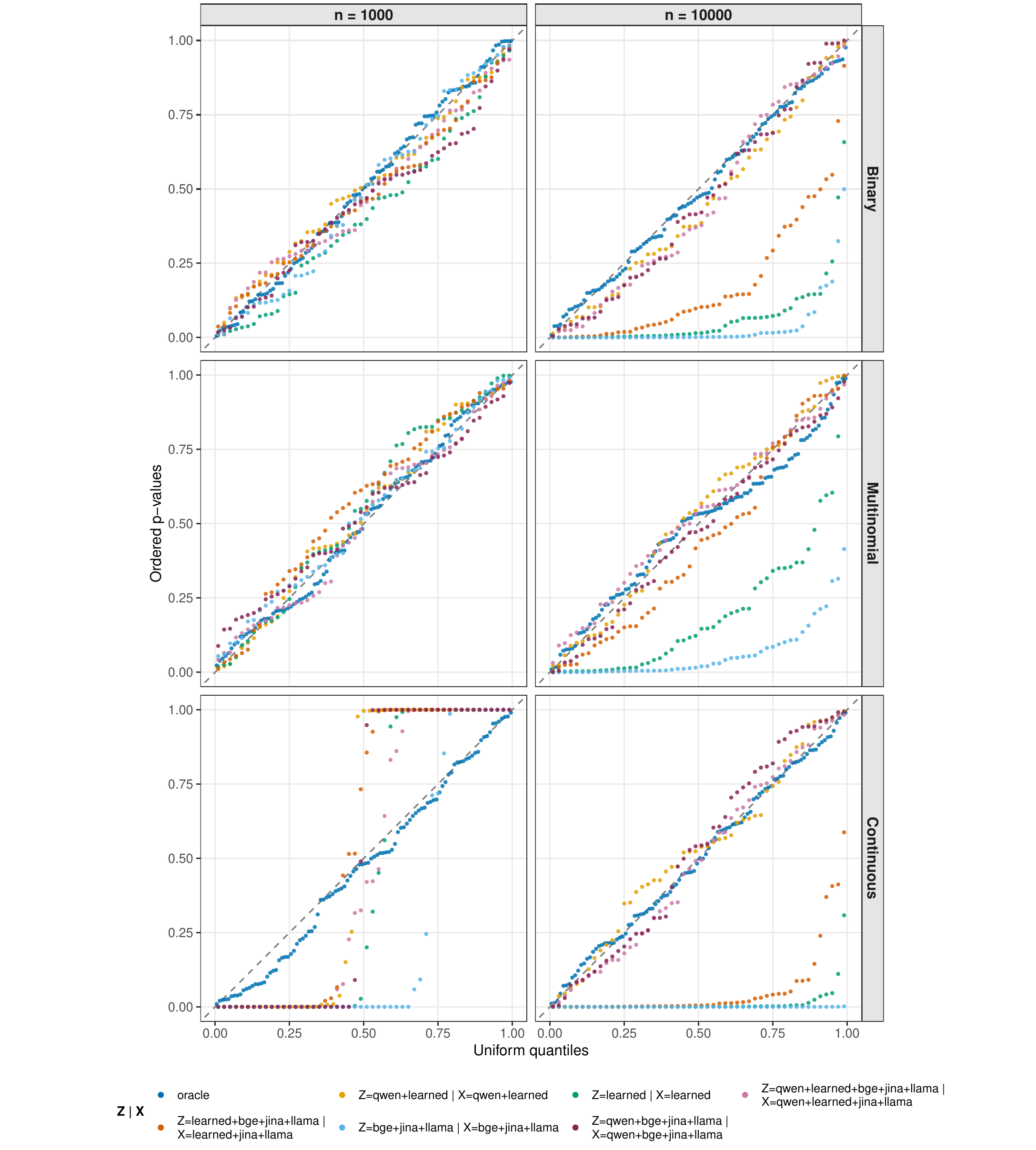}
    \caption{QQ-plots of the observed p-values against the theoretical quantiles of a uniform distribution on the interval $[0,1]$ for eCITs with the specified embeddings. For the PCM, the nuisance regressions on the training dataset are $L_2$-penalized and those on the evaluation set unpenalized; for categorical $Y$ the regressions of $Y$ are multinomial logistic and those of the fitted probabilities and of $\hat f$ are least squares (Appendix~\ref{app:catpcm:test}). $Y$ is generated under the null hypothesis from the Qwen embedding $Z^{Qwen}$ at selected sample sizes (indicated in the panel titles). The diagonal black lines $y=x$ serve as references for the theoretical quantiles of the uniform distribution on $[0,1]$. If the p-values are uniformly distributed, they should align along this line.
    }
    \label{fig:qq_oracle_dilute_concat_qwen_binary_multinomial_continuous}
\end{figure}

\clearpage

\providecommand{\E}{\mathbb{E}}
\providecommand{\Var}{\operatorname{Var}}
\providecommand{\diag}{\operatorname{diag}}
\providecommand{\lmin}{\lambda_{\min}}

\section{The PCM for Categorical \texorpdfstring{$Y$}{Y}}
\label{app:catpcm}

We suppress the embedding maps throughout this appendix and write $X \in \mathcal{X} \subseteq \mathbb{R}^q$, $Z \in \mathcal{Z} \subseteq \mathbb{R}^p$ for $X^{\omega_X}$, $Z^{\omega_Z}$, so that $p = \dim(\mathcal{Z})$. Let $Y$ take $L$ values, drop a baseline class and let $\tilde{Y} \in \{0,1\}^K$, $K = L-1$, be the reduced vector of class indicators of Remark~\ref{remark:equivalence_ci_mean_cat_Y}. 
Write
\[
  g(x,z) = \E[\tilde{Y} \mid X = x, Z = z], \qquad m_Y(z) = \E[\tilde{Y} \mid Z = z],
  \qquad h = g - m_Y,
\]
so that $g$ is the reduced conditional probability mass function of $Y$ and $m_Y$ is the population version of the regression $m_i$ of Subsection~\ref{subsec:cond_mean_indep}, and let $\Sigma(x,z) = \Var(\tilde{Y} \mid X = x, Z = z) = \diag\{g(x,z)\} - g(x,z)g(x,z)^{\top}$.

\subsection{The Categorical PCM}
\label{app:catpcm:test}

Following \citet{lundborg2024projected}, we split the sample into halves $D_1$, $D_2$ of sizes $n_1$, $n_2$. On $D_1$ we fit $\hat{g}$, a multinomial ($L_2$-penalized) logistic regression of $Y$ on $(X,Z)$, and write $\hat{g}(x,z) \in \mathbb{R}^K$ for its vector of fitted probabilities with the baseline class dropped, so that $\hat{g}$ estimates $g$. Then we fit $\tilde{m}$, a column-wise ($L_2$-penalized) least-squares regression of the fitted probabilities $\hat{g}$ on $Z$, and set $\hat{h} = \hat{g} - \tilde{m}$ and
\begin{equation}
  \label{eq:catpcm:direction}
  \hat{f}(x,z) = \bigl(\hat{\Sigma}(x,z) + \hat{c}\, I_K\bigr)^{-1} \hat{h}(x,z) \in \mathbb{R}^{K},
\end{equation}
where $\hat{\Sigma}$ is $\Sigma$ evaluated at $\hat{g}(x,z)$, the entries of the full fitted probability vector being floored at $\varepsilon = 10^{-6}$ and renormalized before the baseline class is dropped.

The ridge penalty $\hat{c}$ is selected on $D_1$. It is needed because class probabilities close to $0$ or $1$, as under quasi-complete separation, leave $\hat{\Sigma}$ nearly singular and the inverse in \eqref{eq:catpcm:direction} unstable. We therefore split $D_1$ into $D_{1a}$ and $D_{1b}$, fit $\hat{g}$ and $\tilde{m}$ on $D_{1a}$, and for each candidate value of $c$ on a logarithmically spaced grid in $[10^{-6}, 1]$ form the direction \eqref{eq:catpcm:direction} and the corresponding scores $\ell_i$ of \eqref{eq:catpcm:stat} on $D_{1b}$, with the nuisance regressions $\hat{m}_Y$ and $\hat{m}_f$ fitted on $D_{1b}$ as well. We select the candidate maximizing $\bar{\ell}/\hat{\sigma}_\ell$ on $D_{1b}$ and refit $\hat{g}$, $\tilde{m}$ and the resulting direction on the whole of $D_1$ at the selected $\hat{c}$. Since $\hat{c}$ is a function of $D_1$ alone, $\hat{f}$ remains $D_1$-measurable and the selection affects the power of the test but not the argument for its level.

On $D_2$ we fit $\hat{m}_Y$, an unpenalized multinomial logistic regression of $Y$ on $Z$, again reduced to its $K$ non-baseline fitted probabilities, and $\hat{m}_f$, an unpenalized least-squares regression of $\hat{f}$ on $Z$, and form $\hat{r}^Y_i = \tilde{Y}_i - \hat{m}_Y(Z_i)$ and $\hat{r}^f_i = \hat{f}(X_i,Z_i) - \hat{m}_f(Z_i)$. The test statistic is
\begin{equation}
  \label{eq:catpcm:stat}
  T = \sqrt{n_2}\,\frac{\bar{\ell}}{\hat{\sigma}_\ell}, \qquad
  \ell_i = \langle \hat{r}^Y_i, \hat{r}^f_i \rangle, \quad
  \bar{\ell} = \frac{1}{n_2}\sum_{i \in D_2} \ell_i, \quad
  \hat{\sigma}_\ell^2 = \frac{1}{n_2}\sum_{i \in D_2} (\ell_i - \bar{\ell})^2,
\end{equation}
and we reject at level $\alpha$ when $T > z_{1-\alpha}$. This is Algorithm~1 of \citet{lundborg2024projected}, as implemented for continuous $Y$ in the \textsf{R} package
\texttt{comets} \citep{kook2024algorithm}, with their scalar direction replaced by \eqref{eq:catpcm:direction} and their product $\hat{r}^Y_i \hat{r}^f_i$ by an inner product. The ridge constant $\hat{c}$ has no counterpart there, and is required here to invert $\hat{\Sigma}$ stably.

\subsection{Validity}
\label{app:catpcm:validity}

The summands $\ell_i$ in \eqref{eq:catpcm:stat} are scalar, since the $K$-vector residuals are projected onto one dimension by the inner product before averaging, so the numerator remains an average of real-valued summands that are conditionally independent and, under the null, conditionally mean-zero given $D_1$. The argument of \citet[Theorem~4]{lundborg2024projected} is therefore available with their product $\epsilon_i \xi_i$ replaced by $\langle \epsilon_i, \xi_i \rangle$, provided the conditions it places on the nuisance estimates are read in vector form. We record these conditions below; a detailed verification, including vector restatements of the supporting lemmas, is left to future work.

\begin{assumption}
  \label{ass:catpcm}
  Let $\epsilon = \tilde{Y} - g(X,Z)$, $m_f(z) = \E[\hat{f}(X,Z) \mid Z = z, D_1]$, $\sigma^2 = \Var(\langle \epsilon, \hat{f}(X,Z) - m_f(Z)\rangle \mid D_1)$, $E_1 = \E[\lVert \hat{m}_Y(Z) - m_Y(Z) \rVert_2^2 \mid D_1]$ and $E_2 = \sigma^{-2}\,\E[\lVert \hat{m}_f(Z) - m_f(Z) \rVert_2^2 \mid D_1]$. Assume (a) $E_1 E_2 = o_{\mathbb{P}}(n_2^{-1})$; (b) $\sigma^{-(2+\delta)}\,\E[\lVert \hat{f}(X,Z) - m_f(Z)\rVert_2^{2+\delta} \mid D_1] = O_{\mathbb{P}}(1)$ for some $\delta > 0$; and (c) $\lmin(\Sigma(X,Z)) \geq c_0$ almost surely for some $c_0 > 0$.
\end{assumption}

Two features of the categorical case are worth noting. First, by Remark~\ref{remark:equivalence_ci_mean_cat_Y} the conditional mean of $\tilde{Y}$ is the conditional distribution of $Y$, so conditional mean independence of $\tilde{Y}$ and $X \indep Y \mid Z$ are the same statement and the mean-level null class coincides with the CI null class for the i.i.d.\ laws considered in Corollary~\ref{cor:mean-iid}. 
The PCM therefore tests the CI hypothesis itself here, and nothing is lost by using a CMIT. 
Second, $\tilde{Y}$ is bounded, which makes the moment condition (b) a statement about $\lVert \hat{f} - m_f \rVert_2$ alone. Condition (c), by contrast, requires every class probability to be bounded away from $0$ and $1$, which is a substantive restriction when $L$ is large and some classes are rare.

We do not claim uniform asymptotic level over a nonparametric class for this variant. Null calibration is instead assessed by simulation in Subsection~\ref{subsec:simulation} and Appendix~\ref{app:detailed_sim_results}.

\subsection{The Precision-Weighted Direction}
\label{app:catpcm:optimal}

The weighting in \eqref{eq:catpcm:direction} is motivated by the following, which identifies the direction maximizing the population signal-to-noise ratio. At $K = 1$ it gives $f \propto h/v$ with $v = \Var(\tilde{Y} \mid X, Z)$.

\begin{proposition}
  \label{prop:catpcm}
  Suppose $\Sigma(X,Z)$ is almost surely positive definite with $0 < \E[h^{\top}\Sigma^{-1}h] < \infty$, and let $\mathcal{F}$ be the set of measurable $f : \mathcal{X} \times \mathcal{Z} \to \mathbb{R}^K$ with $0 < \E[f^{\top}\Sigma f] < \infty$, all functions being evaluated at $(X,Z)$. Then
  \begin{equation}
    \label{eq:catpcm:snr}
    \sup_{f \in \mathcal{F}} \frac{\bigl(\E\langle h, f\rangle\bigr)^2}{\E\bigl[f^{\top}\Sigma f\bigr]} = \E\bigl[h^{\top}\Sigma^{-1}h\bigr],
  \end{equation}
  and the supremum is attained at $f \in \mathcal{F}$ if and only if $f = \kappa\,\Sigma^{-1}h$ almost surely for some $\kappa > 0$.
\end{proposition}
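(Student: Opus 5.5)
The plan is a weighted Cauchy--Schwarz argument in the Hilbert space of $\mathbb{R}^K$-valued functions of $(X,Z)$ with inner product $\langle u,v\rangle_\Sigma:=\E[u^{\top}\Sigma v]$, all evaluated at $(X,Z)$. Since $\Sigma$ is almost surely positive definite, $\Sigma^{-1}$ exists almost surely. Set $f^*:=\Sigma^{-1}h$. Then $\E[(f^*)^{\top}\Sigma f^*]=\E[h^{\top}\Sigma^{-1}h]$, which lies in $(0,\infty)$ by assumption, so $f^*\in\mathcal{F}$. For any $f\in\mathcal{F}$ I rewrite the numerator as
\[
\E\langle h,f\rangle=\E\bigl[(\Sigma^{-1}h)^{\top}\Sigma f\bigr]=\langle f^*,f\rangle_\Sigma .
\]
This needs $\E|\langle h,f\rangle|<\infty$, which I obtain from the pointwise Cauchy--Schwarz bound $|h^{\top}f|=|(\Sigma^{-1/2}h)^{\top}(\Sigma^{1/2}f)|\le (h^{\top}\Sigma^{-1}h)^{1/2}(f^{\top}\Sigma f)^{1/2}$, followed by Cauchy--Schwarz in expectation.

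\emph{Upper bound.} The same two-stage Cauchy--Schwarz gives
\[
\bigl(\E\langle h,f\rangle\bigr)^2\le \E[h^{\top}\Sigma^{-1}h]\;\E[f^{\top}\Sigma f],
\]
so the ratio in \eqref{eq:catpcm:snr} is at most $\E[h^{\top}\Sigma^{-1}h]$ for every $f\in\mathcal{F}$.

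\emph{Attainment.} Plugging in $f=\kappa f^*$ with $\kappa>0$ gives numerator $\kappa^2(\E[h^{\top}\Sigma^{-1}h])^2$ and denominator $\kappa^2\E[h^{\top}\Sigma^{-1}h]$, so the bound is attained and the supremum equals $\E[h^{\top}\Sigma^{-1}h]$.

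\emph{Characterization of the maximizers.} This is the step needing care. Suppose $f\in\mathcal{F}$ attains the bound. Then the combined inequality $(\E\langle f^*,f\rangle_{\mathrm{pt}})^2\le\E\|\Sigma^{1/2}f^*\|^2\,\E\|\Sigma^{1/2}f\|^2$, applied to the $L^2$ random vectors $U=\Sigma^{1/2}f^*$ and $V=\Sigma^{1/2}f$, is an equality. I would handle this directly in $L^2(\mathbb{P};\mathbb{R}^K)$ with inner product $\E[U^{\top}V]$. Cauchy--Schwarz there holds with equality iff $V=\lambda U$ almost surely for some scalar $\lambda$; this uses $U\neq0$ in $L^2$, which holds because $\E[h^{\top}\Sigma^{-1}h]>0$. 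This single Hilbert-space equality case covers both the pointwise and the expectation stages, so there is no need to track equality in each stage separately.

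Multiplying $V=\lambda U$ by $\Sigma^{-1/2}$, which is invertible almost surely, gives $f=\lambda\Sigma^{-1}h$ almost surely. Then $\lambda\neq0$ because $\E[f^{\top}\Sigma f]>0$. The squared ratio does not see the sign of $\lambda$, so for the stated "$\kappa>0$" I would read attainment as applying to the signed objective $\E\langle h,f\rangle/(\E[f^{\top}\Sigma f])^{1/2}$, the one the one-sided PCM actually uses. With that reading, $\E\langle h,f\rangle=\lambda\E[h^{\top}\Sigma^{-1}h]>0$ forces $\lambda>0$.

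The main obstacle is this sign issue. Literally, \eqref{eq:catpcm:snr} is also attained at $\kappa<0$, so the "only if $\kappa>0$" part holds only under the convention $\E\langle h,f\rangle>0$ (or for the unsquared ratio). I would state this convention explicitly. Everything else is routine once measurability of $\Sigma^{-1/2}$ is noted, for instance via the continuous functional calculus on positive definite matrices.
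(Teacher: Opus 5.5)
Your proposal is correct and is essentially the paper's argument: Cauchy--Schwarz after the change of variables $u=\Sigma^{-1/2}h$, $w=\Sigma^{1/2}f$. The only variation is that you read off the equality case from a single Cauchy--Schwarz in $L^2(\mathbb{P};\mathbb{R}^K)$, rather than tracking equality separately in the pointwise and the $L^2$ stage as the paper does.

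Your sign remark is also right. $f=-\Sigma^{-1}h\in\mathcal{F}$ attains the squared ratio as well, so the ``only if $\kappa>0$'' part is false as literally stated. The paper's proof obtains $\kappa>0$ only through its reduction ``we may assume $\mathbb{E}\langle h,f\rangle>0$''. The characterization therefore holds with $\kappa>0$ for the signed ratio (equivalently, among $f$ with $\mathbb{E}\langle h,f\rangle>0$), and with $\kappa\neq 0$ for the squared ratio over all of $\mathcal{F}$.
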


\begin{proof}
  Let $\Sigma^{1/2}$ be the symmetric positive definite square root of $\Sigma$ and put $u = \Sigma^{-1/2}h$, $w = \Sigma^{1/2}f$, which are measurable since $A \mapsto A^{\pm 1/2}$ is continuous on the open cone of positive definite matrices, and satisfy $\E\lVert u \rVert_2^2 = \E[h^{\top}\Sigma^{-1}h]$ and $\E\lVert w \rVert_2^2 = \E[f^{\top}\Sigma f]$, both in $(0,\infty)$. Pointwise, $\langle h, f \rangle = \langle u, w \rangle \leq \lVert u \rVert_2 \lVert w \rVert_2$ almost surely, so taking expectations and applying Cauchy--Schwarz in $L^2(\mathbb{P})$,
  \begin{equation}
    \label{eq:catpcm:cs}
    \E\langle h, f \rangle \;\leq\; \E\bigl[\lVert u \rVert_2 \lVert w \rVert_2\bigr] \;\leq\; \bigl(\E[h^{\top}\Sigma^{-1}h]\bigr)^{1/2}\bigl(\E[f^{\top}\Sigma f]\bigr)^{1/2}.
  \end{equation}
  Since $f$ and $-f$ share the denominator in \eqref{eq:catpcm:snr}, we may assume $\E\langle h, f\rangle > 0$; squaring and dividing bounds the left-hand side of \eqref{eq:catpcm:snr} by $\E[h^{\top}\Sigma^{-1}h]$, with equality at $f = \Sigma^{-1}h \in \mathcal{F}$.

  If $f \in \mathcal{F}$ attains the supremum, both inequalities in \eqref{eq:catpcm:cs} are equalities. The first forces $\langle u, w \rangle = \lVert u \rVert_2 \lVert w \rVert_2$ almost surely, the second $\lVert w \rVert_2 = \kappa \lVert u \rVert_2$ almost surely for some $\kappa \in [0,\infty)$, with $\kappa > 0$ since $\E\lVert w \rVert_2^2 > 0$. On $\{u \neq 0\}$ the equality case in $\mathbb{R}^K$ gives $w = \kappa u$, and on $\{u = 0\}$ we have $w = 0$, so $\Sigma^{1/2} f = \kappa \Sigma^{-1/2} h$ almost surely. The converse is the computation above.
\end{proof}

\begin{remark}
  \label{rem:catpcm:ridge}
  The constant $\hat{c}$ in \eqref{eq:catpcm:direction} serves only to invert $\hat{\Sigma}$ stably when fitted probabilities are close to the boundary, and has no counterpart in   Algorithm~1 of \citet{lundborg2024projected}, whose Step~2(ii) calibrates a constant of the   same name against a variance identity. We apply no $\mathrm{sign}(\rho)$ correction. Since   $\hat{\Sigma} + \hat{c}I_K$ is positive definite, $\langle \hat{h}_i, \hat{f}_i \rangle \geq 0$    by construction.
\end{remark}

\subsection{Nuisance Estimation on the Evaluation Half}
\label{app:catpcm:rates}

The evaluation-half nuisances are fitted unpenalized and in sample, which yields an exact finite-sample orthogonality. Let $W \in \mathbb{R}^{n_2 \times (p+1)}$ have rows $W_i = (1, Z_i^{\top})$ and collect the residuals in $\hat{R}^Y, \hat{R}^f \in \mathbb{R}^{n_2 \times K}$. The score equations of the multinomial logistic fit give $W^{\top}\hat{R}^Y = 0$ and the normal equations of the least-squares fit give $W^{\top}\hat{R}^f = 0$. Since $\hat{m}_f(Z_i) = W_i \hat{B}$ for some $\hat{B} \in \mathbb{R}^{(p+1) \times K}$, the first identity yields
\begin{equation}
  \label{eq:catpcm:orth}
  \sum_{i \in D_2} \langle \hat{r}^Y_i, \hat{r}^f_i \rangle = \sum_{i \in D_2} \langle \hat{r}^Y_i, \hat{f}(X_i,Z_i) \rangle,
\end{equation}
so the regression of $\hat{f}$ on $Z$ does not contribute to the numerator. Under the null, writing $\hat{r}^Y_i = \epsilon_i - M_i$ with $M_i = \hat{m}_Y(Z_i) - m_Y(Z_i)$, the numerator splits as $\sum_{i \in D_2} \langle \epsilon_i, \hat{r}^f_i \rangle - \sum_{i \in D_2} \langle M_i, \hat{r}^f_i \rangle$, in which the first term is conditionally mean-zero given $(X,Z)^{n_2}$ and $D_1$, because $\hat{r}^f$ does not depend on $Y$ on $D_2$, leaving the second as the only bias term.

A Cauchy--Schwarz bound on that term is of order $\sqrt{n_2}\,(E_1E_2)^{1/2}$, which is negligible under Assumption~\ref{ass:catpcm}(a) and requires the two nuisance errors to be small relative to $n_2^{-1/2}$ jointly. Each evaluation-half nuisance is an unpenalized fit with $K(p+1)$ free parameters, and in Subsection~\ref{subsec:real_world_application} the conditioning sets consist of high-dimensional embeddings with $p$ between $768$ and $1888$ at $n_2$ of roughly $25{,}000$ to $27{,}000$, so $p$ is not small relative to $\sqrt{n_2}$ and we cannot appeal to this bound there. 
It is sufficient rather than necessary, and conservative in that it ignores the orthogonality \eqref{eq:catpcm:orth}. The simulations in Subsection~\ref{subsec:simulation} use conditioning sets of comparable dimension at smaller $n_2$, hence at a less favorable ratio, and the arms with sufficient $Z^{\omega_Z}$ remain calibrated there, which suggests that the bound overstates the requirement at these dimensions. 
Misspecification of $m_Y$ on $Z$ is the more serious concern, since it is not addressed by the orthogonality. The simulation does not separate it from mean insufficiency: in the arms retaining the generating embedder the outcome is a link applied to a linear score in that embedding, so the linear nuisances are close to correctly specified by construction, while in the arms omitting it the conditional mean given the supplied embedding is in general no longer linear, so misspecification and mean insufficiency act together and their contributions to the observed inflation cannot be attributed separately.

\vskip 0.2in
\bibliography{references}

\end{document}